\def\dint{{ \int_{\bR^{n}\times \bR^{n}} }}
\def\tint{{ \int_{0}^{1}\int_{\bR^{n}\times \bR^{n}} }}
\def\DD{{ \rmD_{\rvm} }}
\def\DD{{ \rmD_{\rvv} }}
\def\trd{{ \rd \rvv \rd \rvx \rd t}}
\def\DD{{ \frac{1}{2}g^2}}
\def\Kbound{{ \mathcal{K}_{\text{boundary}} }}
\def\Kbridge{{ \mathcal{K}_{\text{bridge}} }}
\def\Lmm{{ \mathcal{L}_{\text{MM}} }}
\newcommand{\xmark}{\ding{55}}%
\def\eqref#1{(\ref{#1})}
\def\1{\bm{1}}
\def\eps{{\epsilon}}
\def\rd{{\textnormal{d}}}
\def\rva{{\mathbf{a}}}
\def\rvg{{\mathbf{g}}}
\def\rvu{{\mathbf{i}}}
\def\rvm{{\mathbf{m}}}
\def\rvu{{\mathbf{u}}}
\def\rvv{{\mathbf{v}}}
\def\rvw{{\mathbf{w}}}
\def\rvx{{\mathbf{x}}}
\def\rvy{{\mathbf{y}}}
\def\rvz{{\mathbf{z}}}
\def\rmI{{\mathbf{I}}}
\def\rmZ{{\mathbf{Z}}}
\def\bR{{\mathbb{R}}}
\def\vf{{\bm{f}}}
\def\mI{{\bm{I}}}
\DeclareMathAlphabet{\mathsfit}{\encodingdefault}{\sfdefault}{m}{sl}
\SetMathAlphabet{\mathsfit}{bold}{\encodingdefault}{\sfdefault}{bx}{n}
\def\gK{{\mathcal{K}}}
\def\gL{{\mathcal{L}}}
\newcommand{\pdata}{p_{\rm{data}}}
\newcommand{\E}{\mathbb{E}}
\DeclareMathOperator*{\argmin}{arg\,min}
\DeclareMathOperator{\Tr}{Tr}
\newcommand{\norm}[1]{\lVert#1\rVert}
\def\dt{{ \mathrm{d} t }}
\def\calA{{\cal A}}
\def\calB{{\cal B}}
\def\calN{{\cal N}}
\def\calO{{\cal O}}
\newcommand{\fracpartial}[2]{\frac{\partial #1}{\partial  #2}}
\newcommand{\T}{\mathsf{T}}
\def\zhatt{{\widehat{\rvz}_t}}
\def\Zhatt{{\widehat{\rmZ}_t}}
\def\Zhat{{\widehat{\rmZ}}}
\def\zhat{{\widehat{\rvz}}}
\def\Psihat{{\widehat{\Psi}}}
\def\Psihatt{{\widehat{\Psi}_t}}
\newcommand*\circled[1]{\tikz[baseline=(char.base)]{
            \node[shape=circle,draw,inner sep=2pt] (char) {#1};}}
\newcommand\numberthis{\addtocounter{equation}{1}\tag{\theequation}}
\colorlet{color1}{green!50!black}
\colorlet{color2}{orange!95!black}
\colorlet{color3}{red!80!black}
\colorlet{color4}{red!65!black}
\colorlet{color5}{blue!75!green}
\colorlet{blueee}{blue!50!black}
\definecolor{amaranth}{rgb}{0.9, 0.17, 0.31}
\newcommand{\markgreen}[1]{{\ignorespaces\color{color1} #1}}
\newcommand{\markblue}[1]{{\ignorespaces\color{color5} #1}}
\let\oldsqrt\sqrt
\def\sqrt{\mathpalette\DHLhksqrt}
\def\DHLhksqrt#1#2{\setbox0=\hbox{$#1\oldsqrt{#2\,}$}\dimen0=\ht0
\advance\dimen0-0.2\ht0
\setbox2=\hbox{\vrule height\ht0 depth -\dimen0}%
{\box0\lower0.4pt\box2}}
\newcommand*\widefbox[1]{\fbox{\hspace{1em}#1\hspace{1em}}}
\theoremstyle{plain}
\newtheorem{theorem}{Theorem}[section]
\newtheorem{proposition}[theorem]{Proposition}
\newtheorem{lemma}[theorem]{Lemma}
\theoremstyle{definition}
\newtheorem{remark}[theorem]{Remark}
\theoremstyle{remark}
\title{Deep Multi-Marginal Momentum Schr\"odinger Bridge}
\definecolor{codegreen}{rgb}{0,0.6,0}
\definecolor{codegray}{rgb}{0.5,0.5,0.5}
\definecolor{codepurple}{rgb}{0.58,0,0.82}
\definecolor{backcolour}{rgb}{0.95,0.95,0.92}
\lstdefinestyle{mystyle}{
  backgroundcolor=\color{backcolour}, commentstyle=\color{codegreen},
  keywordstyle=\color{magenta},
  numberstyle=\tiny\color{codegray},
  stringstyle=\color{codepurple},
  basicstyle=\ttfamily\footnotesize,
  breakatwhitespace=false,         
  breaklines=true,                 
  captionpos=b,                    
  keepspaces=true,                 
  numbers=left,                    
  numbersep=5pt,                  
  showspaces=false,                
  showstringspaces=false,
  showtabs=false,                  
  tabsize=2
}
\author{%
  Tianrong Chen, Guan-horng Liu, Molei Tao, Evangelos A. Theodorou \\
  Georgia Institute of Technology, USA\\
  \texttt{\{tianrong.chen,ghliu, mtao, evangelos.theodorou\}@gatech.edu} \\
  % examples of more authors
  % \And
  % Department of Electrical and Computer Engineering\\
  % Affiliation \\
  % Address \\
  % \texttt{email} \\
  % \AND
  % Coauthor \\
  % Affiliation \\
  % Address \\
  % \texttt{email} \\
  % \And
  % Coauthor \\
  % Affiliation \\
  % Address \\
  % \texttt{email} \\
  % \And
  % Coauthor \\
  % Affiliation \\
  % Address \\
  % \texttt{email} \\
}
\begin{document}

\maketitle

\begin{abstract}

  It is a crucial challenge to reconstruct population dynamics using unlabeled samples from distributions at coarse time intervals. Recent approaches such as flow-based models or Schr\"odinger Bridge (SB) models have demonstrated appealing performance, yet the inferred sample trajectories either fail to account for the underlying stochasticity or are unnecessarily rigid. In this article, We extend the approach in \cite{chen2019multi} to operate in continuous space and propose \underline{D}eep \underline{M}omentum Multi-Marginal \underline{S}chr\"odinger \underline{B}ridge (DMSB), a novel computational framework that learns the smooth measure-valued spline for stochastic systems that satisfy position marginal constraints across time. By tailoring the celebrated Bregman Iteration and extending the Iteration Proportional Fitting to phase space, we manage to handle high-dimensional multi-marginal trajectory inference tasks efficiently. Our algorithm outperforms baselines significantly, as evidenced by experiments for synthetic datasets and a real-world single-cell RNA sequence dataset. Additionally, the proposed approach can reasonably reconstruct the evolution of velocity distribution, from position snapshots only, when there is a ground truth velocity that is nevertheless inaccessible.
\end{abstract}

\section{Introduction}
We consider the multi-marginal trajectory inference problem, which pertains to elucidating the dynamics and reactions of indiscernible individuals, given static snapshots of them taken at sporadic time points. Due to the inability of tracking each individual, one considers the evolution of the statistical distribution of the population instead. This problem received considerable attention, and associated applications appear in various scientific areas such as estimating cell dynamics \cite{schiebinger2019optimal,yang2018scalable}, predicting meteorological evolution \cite{fisher2009data}, and medical healthcare statistics tracking \cite{manton2008cohort}. \cite{hashimoto2016learning,bunne2022proximal} constructed an energy landscape that best aligned with empirical observations using neural network. \cite{tong2020trajectorynet,huguet2022manifold} learn regularized Neural ODE \cite{chen2018neural} to encode such potential landscape. Notably, in the aforementioned work, the trajectory of samples is represented in a deterministic way. In contrast, \cite{koshizuka2022neural,chizat2022trajectory} employ Schr\"odinger Bridge (SB) to determine the most likely evolution of samples between marginal distributions when individual sample trajectories are also affected by environmental stochasticity. Yet, these approaches scale poorly w.r.t. the state dimension due to specialized neural network architectures and computational frameworks.

SB can be viewed as a solution to the entropy-regularized optimal transport problem. SB seeks a nonlinear SDE that yields a straight path measure between two \emph{arbitary} distributions. The straightness is implied by achieving optimality of minimizing transportation costs (i.e. 2-Wasserstein distance ($W_2$)). We note %\tao{do we need `Dynamical' SB or just SB suffices?}\Tianrong{Fixed} 
SB is often related to Score-based Generated Model (SGM), both of which can be used for generative modeling by constructing certain Stochastic Differential Equation (SDE) that links data distribution and a tractable prior distribution (i.e. 2 marginals). SGM accomplishes the generative task by first diffusing data to prior through a pre-specified linear SDE, during which a neural network is also learned to approximate the score function. Then this score approximator is used to reverse this diffusion process, and consequently establish the generation. Critically-damped Langevin Diffusion (CLD) \cite{dockhorn2021score} extends the SGM SDE to the phase space by introducing an auxiliary velocity variable with a tractable Gaussian distribution at both the initial and terminal time. The resulting trajectory in the position space becomes smoother, as stochasticity is only injected into the velocity space, and the empirical performance and sample efficiency are enhanced due to the structure of the critical damped SDE. The connection between SGM and SB has been elaborated in \cite{chen2021likelihood,de2021diffusion} and  scalable mean matching Iterative Proportional Fitting algorithm (IPF) is proposed to estimate SB efficiently in high dimensional cases. Applications of SB, such as image-to-image transformation \cite{shi2022conditional,liu20232}, RNA trajectory inference \cite{koshizuka2022neural}, solving Mean Field Game\cite{liu2022deep}, Riemannian interpolation \cite{thornton2022riemannian}, demonstrate the effectiveness of SB in various domains.

\begin{table}[t]
    \caption{Comparison between different models in terms of optimality and boundary distributions $p_0$ and $p_1$. Our DMSB extends standard SB, which generalizes SGM beyond Gaussian priors, to phase space, similar to CLD. However, unlike CLD, DMSB jointly \emph{learns} the phase space distributions, i.e., $p_{\theta}(x,v) = p_\calA(x) q_\theta(v|x)$ and $p_{\phi}(x,v) = p_\calB(x) q_\phi(v|x)$. In other words, DMSB infers the underlying phase state dynamics given only state distributions.
    }\label{table:model-diff}
    \begin{center}
        \begin{tabular}{ccccc}
            \toprule
            Models & Optimality& $p_0(\cdot)$ & $p_1(\cdot)$ \\[1pt]
            \hline
            SGM \cite{song2020score} & \xmark & $p_\calA(x)$ & $\calN(\mathbf{0},\bm{\Sigma})$ \\[1pt]
            CLD \cite{dockhorn2021score} & \xmark & $p_\calA(x) \otimes \calN(\mathbf{0},\bm{\Sigma}) $ &  $\calN(\mathbf{0},\bm{\Sigma}) \otimes \calN(\mathbf{0},\bm{\Sigma})$ \\[1pt]
            % \bottomrule
            SB \cite{chen2021likelihood} & $\downarrow W_2\rightarrow$ kinks & $p_\calA(x)$ & $p_\calB(x)$ \\[1pt]
            DMSB (ours)  & $\downarrow W_2\rightarrow$ smooth& $p_\calA(x) q_\theta(v|x) $ & $p_\calB(x) q_\phi(v|x)$\\[1pt]
            \bottomrule
        \end{tabular}
    \end{center}
\end{table}

In this work, we start with SB in phase space (termed  momentum SB, mSB in short), and then further investigate mSB with multiple empirical marginal constraints present in the position space, which was formulated as multi-marginal mSB (mmmSB) in \cite{chen2019multi}. This circumvents the need for expensive space discretization which does not scale well to high dimensions. We also address the challenge of intricate geometric averaging in continuous space setup by strategically partitioning and reorganizing the constraint sets. Furthermore, we enhance the algorithm's computational efficiency by incorporating the method of half-bridge IPF. The optimality of transportation cost in SB leads to straight trajectories, and if one solves N 2-marginal SB problems and connect the resulting trajectories to match N+1 marginals, the connected trajectories will have kinks at all connection points. On the contrary, in mmmSB, the optimality of transportation cost leads to a smooth measure-spline over the state space that also interpolates the empirical marginals. Therefore, this approach is highly suitable for problems originated from physical systems and/or those that should have smooth trajectories, such as trajectory inference in single-cell RNA sequencing. Our research will emphasize on solving mmmSB efficiently in high-dimensions (thus the approach will differ from that in the seminal work \cite{chen2019multi}; see Sec.\ref{sec:DMSB}). %\tao{some English problem? `this application' refers to RNA? then why `section*s*' instead `section'?}\Tianrong{Fixed}. 
The differences between our algorithm and prior work are demonstrated in Table.\ref{table:model-diff}, and the main contributions of our work are fourfold:
\begin{itemize}[leftmargin=0.5cm]
    \item We extend the mean matching IPF to phase space allowing for scalable mSB computing.
    \item We introduce and tailor the Bregman Iteration \cite{bregman1967relaxation} for mmmSB which makes it compatible with the phase space mean matching objective, thus the efficient computation is activated for high dimensional mmmSB. 
    \item We show how to overcome the challenge of sampling the velocity variable when it is not available in training data, which enhances the applicability of our model.
    \item We show the performance of proposed algorithm DMSB on toy datasets which contains intricate bifurcations and merge. On realistic high-dimension (100-D) single-cell RNA-seq (scRNA-seq) datasets, DMSB outperforms baselines by a significant margin in terms of the quality of the generated trajectory both visually and quantitatively. We show that DMSB is able to capture reasonable velocity distribution compared with ground truth while other baselines fail.
\end{itemize}

\section{Preliminary}\label{sec:preliminaries}
\def\pdata{{ p_{\text{data}} }}
\subsection{Dynamical Schr\"odinger Bridge problem}\label{sec:prem-SB}
Dynamical Schr\"odinger Bridge problem has been extensively studied in the past few decades. The objective of the SB problem is to solve the following optimization problem:
\begin{align}\label{eq:SB-problem}
    \min_{\pi \in \Pi(\rho_{0},\rho_{T})}D_{KL}\left(\pi ||\xi\right),
\end{align}
where $\pi \in \Pi(\rho_0,\rho_T)$ belongs to a set of path measures with its marginal densities at $t=0$ and $T$ being $\rho_0$ and $\rho_T$. $\xi$ is the reference path measure (i.e., \cite{chen2021likelihood} sets $\xi$ as Wiener process from $\rho_0$). The optimality of the problem (\ref{eq:SB-problem}) is characterized by a set of PDEs (\ref{eq:sb-pde}).

\begin{theorem}[\cite{pavon1991free}]\label{thm:SB-PDE}
The optimal path measure $\pi$ in the problem (\ref{eq:SB-problem}) is represented by forward and backward stochastic processes
    \begin{subequations}
    \vspace{-0.1in}
      \begin{align}
          \rd \rvx_t &= [2~\nabla_\rvx \log {\Psi}_t ] \dt + \sqrt{2}\rd \rvw_t,
          \quad \rvx_0 \sim \rho_0,
          \label{eq:fsb}
          \\
          \rd \rvx_t &= [-2~\nabla_\rvx \log \widehat{\Psi}_t ] \dt + \sqrt{2}~\rd\widehat{\rvw}_{t},
           \rvx_T \sim \rho_T.
          \label{eq:bsb}
      \end{align} \label{eq:sb-sde}%
      \end{subequations}
    in which $\Psi, \widehat{\Psi} \in C^{1,2}$ are the solutions to the following coupled PDEs,
    \begin{align}
      \begin{aligned}
        \fracpartial{\Psi_t}{t} = - \Delta \Psi_t, &\quad \fracpartial{\widehat{\Psi}_t}{t} = \Delta \widehat{\Psi}_t\\
        \text{s.t. } \Psi(0,\cdot) \widehat{\Psi}(0,\cdot) = \rho_0(\cdot)&,~\Psi(T,\cdot) \widehat{\Psi}(T,\cdot) = \rho_T(\cdot),
        \label{eq:sb-pde}        
      \end{aligned}
    \end{align}   
\end{theorem}
The stochastic processes of SB in (\ref{eq:fsb}) and (\ref{eq:bsb}) are equivalent in the sense of $\forall t \in [0,T], p_t^{(\ref{eq:fsb})}\equiv p_t^{(\ref{eq:bsb})}\equiv p^{SB}_t$. Here $p^{SB}_t$ stands for the marginal distribution of SB at time $t$, which also represents the marginal density of stochastic process induced by  either of Eq.\ref{eq:sb-sde}. The potentials $\Psi_t$ and $\Psihat_t$ explicitly represent the solution of Fokker-Plank Equation (FPE) and Hamilton–Jacobi–Bellman equation (HJB) after exponential transform \cite{chen2021likelihood} where FPE describes the evolution of samples density and HJB represents for the optimality of Eq.\ref{eq:SB-problem}. Furthermore, the marginal density also obeys a factorization of $p^{SB}_t = \Psi_t \widehat{\Psi}_t$. Such rich structures of SB will later on be used to construct the log-likelihood objective (Thm.\ref{Appendix:thm:opt2PDEs}) and Langevin sampler for velocity (\S\ref{sec:opt-param-obj}).

%\tao{this is nice, but is this (Thm 2.1) what we use to actually solve SB? if not, please provide some description to make the connection}\Tianrong{Yes, we will refer to this theorem later to construct the log-likelihood objective and langevin sampler for velocity}

To solve SB, prior work have primarily used the half-bridge optimization technique, also known as Iterative Proportional Fitting (IPF), in which one iteratively solves the optimization problem with one of the two boundary conditions \cite{vargas2021machine,de2021diffusion,chen2021likelihood},
\begin{align}
  &\pi ^{(d+1)}:=\argmin_{\pi \in \Pi(\cdot,\rho_1)}D_{KL}(\pi ||\pi ^{(d)})  \quad \rightleftarrows \quad \pi ^{(d+2)}:=\argmin_{\pi \in \Pi(\rho_0,\cdot)}D_{KL}(\pi ||\pi ^{(d+1)}) \label{eq:half-bridge}
\end{align}
with initial path measure $\pi ^{(0)}:=\xi$. By repeatedly iterating over aforementioned optimizations until the algorithm converges, the SB solution will be attained as $\pi^{SB}\equiv \lim_{d\to\infty} \pi^{(d)}$ \cite{benamou2015iterative}.
In addition, \cite{dai1991stochastic} shows that the drift term in SB problem can also be interpreted as the solution Stochastic Optimal Control (SOC) problem by having optimal control policy $\rvz^{*}=2~\nabla_\rvx \log {\Psi}(t,\rvx_t)$:
\begin{align*}
    \rvz^{*}(\rvx)\in \argmin_{\rvz \in \mathcal{Z}}\E\left[\int_{0}^{T}\frac{1}{2}\norm{\rvz_t}^2\rd t\right] \quad s.t \quad 
    \begin{cases}
        \rd \rvx_{t}=\rvz_t \rd t +\sqrt{2}\rd \rvw_t\\
        \rvx_0 \sim \rho_0, \quad \rvx_1\sim \rho_T.
    \end{cases}
\end{align*}
This formulation will be used later on for constructing phase space likelihood objective function in \S\ref{sec:mmmSB}. Regarding solving the half-bridge problem, abundant results exist in the literature for the vanilla SB described above \cite{vargas2021machine,de2021diffusion,chen2021likelihood}, but we will be solving a different SB problem; see Prop.\ref{Prop:mmmSB-formulation} for formulation and \S.\ref{sec:DMSB} for a solution.

%\Tianrong{This will be used later on for constructing phase space likelihood objective function in \S\ref{sec:mmmSB}}.
%\tao{add: and this will be used later on for XXX / in YYY. otherwise it is confusing, up to this point, how is this formulation related to those mentioned above}\Tianrong{Fixed}

%\tao{how to solve the half bridge problem? some connecting sentences (e.g., see Sec.XX) would be helpful.}\Tianrong{Is it too early to mention these? we will refer back here when we introduce how we solve the half-bridge problem?} \tao{we just need to provide links. Otherwise we're requiring readers to memorize all these without seeing the big picture or having a motivation, and then they will not do it. it's important to provide connections between technical things, most people will not spend the tikme to understand all technical details, but they will be happy to know roughly how everything fits together}

\subsection{Bregman Iterations for Multiple Constraints}\label{sec:Bregman-iteration}
Bregman iteration \cite{bregman1967relaxation} can be viewed as a multiple marginal generalization of IPF, %\tao{please confirm if the modification is correct}\Tianrong{yes it should be correct.}
and it is widely used to solve entropy regularized optimal transport problem \cite{chen2019multi} with multiple constraints. The algorithm can efficiently solve problems in the form of,
\begin{align*}
  \inf_{\pi  \in \mathcal{K}}KL\left(\pi |\xi\right),
\end{align*}
where $\mathcal{K}$ is the intersection of multiple closed convex constraint sets $\mathcal{K}_l$: $\mathcal{K}=\cap_{l=1}^{L}\mathcal{K}_l.$ Bregman Projection (BP) is defined as optimization w.r.t one of the constraint $\mathcal{K}_l$,
\begin{align*}
  P_{\mathcal{K}_l}^{KL}(\xi):=\argmin_{\pi \in\mathcal{K}_l}KL(\pi |\xi),
\end{align*}
and $d$-th Bregman Iteration (BI) is recursively computing BP over all the constraints in $\gK$:
\begin{align*}
  \forall 0<n\leq L, \quad \pi ^{(d,n)}:=P_{\mathcal{K}_{l}^{n}}^{KL}(\pi ^{(d,n-1)}),
\end{align*}
% w.r.t one of the constraint $\mathcal{K}_{l}^{n}\in \mathcal{K}$ at iteration $n$.
The initial condition for ($d+1$)-th BI is $\pi ^{(d+1,0)}=\pi ^{(d,L)}$.
Under certain conditions (see e.g., \cite{benamou2015iterative}), one has that $\pi^{(d,L)}$ converges to the unique solution:
\begin{align*}
  \pi^{(d,L)}\rightarrow P_{\mathcal{K}}^{KL}(\xi) \quad \text{as} \quad d\rightarrow +\infty
\end{align*}
\begin{remark}\label{remark:BI-IPF}
  One BI traverses all constraints via multiple BPs, and each BP solves an optimization problem with one constraint.% \tao{please confirm if the modification is correct}\Tianrong{Confirmed}. 
  One can notice that the BI will become the aforementioned IPF procedure solving SB problem (\ref{eq:SB-problem}) by defining $L=2$, $\mathcal{K}_1=\Pi(\rho_0,\cdot)$, $\mathcal{K}_2=\Pi(\cdot,\rho_1)$.
\end{remark}
\begin{table}[ht]
\caption{Mathematical notation.}
  \begin{center}
  \begin{tabular}{cc}
    \toprule
    Notation & Definition \\[1pt]
    \hline
    $\rvx$ & position variable \\[1pt]
    $\rvv$ & velocity variable\\[1pt]
    $\rvm$ & concatenation of $\left[\rvx,\rvv\right]^\T$\\[1pt]
    \bottomrule
    \end{tabular}
    \quad
    \begin{tabular}{cc}
    \toprule
      Notation & Definition \\[1pt]
    \hline
    $\rho$ & position distribution $\rho(\rvx)$\\ [1pt]
    $\gamma$ &velocity Distribution $\gamma(\rvv)$\\[1pt]
    $\mu$    & distribution of $\mu(\rvx,\rvv)$\\ [1pt]
    \bottomrule
    \end{tabular}
  \end{center}
  \label{table:notation}
\end{table}
\vspace{-0.15in}
\begin{figure}[t]
  \includegraphics[width=\textwidth]{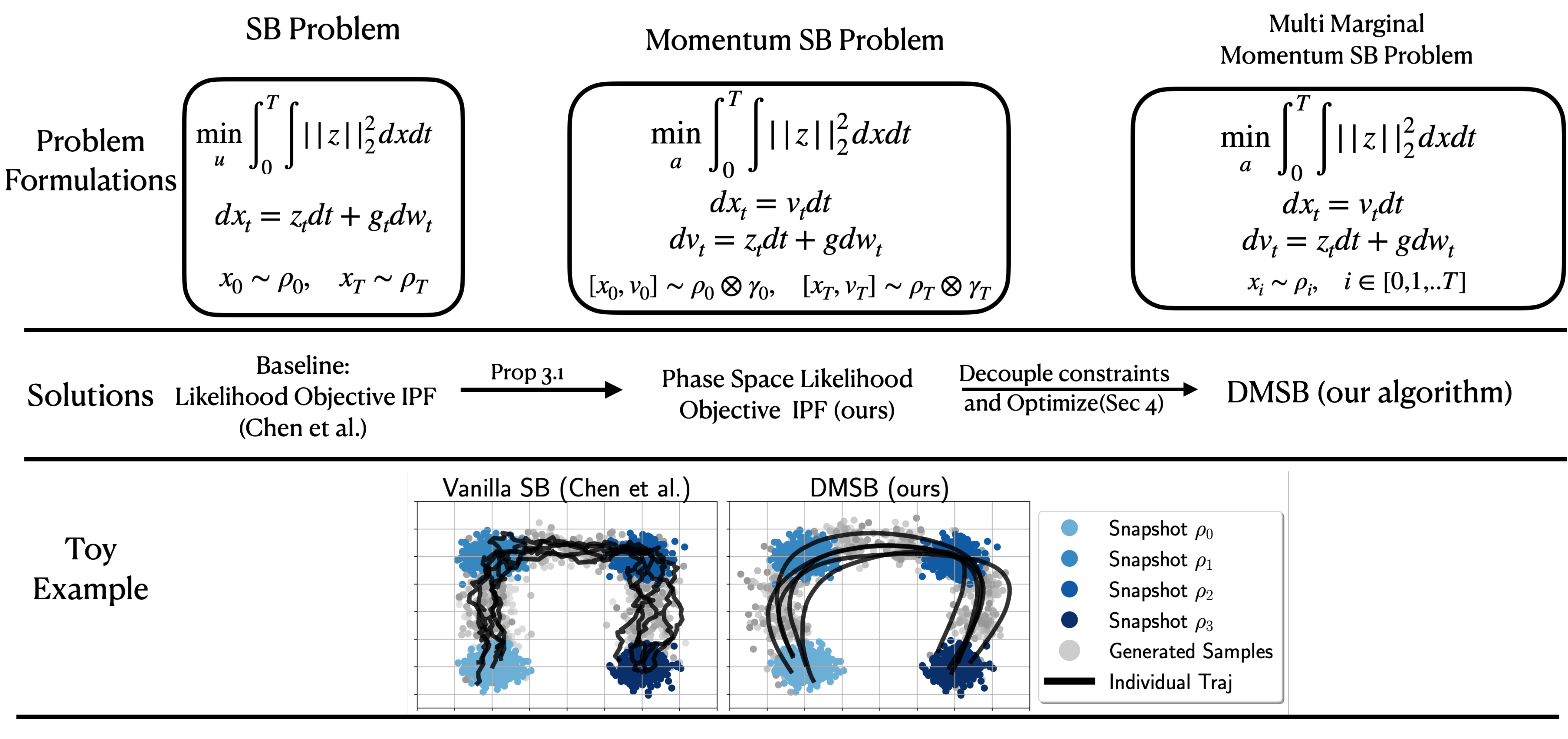}
  \caption{A summary of various SB problems and corresponding algorithms. The toy example in the 3rd row illustrates that vanilla SB determines `straight' paths (modulo fluctuations due to noise) between pairwise empirical marginals, while our multi-marginal momentum SB approach establishes a smooth measure-spline between marginals in the position space (albeit still stochastic, the path is smooth between any pair of adjacent 2 marginals, because noise is added to velocity, and the path is also smooth across different pairs of adjacent 2 marginals per design.}
  \label{fig:diff-approach}
\end{figure}
% \vspace{-0.1in}

\section{Momentum Schr\"odinger Bridge}\label{sec:mmmSB}
We first describe how to conduct half-bridge IPF training in the phase space, which can be used to solve momentum SB (mSB) problem with two marginals constraints. This scalable phase space half-bridge technique will %\tao{by `can', did you mean it will be extended to multi-marginal cases in sec.4? if yes, be precise}\Tianrong{Fixed} 
then be applied to multi-marginal cases (Sec.\ref{sec:DMSB}). Fig.\ref{fig:diff-approach} demonstrates how we develop an algorithm based on \cite{chen2021likelihood}. Notations used in following sections are listed in Table.\ref*{table:notation}.
% \vspace{-0.2in}
% \begin{table}[H]
%   \caption{Mathematical notation.}
%   \label{table:notation}
%   \vskip 0.1in
%   \begin{center}
%   \begin{small}
%   \begin{tabular}{r|cr}
%   \toprule
%   \textsc{Notation} & \textsc{Definitions}  \\
%   \midrule
%   $\mu$    & distribution of velocity and position $\mu(\rvx,\rvv)$\\ [1pt]
%   $\rho$ & position distribution $\rho(\rvx)$\\ [1pt]
%   $\gamma$ &velocity Distribution $\gamma(\rvv)$\\[1pt]
%   $\rvx$ & position variable \\[1pt]
%   $\rvv$ & velocity variable\\[1pt]
%   $\rvm$ & concatenation of $\left[\rvx,\rvv\right]^\T$\\[1pt]
%   \bottomrule
%   \end{tabular}
%   \end{small}
%   \end{center}
%   \vskip -0.1in
% \end{table}
% \vspace{-0.2in}
 mSB extends SB problem to phase space, which consists of both position and velocity. We will first consider boundary distributions that depend on both $\rvx$ and $\rvv$, although eventually we will use this as a module to find transport maps between two distributions that only depend on position $\rvx$, as velocity $\rvv$ is an auxiliary variable artificially introduced for obtaining smooth transport. Conceptually, as an entropy regularized optimal transport problem, SB tries to obtain the straightest path between empirical marginals of positions $\rvx$ with additive noise, but mSB aims at finding the smooth interpolation between empirical marginals of $\rvx$ \cite{benamou2019second} conditioned on boundary velocity distributions (see Fig.\ref{fig:diff-approach}). Such smooth measure-valued splines in the position space are obtained by the optimization problem in the phase space \cite{chen2019multi}:
\vspace{-0.05in}
\begin{align*}
\min_{\pi \in \Pi(\mu_0,\mu_T)} KL(\pi|\xi) \quad s.t \quad  \pi=\text{Law}(\rvx,\rvv): 
  \underbrace{\begin{pmatrix}
    \rd \rvx_t  \\
    \rd\rvv_t  
    \end{pmatrix}}_{\rd \rvm_t} &=
                  \underbrace{
                  \setlength\arraycolsep{1pt}
                  \begin{pmatrix}
                    \rvv_t  \\
                    \mathbf{0}
                  \end{pmatrix}}_{\vf(\rvv,t)}\rd t+
                  \underbrace{
                    \setlength\arraycolsep{1pt}
                    \begin{pmatrix}
                    \mathbf{0}&\mathbf{0}  \\
                    \mathbf{0}&g_t \\  
                  \end{pmatrix}}_{\rvg(t)}
                  \underbrace{
                  \setlength\arraycolsep{1pt}
                  \begin{pmatrix}
                    \mathbf{0}  \\
                    \rvz_t  
                  \end{pmatrix}}_{\rmZ(t)}\rd t+
                  \underbrace{
                    \setlength\arraycolsep{1pt}
                    \begin{pmatrix}
                    \mathbf{0}&\mathbf{0}  \\
                    \mathbf{0}&g_t \\  
                  \end{pmatrix}}_{\rvg(t)}\rd \rvw_t,
\end{align*}
% \begin{align*}
%   \min_{\pi \in \Pi(\mu_0,\mu_T)} KL(\pi|\xi) \quad s.t \quad  \pi=\text{Law}(\rvx,\rvv): 
%   \begin{pmatrix}
%     \rd \rvx_t  \\
%     \rd\rvv_t  
%     \end{pmatrix} &=
%                   \underbrace{
%                   \begin{pmatrix}
%                     \rvv_t  \\
%                     \mathbf{0}
%                   \end{pmatrix}}_{\vf(\rvv,t)}\rd t+
%                   \underbrace{
%                     \begin{pmatrix}
%                     \mathbf{0}&\mathbf{0}  \\
%                     \mathbf{0}&g_t \\  
%                   \end{pmatrix}}_{\rvg(t)}
%                   \underbrace{
%                   \begin{pmatrix}
%                     \mathbf{0}  \\
%                     \rvz_t  
%                   \end{pmatrix}}_{\rmZ(t)}\rd t+
%                   \underbrace{
%                     \begin{pmatrix}
%                     \mathbf{0}&\mathbf{0}  \\
%                     \mathbf{0}&g_t \\  
%                   \end{pmatrix}}_{\rvg(t)}\rd \rvw_t,
% \end{align*}
Similar to Theorem \ref{thm:SB-PDE}, one can derive a set of PDEs using the potential functions $\Psi(t,\rvx,\rvv)$ and $\Psihat(t,\rvx,\rvv)$, and subsequently apply IPF procedure to solve the problem. The formulation of the phase space PDE can be found in Appendix.\ref{Appendix:thm:opt-sde}. Such %\tao{what do you mean by `distinct'?}\Tianrong{Fixed} 
PDE representation of mSB results in a straightforward yet innovative log-likelihood training that enables efficient optimization of the IPF. 
%\Tianrong{Similar to $\rmZ_t$, we denote $\Zhatt=\begin{pmatrix}
%\mathbf{0}\\
%\widehat{\rvz}_t\\   
%\end{pmatrix}$ as the control variables which drive the SDE backward through time ( eq.\ref{eq:rev_opt_sde})}.
%\tao{not sure if I get it. $u(t)$ and $Z_t$ are the same thing, and $a_t$ and $z_t$ are the same thing? what about $\hat{Z}$? this part is the super confusing part. needs revision.}\Tianrong{Fixed.} \tao{ok $u(t)$ and $Z_t$ are the same thing, and $a_t$ and $z_t$ are the same thing. I see that this part gets fixed which is great. but $\hat{Z}_t$ is still unexplained. `control variable that drives the SDE backward through time' is not precise enough. could it be better to have some self-contained formula? and it will probably explain $\hat{z}_t$ too}\Tianrong{fixed}

\begin{proposition}[likelihood bound]\label{prop:likelihood-obj}
    The half-bridge IPF in phase space
    \begin{align*}
      &\pi ^{(d+1)}:=\argmin_{\pi \in \Pi(\mu_0,\cdot)}D_{KL}(\pi ||\pi ^{(d)})   \quad \rightleftarrows \quad \pi ^{(d+2)}:=\argmin_{\pi \in \Pi(\cdot,\mu_T)}D_{KL}(\pi ||\pi ^{(d+1)})
    \end{align*}
    % can be represented by phase space Negative Log-Likelihood (NLL) IPF
    represents the bound of the likelihood and gives approximate likelihood training:
    \begin{align*}
      &\rmZ_t:=\argmin_{\rmZ_t}-\log p(\rvm_0,0)  \quad \rightleftarrows \quad \widehat{\rmZ}_t:=\argmin_{\Zhatt}-\log p(\rvm_T,T).
    \end{align*}
%    \tao{we can't have big Z being argmin w.r.t. small z}\Tianrong{Fixed}
    \vspace{-0.1in}
    \begin{align*}
        \text{where}\quad \log p(\rvm_0,0)\propto\int_0^{T}\E_{\widehat{\rvm}_t}\left[\frac{1}{2}\norm{\zhatt+\rvz_t-g\nabla_{\rvv}\log \hat{p}_t}^2\right]\rd t.
    \end{align*}
    \begin{align*}
        \text{and $\widehat{\rvm}_t$ samples from: }& \quad \rd \widehat{\rvm}_t  = \left[\vf - \rvg\Zhat_t \right]\rd t +\rvg(t)\rd \rvw_t, \quad \widehat{\rvm}_T\sim\mu_T\numberthis{}\label{eq:rev_opt_sde}
    \end{align*}
    % Meanwhile, by maximizing the log-likelihood at time $t=0$ then $t=T$ iteratively, $(\rmZ_t,\Zhatt)$ will converge to the solution of phase space SB.
    $\Zhatt\overset{\Delta}{=}\begin{pmatrix}
        \bm{0}\\
        \zhatt
    \end{pmatrix}$ and  $\widehat{p}_t$ is the density of path measure induced by eq.\ref{eq:rev_opt_sde} at time $t$.  A similar result for $\log(\rvm_T,T)$ can be obtained in a similar derivation.
  \end{proposition}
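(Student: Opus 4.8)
The plan is to mimic the likelihood-training derivation for ordinary SB from \cite{chen2021likelihood,de2021diffusion}, but carried out in phase space using the degenerate diffusion $\rvg(t)$ above. First I would write down the forward and backward SDE representations of mSB — the analog of \eqref{eq:sb-sde} — which are recorded in Appendix \ref{Appendix:thm:opt-sde}: the forward process has drift $\vf(\rvv,t) + \rvg(t)\rmZ_t$ with $\rmZ_t$ the (lifted) control, and the time-reversed process has drift $\vf - \rvg\widehat{\rmZ}_t$ with $\widehat{\rmZ}_t$ built from $\widehat{\rvz}_t$. The key structural facts I would invoke are (i) the factorization $p_t = \Psi_t\widehat{\Psi}_t$ of the mSB marginal, (ii) the relation between the two controls and the score, namely that the forward drift equals the backward drift plus the Fisher-type term $\rvg\rvg^\T\nabla\log p_t$ (the degenerate analog of Nelson's duality, which here only involves $\nabla_\rvv\log p_t$ because noise is injected only in velocity), and (iii) that one half-bridge step freezes one potential and re-solves for the other, which is exactly the stochastic-control problem for $\widehat{\rmZ}_t$ against the reference law induced by the current $\rmZ_t$.

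Next I would establish the likelihood bound itself. Fixing the forward control $\rmZ_t$, the half-bridge $\pi^{(d+2)} := \argmin_{\pi\in\Pi(\cdot,\mu_T)} D_{KL}(\pi\|\pi^{(d+1)})$ is a KL minimization over path measures agreeing with $\mu_T$ at the terminal time; by the disintegration/chain rule for path-measure KL (Girsanov), this KL splits into a marginal term $D_{KL}(\,\cdot\,\|\,\cdot\,)$ at time $T$ plus an integral of the squared drift mismatch along trajectories. Using Girsanov's theorem on the SDE \eqref{eq:rev_opt_sde} for $\widehat{\rvm}_t$, the drift mismatch between the reversed optimal process and the reversed reference process is precisely $\widehat{\rmZ}_t + \rmZ_t - \rvg\nabla\log\widehat p_t$, whose only nonzero block is the velocity component $\widehat{\rvz}_t + \rvz_t - g\nabla_\rvv\log\widehat p_t$. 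This yields, via the standard variational (ELBO) argument, the bound
\[
  -\log p(\rvm_0,0) \;\le\; \int_0^T \E_{\widehat{\rvm}_t}\!\left[\tfrac12\big\|\widehat{\rvz}_t + \rvz_t - g\nabla_\rvv\log\widehat p_t\big\|^2\right]\rd t
\]
up to terms independent of $\widehat{\rmZ}_t$, and symmetrically for $-\log p(\rvm_T,T)$ with the roles of forward/backward swapped. Minimizing this bound over $\widehat{\rmZ}_t$ is the advertised approximate likelihood training, and it is consistent with the IPF half-bridge because the minimizer recovers the true backward score drift.

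I expect the main obstacle to be handling the \emph{degeneracy} of the diffusion coefficient $\rvg(t)$: the noise acts only on the velocity coordinates, so Girsanov's theorem and the time-reversal formula must be applied on the velocity marginal only, and one must check that the position drift $\rvv_t$ (which carries no noise) contributes nothing to the Radon--Nikodym derivative while still being consistent under time reversal — i.e., that the reversed position equation remains $\rd\rvx_t = \rvv_t\,\rd t$ with no correction. This is where the kinetic/underdamped structure (as in CLD \cite{dockhorn2021score}) matters: the reversal of a hypoelliptic diffusion requires the reversed drift to include $\rvg\rvg^\T\nabla\log p_t = \mathrm{diag}(\mathbf{0}, g^2)\nabla\log p_t$, which is exactly why only $\nabla_\rvv\log\widehat p_t$ appears. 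A secondary technical point is justifying the exchange of the time-integral and expectation and the finiteness of the relevant KL divergences (so that Girsanov applies), which I would handle under the smoothness assumption $\Psi,\widehat\Psi\in C^{1,2}$ already invoked in Theorem \ref{thm:SB-PDE} together with standard integrability assumptions on $\mu_0,\mu_T$. Everything else — completing the square, dropping $\widehat{\rmZ}_t$-independent constants, and reading off the stated objective — is routine.
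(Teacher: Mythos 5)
Your derivation is essentially correct, but it takes a genuinely different route from the paper's. The paper never invokes Girsanov or an ELBO over path measures: it first derives the phase-space PDE system by a Lagrangian argument plus the Hopf--Cole transform (Theorems \ref{Appendix:thm:opt2PDEs} and \ref{Appendix:thm:opt-sde}), then applies It{\^o}'s formula to $\log\Psi$ and $\log\Psihat$ to obtain a coupled FBSDE representation (Lemma \ref{Appendix:lemma:phase-FBSDE}), and finally sums the two backward equations to get an exact decomposition of $\log p(\rvm_0,0)$, converting the divergence term $\nabla_\rvv\cdot g\zhatt$ into the score by integration by parts and completing the square to expose the mean-matching term (Proposition \ref{Appendix:Prop:likelihood-obj}). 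Your route --- disintegrating the half-bridge KL, applying Girsanov to the degenerate diffusion, and using the hypoelliptic time-reversal formula so that only $\nabla_\rvv\log\hat p_t$ survives --- reaches the same objective more directly, in the style of \cite{de2021diffusion}; its cost is that Girsanov and time reversal must be justified for a hypoelliptic SDE (finite relative entropy / Novikov-type conditions), which you correctly flag, whereas the paper's It{\^o} computation runs under the standing $\Psi,\Psihat\in C^{1,2}$ and growth assumptions and, as a by-product, delivers $\rvz_t+\zhatt=g\nabla_\rvv\log p_t^{SB}$ and $p_t^{SB}=\Psi_t\Psihat_t$, which the paper reuses elsewhere (e.g.\ the Langevin velocity sampler of Prop.\ref{prop:langevin-Law}). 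Two bookkeeping points to repair if you flesh this out: (i) your pairing is swapped relative to the statement --- the bound on $-\log p(\rvm_0,0)$, whose expectation is along the frozen backward process \eqref{eq:rev_opt_sde} with density $\hat p_t$, is minimized over the \emph{forward} policy $\rmZ_t$; as you wrote it, minimizing over $\Zhatt$ would move both the sampling law and $\hat p_t$ with the optimization variable and destroy the regression structure; (ii) state explicitly that the Girsanov quadratic variation is taken after inverting $g$ on the velocity block only (absolute continuity holds because the position equations of the two path measures coincide), which is exactly why the objective is $\frac{1}{2}\norm{\rvz_t+\zhatt-g\nabla_\rvv\log\hat p_t}^2$ and not a $g$-weighted variant.
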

  \vspace{-0.15in}
  \begin{proof}
    See Appendix \ref{appendix:prop:likelihood-obj-sec}.
  \end{proof}
  \vspace{-0.1in}

%\tao{this section is still not self-contained. in addition, we need something like: in order to ..., we will train ... using the objective function given by the following theorem.}\Tianrong{Not sure whether I understand it correctly. We are not going to train anything here. All the variables are not parametrized. We state that we are going to use such half-bridge optimization for multi-marginal cases  at the beginning of this section.} \tao{ok, but it is the same thing, we need to provide a road map, meaning we need to describe what this proposition comes from, what further mathematical results build on it, and how it (together with other mathematical results) is used in the design (or performance guarantee? or both?) of our algorithm.}\Tianrong{Fixed, added one summary sentence at the end of this section.}
  
  \begin{remark}
    After optimizing $\widehat{\rmZ}_t$, the reference path measure becomes eq.\ref{eq:rev_opt_sde}, which implies $\pi\in \Pi(\cdot,\mu_T)$, i.e., the constraint in half-bridge IPF is satisfied. A path measure $\pi$ is induced by either $\rmZ_t$ or $\Zhatt$.   As being mentioned in Remark.\ref{remark:BI-IPF}. One half-bridge IPF is basically one BP and one IPF is one BI. Prop.\ref{prop:likelihood-obj} provides a convenient way to perform one BP in the form of $\pi:=\argmin_{\pi \in \gK_l}D_{KL}(\pi ||\bar{\pi})$ by maximizing log-likelihood given constraint $\gK$ and reference path measure $\bar{\pi}$.
  \end{remark}
  Prop.\ref{prop:likelihood-obj} provides an alternative way to conduct the BI which will be heavily used in mmmSB \S\ref{sec:mmmSB}, and it is computationally efficient after parameterizing and discretization (\S\ref{sec:opt-param-obj}).

\section{Deep Momentum Multi-Marginal Schr\"odinger Bridge}\label{sec:DMSB}
We first state the problem formulation of momentum multi-marginal Schr\"odinger Bridge (mmmSB). Different from previous two marginals case, we consider the scenario where $N+1$ probability measures $\mu_{t_i}$ are lying at time $t_i$. In addition, velocity distributions are not necessarily known.
\begin{proposition}[\cite{chen2019multi}]\label{Prop:mmmSB-formulation}
  The dynamical mmmSB with multiple marginal constraints reads:
  \begin{align*}
      \min_{\pi} \mathcal{J}(\pi):=\sum_{i=0}^{N-1}KL\left(\pi_{t_i:t_{i+1}}|\xi_{t_i:t_{i+1}}\right),\numberthis \label{eq:mmSB-formulation} \quad \text{s.t} \quad 
      \pi \in \mathcal{K}:=\cap_{i=0}^{N}\mathcal{K}_{t_i}
    \end{align*}
    % \label{eq:K-constraint}
    \begin{align*}
      \text{where:}\quad \mathcal{K}_{t_0}&=\left\{{\int\pi_{t_{0}:t_{1}}\rd\rvm_{t_1}=\mu_{t_0}, \int\mu_{t_0}\rd\rvv_{t_0}=\rho_{t_0}}\right\}\\
      \mathcal{K}_{t_N}&=\left\{{\int\pi_{t_{N-1}:t_{N}}\rd\rvm_{t_{N-1}}=\mu_{t_N}, \int\mu_{t_N}\rd\rvv_{t_N}=\rho_{t_N}}\right\}\\
      \mathcal{K}_{t_i}&=\left\{\int\pi_{t_{i}:t_{t+1}}\rd\rvm_{t_{i+1}}=\mu_{t_i},\int\pi_{t_{i-1} : t_{i}}\rd\rvm_{t_{i-1}}=\mu_{t_i},\int\mu_{t_i}\rd\rvv_{t_i}=\rho_{t_i}\right\},\numberthis\label{eq:mid-constraint}
    \end{align*} 
    and $\mathcal{K}$ is the intersection of close convex set of $\mathcal{K}_{t_i}$.
\end{proposition}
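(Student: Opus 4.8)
The statement is the continuous-space counterpart of the space-discretized multi-marginal momentum Schr\"odinger bridge of \cite{chen2019multi}, so the plan is to transcribe that derivation to path measures and then record the two structural facts the rest of the paper leans on: that $\mathcal{J}$ is convex and that $\mathcal{K}$ is an intersection of closed convex sets (the hypothesis under which the Bregman iteration of \S\ref{sec:Bregman-iteration} converges). I would do this in three steps.

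First, for the objective: I would argue that a stochastic measure-valued spline through the position marginals $\rho_{t_0},\dots,\rho_{t_N}$ is, by construction, a path measure each of whose legs $\pi_{t_i:t_{i+1}}$ stays as close as possible in relative entropy to the free momentum reference $\xi$ (the phase-space law with $\rd\rvx_t=\rvv_t\,\rd t$ and $\rd\rvv_t=g_t\,\rd\rvw_t$), the legs being glued continuously at the interior knots; summing the per-leg relative entropies gives $\mathcal{J}(\pi)=\sum_{i=0}^{N-1}KL(\pi_{t_i:t_{i+1}}|\xi_{t_i:t_{i+1}})$. I would remark in passing that for a Markov $\pi$ this decomposed cost differs from the single relative entropy $KL(\pi|\xi)$ precisely by the knot entropies $\sum_{i=1}^{N-1}KL(\pi_{t_i}|\xi_{t_i})$; the decomposed form is the one to keep, because it makes each leg an ordinary momentum SB between its two endpoint marginals and is therefore solvable by the half-bridge IPF and by Bregman projections, whereas the global form is not.

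Second, for the constraints: I would encode ``only positions are observed'' by introducing at each interior knot $t_i$ a \emph{free} phase-space marginal $\mu_{t_i}$ and demanding (a) that the leg arriving at $t_i$ and the leg leaving $t_i$ agree there, i.e.\ $\int\pi_{t_{i-1}:t_i}\,\rd\rvm_{t_{i-1}}=\mu_{t_i}=\int\pi_{t_i:t_{i+1}}\,\rd\rvm_{t_{i+1}}$, which is exactly continuity of the glued measure across the knot, and (b) the data constraint $\int\mu_{t_i}\,\rd\rvv_{t_i}=\rho_{t_i}$; at $t_0$ and $t_N$ only one leg is present, which gives the truncated $\mathcal{K}_{t_0},\mathcal{K}_{t_N}$. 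Collecting these is precisely $\mathcal{K}=\cap_{i=0}^{N}\mathcal{K}_{t_i}$.

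Third, the structural claim, which is where I expect the real work to be. Every defining relation is affine in $(\pi,\{\mu_{t_i}\})$ --- marginalizing out an endpoint of a leg, and integrating out $\rvv$, are linear maps set equal to fixed measures --- so each $\mathcal{K}_{t_i}$ and hence $\mathcal{K}$ is convex, and joint convexity and lower semicontinuity of relative entropy make $\mathcal{J}$ convex and lower semicontinuous. Closedness is the delicate point: I would topologize the space of sub-interval path measures so as to simultaneously keep relative entropy lower semicontinuous and make the marginalization and velocity-projection maps continuous --- weak convergence tested against bounded continuous functionals is the natural candidate --- so that each $\mathcal{K}_{t_i}$ is the preimage of a point under a continuous map and therefore closed, and a finite intersection of closed sets is closed. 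I would also have to check that $\mathcal{K}$ is nonempty with finite cost (Bregman iteration needs a feasible, finite-energy starting point), e.g.\ by explicitly gluing momentum bridges between Gaussian-smoothed versions of the $\rho_{t_i}$. Granting these topological points, the conclusion follows, and one obtains as a byproduct that the minimizer is Markov, being a concatenation at deterministic times of the per-leg momentum bridges.
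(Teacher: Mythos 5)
There is nothing in the paper to compare against here: Proposition~\ref{Prop:mmmSB-formulation} is stated as a citation of \cite{chen2019multi} and the paper supplies no proof of it --- it is the problem \emph{formulation} that the rest of \S\ref{sec:DMSB} (the constraint decoupling in \eqref{eq:new-K-constraints} and Propositions~\ref{prop:bound-opt}--\ref{prop:bound-bridge}) builds on. Judged on its own, your reconstruction is sound and matches the intended reading: the objective is the sum of per-leg relative entropies to the momentum reference, the free phase-space marginals $\mu_{t_i}$ encode gluing at the knots together with the position-data constraints $\int\mu_{t_i}\,\rd\rvv_{t_i}=\rho_{t_i}$, and your side remark that for Markov $\pi$ the decomposed cost equals $KL(\pi|\xi)$ plus the knot entropies $\sum_{i=1}^{N-1}KL(\pi_{t_i}|\xi_{t_i})$ is correct and is precisely why the per-leg form is the one compatible with half-bridge IPF and Bregman projections. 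Your third step correctly identifies that the only genuinely mathematical content of the proposition is the convexity/closedness of the $\mathcal{K}_{t_i}$ (the hypothesis under which the Bregman iteration of \S\ref{sec:Bregman-iteration} applies), and your affine-preimage argument for convexity is fine; the closedness in a suitable weak topology and the existence of a feasible path measure with finite cost are flagged but not carried out --- that is a real (if minor) gap in your write-up, but it is one the paper itself leaves implicit by deferring to \cite{chen2019multi} and \cite{benamou2015iterative}, so it does not put you at odds with anything the authors actually prove.
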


The problem described in Prop.\ref{Prop:mmmSB-formulation} can be solved by classical BI algorithm integrated with Sinkhorn method \cite{chen2019multi}. However, due to the curse of dimensionality and unfavorable geometric explicit solution, the BP cannot be applied in high-dimensional and continuous state space directly. To tackles these difficulties, we parameterize the forward and backward policies $\rvz_t$ and $\zhatt$ by a pair of neural networks. We further decouple and resemble the constraints by which it enables the scalable likelihood IPF and avoids the geometric averaging issue under mmmSB context.
\begin{figure}[t]
  \includegraphics[width=1.0\textwidth]{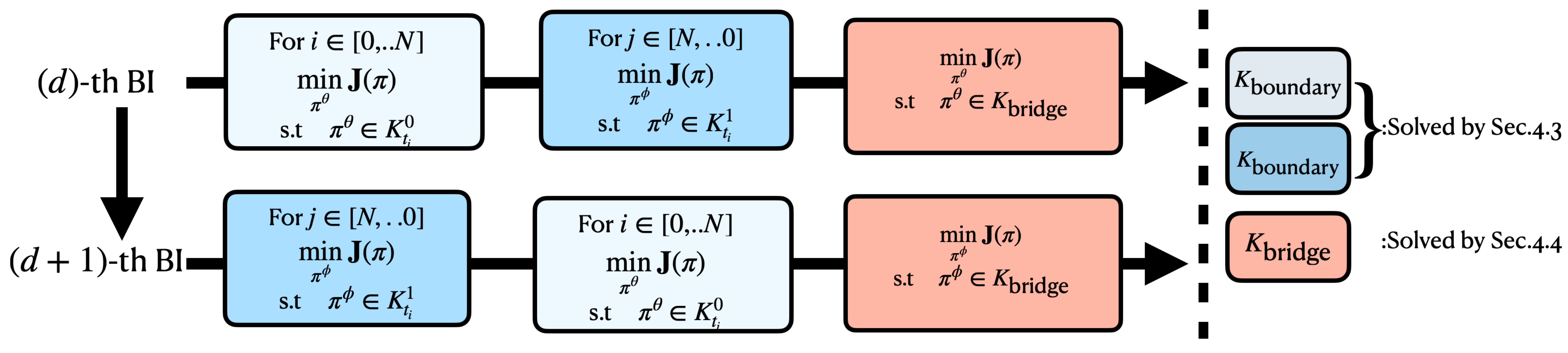}
  \caption{The procedure details the Bregman Iteration (BI) employed in DMSB. The gray and blue blocks  represent the BP step performed under $\Kbound$ constraint for forward and backward policies, respectively. The red block signifies the BP step executed under the $\Kbridge$ constraint. Algorithms for training and sampling can be found in Appendix.\ref{Appendix:algorithms}.}\label{fig:DMSB_procedure}
  \vskip -0.05in
\end{figure}

\subsection{Decoupling and Reassembling Constraints}
We decompose the constraint set (\ref{eq:mid-constraint}) by
% \begin{align}
% \hspace*{-0.5cm}
% \begin{aligned}\label{eq:new-K-constraints}
%     \mathcal{K}_{t_i}^{0}&=\left\{{\int\pi_{t_{i}t_{t+1}}\rd\rvm_{t_{i+1}}=\hat{\mu}_{t_i}, \int\hat{\mu}_{t_i}\rd\rvv_{t_i}=\rho_{t_i}}\right\}\\
%     \mathcal{K}_{t_i}^{1}&=\left\{{\int\pi_{t_{i-1} t_{i}}\rd\rvm_{t_{i-1}}=\mu_{t_i}, \int\mu_{t_i}\rd\rvv_{t_i}=\rho_{t_i}}\right\} \\
%     \mathcal{K}_{t_i}^{2}&=\left\{{\int\pi_{t_{i}t_{t+1}}\rd\rvm_{t_{i+1}}=\int\pi_{t_{i-1} t_{i}}\rd\rvm_{t_{i-1}}}\right\}.\\
% \end{aligned}
% \end{align}
\begin{equation}\label{eq:new-K-constraints}
  \mathcal{K}_{t_i}=\cap_{r=0}^{2}\mathcal{K}_{t_i}^{r}, \quad \text{where} \quad 
  \begin{array}{l}
    \mathcal{K}_{t_i}^{0}=\left\{{\int\pi_{t_{i}:t_{t+1}}\rd\rvm_{t_{i+1}}=\hat{\mu}_{t_i}, \int\hat{\mu}_{t_i}\rd\rvv_{t_i}=\rho_{t_i}}\right\}\\
    \mathcal{K}_{t_i}^{1}=\left\{{\int\pi_{t_{i-1}: t_{i}}\rd\rvm_{t_{i-1}}=\mu_{t_i}, \int\mu_{t_i}\rd\rvv_{t_i}=\rho_{t_i}}\right\} \\
    \mathcal{K}_{t_i}^{2}=\left\{{\int\pi_{t_{i}:t_{t+1}}\rd\rvm_{t_{i+1}}=\int\pi_{t_{i-1}: t_{i}}\rd\rvm_{t_{i-1}}}\right\}.
    \end{array}
\end{equation}
  % \begin{equation}\label{eq:new-K-constraints}
  %   \mathcal{K}_{t_i}=\cap_{r=0}^{2}\mathcal{K}_{t_i}^{r}, \quad \text{where} \quad 
  % %   &\begin{array}{l}
  % %   \mathcal{K}_{t_i}^{0}&=\left\{{\int\pi_{t_{i}t_{t+1}}\rd\rvm_{t_{i+1}}=\hat{\mu}_{t_i}, \int\hat{\mu}_{t_i}\rd\rvv_{t_i}=\rho_{t_i}}\right\}\\
  % %   \mathcal{K}_{t_i}^{1}&=\left\{{\int\pi_{t_{i-1} t_{i}}\rd\rvm_{t_{i-1}}=\mu_{t_i}, \int\mu_{t_i}\rd\rvv_{t_i}=\rho_{t_i}}\right\} \\
  % %   \mathcal{K}_{t_i}^{2}&=\left\{{\int\pi_{t_{i}t_{t+1}}\rd\rvm_{t_{i+1}}=\int\pi_{t_{i-1} t_{i}}\rd\rvm_{t_{i-1}}}\right\}.
  % %   \end{array}
  % % \end{equation}
One can notice that the $\mathcal{K}_{t_i}^0$ and $\mathcal{K}_{t_i}^1$ share similar structure as simpler boundary marginal conditions $\mathcal{K}_{t_0}$ and $\mathcal{K}_{t_N}$, hence we can get rid of the notorious geometric averaging (see \S 4 in \cite{chen2019multi}). Notably, this type of constraint provides an opportunity to utilize Proposition \ref{prop:likelihood-obj} for optimization, but the joint distribution of $\rvx$ and $\rvv$ is still absent. We classify the constraints into two categories:
  \begin{align*}
      \Kbound=\left\{\cap_{i=1}^{N-1}\mathcal{K}^{r}_{t_i}\cap \mathcal{K}_{t_0}\cap \mathcal{K}_{t_{N}}|\forall r \in \left\{0,1\right\}\right\},\quad \Kbridge  =\left\{\cap_{i=1}^{N-1}\mathcal{K}^{2}_{t_i}\right\}.
  \end{align*} 
% \begin{subequations}\label{eq:new-set}
% \begin{empheq}{align}
%   \hspace*{-2cm}
%     \Kbound&=\left\{\cap_{i=1}^{N-1}\mathcal{K}^{r}_{t_i}\cap \mathcal{K}_{t_0}\cap \mathcal{K}_{t_{N}}|\forall r \in \left\{0,1\right\}\right\},\\
%     \Kbridge  &=\left\{\cap_{i=1}^{N-1}\mathcal{K}^{2}_{t_i}\right\}.
% \end{empheq} 
% \end{subequations}
By following BI (\S\ref{sec:Bregman-iteration}), we execute optimization w.r.t. (\ref{eq:mmSB-formulation}) while projecting the solution to subset of $\Kbound$ or $\Kbridge$ iteratively. The sketch can be found in Fig.\ref{fig:DMSB_procedure}. The next sections will provide more details on obtaining the joint distribution $\mu$ and optimizing within each constraint set.

Hereafter, we only demonstrate the optimization for forward policy $\rvz_t$ given reference path measure $\bar{\pi}$ driven by fixed backward policy $\widehat{\rvz}_t$. The procedure can be applied for the $\zhatt$ and vice versa.

\subsection{Optimization in set $\Kbound$}\label{sec:Kbound-opt}
  We first show how to optimize forward policy $\rvz_t$ w.r.t. objective function (\ref{eq:mmSB-formulation}) given the reference path measure $\bar{\pi}$ driven by fixed backward policy $\widehat{\rvz}_t$ under one subset of $\Kbound$.
  \begin{proposition}[Optimality w.r.t. $\Kbound$]\label{prop:bound-opt}
    Given the reference path measure $\bar{\pi}$ driven by the backward policy $\widehat{\rvz}_t$ from boundary $\mu_{t_{i+1}}$ in the reverse time direction, the optimal path measure in the forward time direction of the following problem
    \begin{align*}
      \min_{\pi} \mathcal{J}(\pi):=\sum_{i=0}^{N-1}KL\left(\pi_{t_i:t_{i+1}}|\bar{\pi}_{t_i:t_{i+1}}\right), \quad s.t \quad \pi \in \left\{{\int\pi_{t_{i}:t_{i+1}}\rd\rvm_{t_{i+1}}=\mu_{t_i}, \int\mu_{t_i}\rd\rvv_{t_i}=\rho_{t_i}}\right\}
    \end{align*}
    \begin{align*}
      \text{is}:\quad \quad \pi_{t_{i}:t_{i+1}}^{*}&=\frac{\rho_{t_{i}}\bar{\pi}_{t_i:t_{i+1}}}{\int \bar{\pi}_{t_i:t_{i+1}}\rd\rvm_{t_{i+1}}\rd\rvv_{t_i}}.
    \end{align*}
    When $\pi_{t_{i}:t_{i+1}}\equiv \pi^{*}_{t_{i}:t_{i+1}},$ 
    % \begin{align*}
    %   \pi^{\theta}_{t_{i}t_{i+1}}\equiv \pi^{*}_{t_{i}t_{i+1}},
    % \end{align*}
    the following equations need to hold $\forall t \in [t_{i},t_{i+1}]$:
    \begin{subequations}
    \begin{align}
        \norm{\rvz_t+\widehat{\rvz}_t-g\nabla_{\rvv}\log \hat{p}_t}_2^2=0, \label{eq:MM-Kbound}\\
        p_{t_i}(\rvv_{t_i}|\rvx_{t_i})\equiv \hat{q}_{t_i}(\rvv_{t_i}|\rvx_{t_i}),\label{eq:conditional-generation}
    \end{align}
    \end{subequations}
    where $\hat{p}_t$ and $\hat{q}_t$ denote the marginal density and conditional velocity distribution of the reference path measure at time $t$, respectively.
  \end{proposition}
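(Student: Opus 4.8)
The plan is to split the statement into two parts: (a) deriving the closed form of the Bregman projection $\pi^{*}_{t_i:t_{i+1}}$, and (b) reading \eqref{eq:MM-Kbound} and \eqref{eq:conditional-generation} off that closed form, using the diffusion structure of the path measures together with Proposition~\ref{prop:likelihood-obj}. Part (a) is the standard half-bridge computation; the substance is in \eqref{eq:MM-Kbound}. Throughout I write $\bar\rho_{t_i}$ for the position marginal of the reference $\bar\pi$ at time $t_i$, so that $\bar\rho_{t_i}(\rvx_{t_i})=\int\bar\pi_{t_i:t_{i+1}}\rd\rvm_{t_{i+1}}\rd\rvv_{t_i}$, and I work with the single segment $[t_i,t_{i+1}]$, which is the segment to which both the constraint and the reference $\bar\pi$ (driven backward from $\mu_{t_{i+1}}$) pertain.

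For part (a), I would disintegrate both $\pi$ and $\bar\pi$ over the phase-space state $\rvm_{t_i}=(\rvx_{t_i},\rvv_{t_i})$ at time $t_i$, writing $\pi_{t_i:t_{i+1}}=\pi_{t_i}\,\pi(\cdot\mid\rvm_{t_i})$, and apply the KL chain rule $KL(\pi_{t_i:t_{i+1}}\mid\bar\pi_{t_i:t_{i+1}})=KL(\pi_{t_i}\mid\bar\pi_{t_i})+\mathbb{E}_{\rvm_{t_i}\sim\pi_{t_i}}[KL(\pi(\cdot\mid\rvm_{t_i})\mid\bar\pi(\cdot\mid\rvm_{t_i}))]$. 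The conditional term is unconstrained and is driven to $0$ by $\pi^{*}(\cdot\mid\rvm_{t_i})=\bar\pi(\cdot\mid\rvm_{t_i})$; for the marginal term, a second disintegration into a position marginal (pinned to $\rho_{t_i}$ by the constraint) times a conditional velocity, $\pi_{t_i}(\rvx,\rvv)=\rho_{t_i}(\rvx)\,q^{\pi}_{t_i}(\rvv\mid\rvx)$ and $\bar\pi_{t_i}(\rvx,\rvv)=\bar\rho_{t_i}(\rvx)\,\hat q_{t_i}(\rvv\mid\rvx)$, splits it as $KL(\rho_{t_i}\mid\bar\rho_{t_i})+\mathbb{E}_{\rvx\sim\rho_{t_i}}[KL(q^{\pi}_{t_i}(\cdot\mid\rvx)\mid\hat q_{t_i}(\cdot\mid\rvx))]$, whose first summand is $\pi$-independent and whose second is minimized by $q^{\pi^{*}}_{t_i}=\hat q_{t_i}$. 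Recombining, $\pi^{*}_{t_i:t_{i+1}}=(\rho_{t_i}/\bar\rho_{t_i})\,\bar\pi_{t_i:t_{i+1}}$, which is the claimed formula once one notes $\bar\rho_{t_i}(\rvx_{t_i})=\int\bar\pi_{t_i:t_{i+1}}\rd\rvm_{t_{i+1}}\rd\rvv_{t_i}$; and \eqref{eq:conditional-generation} is then immediate, because the computation has already shown that $\pi^{*}$ has conditional velocity law $\hat q_{t_i}$ at $t_i$.

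For \eqref{eq:MM-Kbound} I would invoke Proposition~\ref{prop:likelihood-obj}: the half-bridge minimization under the $\Kbound$ constraint is equivalent to minimizing over the forward control the nonnegative functional $\int_{t_i}^{t_{i+1}}\mathbb{E}_{\widehat{\rvm}_t}[\tfrac12\|\rvz_t+\widehat\rvz_t-g\nabla_\rvv\log\hat p_t\|^2]\rd t$. The choice $\rvz_t=g\nabla_\rvv\log\hat p_t-\widehat\rvz_t$ is an admissible control — it depends only on fixed reference quantities — and it makes the integrand vanish, so the minimum value is $0$; since the pointwise minimizer of $\mathbb{E}[\|\rvz_t(\cdot)-c_t(\cdot)\|^2]$ is almost-everywhere unique, the optimal forward control is $\rvz_t=g\nabla_\rvv\log\hat p_t-\widehat\rvz_t$, i.e. \eqref{eq:MM-Kbound} holds on $[t_i,t_{i+1}]$. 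To close the loop with part (a), I would verify that this control, together with the initial law $\pi^{*}_{t_i}=\rho_{t_i}\hat q_{t_i}$ from (a), reproduces $\pi^{*}$: the control $g\nabla_\rvv\log\hat p_t-\widehat\rvz_t$ is precisely Nelson's time-reversal of the backward SDE \eqref{eq:rev_opt_sde} in the velocity coordinate — where $g$ is state-independent and the position coordinate is noiseless, so the divergence correction collapses to $g^2\nabla_\rvv\log\hat p_t$ — hence it generates the forward representation of $\bar\pi$, and reweighting the initial law of that diffusion by the factor $\rho_{t_i}/\bar\rho_{t_i}$, a function of $\rvx_{t_i}$ alone, leaves the forward transition kernels unchanged and sends $\bar\pi$ to $(\rho_{t_i}/\bar\rho_{t_i})\bar\pi=\pi^{*}$.

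The main obstacle is exactly this last link between the two descriptions of $\pi^{*}$: one must justify that the backward SDE \eqref{eq:rev_opt_sde} admits the claimed time-reversed forward representation — which requires enough regularity and integrability of $\hat p_t$ for Nelson's formula to apply — and that reweighting the initial law of a diffusion path measure by a function of the initial state alone preserves its forward drift, a Girsanov / Doob $h$-transform bookkeeping step. Finiteness of all KL divergences involved and smoothness of the densities should be treated as standing regularity assumptions; granting these, the remaining pieces (disintegration, the KL chain rule, and the two corollaries) are routine.
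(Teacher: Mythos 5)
Your proposal is correct and follows the same two-part skeleton as the paper's argument: first the closed form of the Bregman projection, then reading \eqref{eq:MM-Kbound} and \eqref{eq:conditional-generation} off the requirement that the forward-policy path measure reproduce $\pi^{*}$. The difference is in how the two halves are justified. The paper does not derive the projection formula at all: it imports it from \cite{chen2019multi}, and then obtains the dynamics condition by Euler--Maruyama discretization, factorizing $\pi^{*}$ into the boundary factor $\rho_{t_i}(\rvx_{t_i})\,\hat q_{t_i}(\rvv_{t_i}|\rvx_{t_i})$ times the forward transition kernels of the reference SDE, and invoking Proposition 3 of \cite{de2021diffusion} to turn kernel matching into the discrete mean-matching regression whose zero is \eqref{eq:MM-Kbound}; the conditional-velocity condition \eqref{eq:conditional-generation} is then read off from the leftover boundary factor. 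You instead derive the projection directly via the KL chain rule and disintegration at $\rvm_{t_i}$ (which yields \eqref{eq:conditional-generation} for free and is a self-contained replacement for the citation), and you obtain \eqref{eq:MM-Kbound} in continuous time from Prop.\ref{prop:likelihood-obj} together with Nelson's time reversal of the backward reference SDE and the observation that reweighting the initial law by $\rho_{t_i}/\bar{\rho}_{t_i}$, a function of $\rvx_{t_i}$ alone, leaves the forward kernels unchanged. Your route is more self-contained and makes explicit the regularity needed for the reversal and the $h$-transform bookkeeping (covered by the paper's standing assumptions (i)--(v)); the paper's discrete factorization has the practical advantage of directly exhibiting the implementable loss $\Lmm$. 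I see no gap in either direction.
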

\vspace{-0.15in}
\begin{proof}
    See appendix.\ref{Appendix:prop:bound-opt}
\end{proof}
  \begin{remark}
    When the ground truth distributions of velocity $\gamma_{t_i}$ are available, one can simply sample from $\gamma_{t_i}$ since the joint distribution $\mu_t$ is available in this case. In order to matching the reference path measure in KL divergence sense, one needs to match both the intermediate path measure eq.\ref{eq:MM-Kbound} and the boundary condition eq.\ref{eq:conditional-generation}. In the traditional two-boundary SB case, matching the boundary condition is often disregarded due to either having a predefined data distribution or a tractable prior. However, in our specific case, as the velocity is not predefined, it becomes imperative to address this issue and optimize it through the application of Langevin dynamics.
  \end{remark}
  
  \subsection{Optimization in set $\Kbridge$}\label{sec:kbridge-opt}
  The formulation of optimization under $\Kbridge$ is similar to the previous section but differs by the boundary condition (eq.\ref{eq:conditional-generation-bridge}):
  \begin{proposition}[Optimality w.r.t. $\Kbridge$]\label{prop:bound-bridge}
    Given the reference path measure $\bar{\pi}$ driven by the backward policy $\widehat{\rvz}_t$ from boundary $\mu_{t_N}$ in the reverse time direction, the optimal path measure in the forward time direction of the following problem
    \begin{align*}
      \min_{\pi} \mathcal{J}(\pi)&:=\sum_{i=0}^{N-1}KL\left(\pi_{t_i:t_{i+1}}|\bar{\pi}_{t_i:t_{i+1}}\right), \quad s.t \quad \pi \in \Kbridge  =\left\{\cap_{i=1}^{N-1}\mathcal{K}^{2}_{t_i}\right\}
    \end{align*}
    \vspace{-0.2in}
    \begin{align*}
      \text{is:}\quad \pi_{t_{0}:t_{N}}^{*}&=\frac{q_{t_0}\bar{\pi}_{t_0:t_{N}}}{\int \bar{\pi}_{t_0:t_{N}}\rd\rvm_{t_N}\rd\rvv_{t_0}}.
    \end{align*}
    when $\pi_{t_{0}:t_{N}}\equiv \pi^{*}_{t_{0}:t_{N}}$,
    % \begin{align*}
    %   \pi^{\theta}_{t_{0}t_{N}}\equiv \pi^{*}_{t_{0}t_{N}}
    % \end{align*}
    the following equations need to hold $\forall t \in [t_{0},t_{N}]$:
    \begin{subequations}
        \begin{align}
            &\norm{\rvz_t+\widehat{\rvz}_t-g\nabla_{\rvv}\log \hat{p}_t}_2^2=0 \label{eq:MM-Kbound-bridge}\\
            &p_{t_0}(\rvv_{t_0},\rvx_{t_0})\equiv \hat{q}_{t_0}(\rvv_{t_0},\rvx_{t_0})\label{eq:conditional-generation-bridge}
        \end{align}        
    \end{subequations}
  \end{proposition}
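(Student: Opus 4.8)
The plan is to treat this Proposition as the continuous-space analogue of the ``bridge'' Bregman projection of \cite{chen2019multi}, and to prove it in two stages that parallel the proof of Proposition~\ref{prop:bound-opt}: first identify the explicit minimiser as a reweighting of the reference $\bar\pi$, and then translate that closed form into the two pointwise conditions \eqref{eq:MM-Kbound-bridge} and \eqref{eq:conditional-generation-bridge}.

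\textbf{Stage 1 (explicit form of the projection).} The constraints $\mathcal{K}^{2}_{t_i}$, $i=1,\dots,N-1$, only assert that the left- and right-marginals at each interior node $t_i$ coincide; chaining them forces the interval pieces $\pi_{t_i:t_{i+1}}$ to concatenate into a single consistent path measure $\pi_{t_0:t_N}$, so $\Kbridge$ is exactly the set of path measures on $[t_0,t_N]$. I would then minimise $\mathcal{J}(\pi)=\sum_i KL(\pi_{t_i:t_{i+1}}|\bar\pi_{t_i:t_{i+1}})$ over this set by disintegrating both $\pi$ and $\bar\pi$ with respect to $\rvm_{t_0}$ and invoking the chain rule for relative entropy: the conditional law of the trajectory on $(t_0,t_N]$ given $\rvm_{t_0}$ is unconstrained, so the $\mathcal{J}$-minimiser keeps it equal to that of $\bar\pi$, and only the $\rvm_{t_0}$-marginal is free; a short convexity/Lagrange argument (equivalently, the standard characterisation of relative-entropy projection onto an affine set) then pins it down and yields
\[
\pi^*_{t_0:t_N}=\frac{q_{t_0}\,\bar\pi_{t_0:t_N}}{\int \bar\pi_{t_0:t_N}\,\rd\rvm_{t_N}\,\rd\rvv_{t_0}},
\]
the denominator being precisely the $\rvx_{t_0}$-marginal of $\bar\pi$ that normalises the reweighting, exactly as in Proposition~\ref{prop:bound-opt} but over the whole horizon rather than a single sub-interval.

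\textbf{Stage 2 (pointwise conditions).} Since the reweighting factor depends on $\rvm_{t_0}$ alone, the law of the trajectory on $(t_0,t_N]$ given $\rvm_{t_0}$ is unchanged from $\bar\pi$, which is the path measure of the fixed backward SDE \eqref{eq:rev_opt_sde} driven by $\widehat{\rvz}_t$. Matching a forward path measure generated by $\rvz_t$ to this reference in KL is exactly the situation analysed in Proposition~\ref{prop:likelihood-obj}: equality of the two path measures (equivalently, vanishing of the likelihood bound) is equivalent to the mean-matching identity $\norm{\rvz_t+\widehat{\rvz}_t-g\nabla_{\rvv}\log\hat p_t}_2^2=0$, now enforced on all of $[t_0,t_N]$, which is \eqref{eq:MM-Kbound-bridge}. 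The only remaining freedom is the initialisation at $t_0$: because $\Kbridge$ imposes \emph{no} position-marginal constraint at $t_0$ --- in contrast to $\Kbound$, where the position marginal was fixed to $\rho_{t_i}$ and only the conditional velocity had to be matched --- attaining $\pi^*$ requires matching the entire joint, $p_{t_0}(\rvv_{t_0},\rvx_{t_0})\equiv\hat q_{t_0}(\rvv_{t_0},\rvx_{t_0})$, i.e. \eqref{eq:conditional-generation-bridge}; in practice this is the step realised by the Langevin velocity sampler of \S\ref{sec:opt-param-obj}.

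\textbf{Main obstacle.} The delicate point is Stage 1: one must carefully collapse the chained interior constraints into ``one consistent path measure on $[t_0,t_N]$'' and verify that the edge-wise objective $\mathcal{J}$ --- a sum of sub-interval relative entropies rather than a single relative entropy of the joint --- still forces the minimiser into the reweighted product form above, without ever invoking the explicit geometric-averaging Sinkhorn update of \cite{chen2019multi} (this is precisely the averaging that does not survive passage to continuous state space). Once the closed form is established, Stage 2 is essentially a re-run of the computation behind Proposition~\ref{prop:likelihood-obj} on the longer interval and should be routine.
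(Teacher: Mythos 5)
Your proposal is correct and follows essentially the same route as the paper: the paper's own proof of this proposition is literally a rerun of the $\Kbound$ case (Prop.~\ref{prop:bound-opt}), i.e., identify the Bregman projection as a reweighting of $\bar{\pi}$ that preserves its conditional dynamics, then convert path-measure equality into the mean-matching identity via Prop.~\ref{prop:likelihood-obj} and into a boundary condition, which for $\Kbridge$ becomes the full joint at $t_0$ since no marginal is imposed there. Your only deviation is cosmetic: you justify the closed form by disintegration and the chain rule for relative entropy over the whole horizon, whereas the paper reuses the discrete transition-kernel factorization (and the closed-form projection from \cite{chen2019multi}) from the $\Kbound$ proof.
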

  \begin{proof}
    See appendix.\ref{Appendix:prop:bound-bridge}
\end{proof}
\vspace{-0.1in}
  Conceptually, the above optimization objective with $\Kbridge$ constraint aims at finding a \emph{continuous} path measure close to reference %\tao{what does `nominating' mean?}\Tianrong{Fixed} 
  path measure $\bar{\pi}$ while any intermediate marginals constraints will not be considered. The boundary condition of reference path measure in the next iteration $p_{t_0}(\rvv_{t_0},\rvx_{t_0})$ is determined by eq.\ref{eq:conditional-generation-bridge}. Fortunately, the empirical samples from this distribution are available, though the analytic representation of the distribution $\hat{q}_{t_0}(\rvv_{t_0},\rvx_{t_0})$ is unknown. Hence we can utilize these samples as empirical sources from boundary distribution $\hat{q}_{t_0}(\rvv_{t_0},\rvx_{t_0})$ for the next BP. For further explanation and intuition, one can find it in \markblue{Appendix}.

\subsection{Parameterization and Training Objective Function}\label{sec:opt-param-obj}
Inspired by the success of prior work \cite{chen2021likelihood}, we parameterize path measure $\pi$ by forward policy $\rvz_t^{\theta}$  or backward policy $\widehat{\rvz}_t^{\phi}$ combined with one of constraints in $\Kbound$ or $\Kbridge$ (see Fig.\ref{fig:3-marg-diagram} in Appendix for visualization). %\tao{previously we had a diagram that shows which one to pick each time. it seems we don't have that diagram any more, but can we still have some description of which one to choose at each iteration?}
%\tao{boundary condition on what? $z$ and $\hat{z}$? what are their boundary conditions?}\Tianrong{Fixed} \tao{can we be more specific than vague words like `certain constraints'? what exactly? remember we can refer to other places}.\Tianrong{Fixed.} 
We adopt Euler–Maruyama discretization and denote the timestep as $\delta_t$. Notably, eq.\ref{eq:conditional-generation} and eq.\ref{eq:conditional-generation-bridge} can be implied by minimizing phase space NLL in Prop.\ref{prop:likelihood-obj}. This leads to the following objective function, termed as phase space mean matching objective, which will be used to train neural networks that represent $\rvz_t^\theta$ and $\zhatt^\phi$ after time discretization:
\begin{align*}
  \mathcal{L}_{MM}&=\E\left[|| \delta_t\rvz^{\theta}_{t}(\rvm_{t+\delta_t})+\delta_t\widehat{\rvz}^{\phi}_{t+\delta_t}(\rvm_{t+\delta_t}) -\left(\rvm_t+\delta_t \rvz_t^{\theta}-\rvm_{t+\delta_t}\right)||^2\right].
\end{align*} 

The velocity boundary condition for the reference path measure in the succeeding BP is encoded in eq.\ref{eq:conditional-generation} or eq.\ref{eq:conditional-generation-bridge}, but the representation of conditional distribution eq.\ref{eq:conditional-generation} is not clear. We leverage the favorable property of SB  to parameterize and sample from such distribution. 
    
\begin{proposition}[\cite{anderson1982reverse,nelson2020dynamical}]\label{prop:langevin-Law}
  If $\pi^{\theta}$ and $\bar{\pi}^{\phi}$ shares same path measure, then
  \begin{align}\label{eq:conditional-score}
      \tilde{p}_{t_i}^{\theta,\phi}(\rvv_{t_i},\rvx_{t_i}) \equiv q^{\phi}_{t_i}(\rvv_{t_i},\rvx_{t_i})\propto q^{\phi}_{t_i}(\rvv_{t_i}|\rvx_{t_i}),\quad \text{where:} \quad \nabla_{\rvv}\log \tilde{p}_t^{\theta,\phi}=\left(\rvz^{\theta}_t+\widehat{\rvz}^{\phi}_t\right)/g.
  \end{align}
  % \vspace{-0.3in}
  % \begin{align}\label{eq:conditional-score}
  %     \text{where:} \quad \nabla_{\rvv}\log \tilde{p}_t^{\theta,\phi}=\left(\rvz^{\theta}_t+\widehat{\rvz}^{\phi}_t\right)/g.
  % \end{align}
\end{proposition}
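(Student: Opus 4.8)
The plan is to read both assertions off the time-reversal (Anderson--Nelson) structure of the forward kinetic diffusion, with care about the fact that the noise is degenerate (it acts on $\rvv$ only). Start with the easy half. Since $\pi^{\theta}$ and $\bar\pi^{\phi}$ are, by hypothesis, the same path measure, their one-time marginals coincide at every time; denote the common joint density $\tilde p^{\theta,\phi}_t(\rvx,\rvv)$, which at $t=t_i$ is precisely the joint $q^{\phi}_{t_i}(\rvv_{t_i},\rvx_{t_i})$ carried by the backward process. Because $\int\tilde p^{\theta,\phi}_{t_i}\,\rd\rvv_{t_i}$ does not depend on $\rvv_{t_i}$, for each fixed $\rvx_{t_i}$ the joint is, as a function of $\rvv_{t_i}$, a constant multiple of the conditional $q^{\phi}_{t_i}(\rvv_{t_i}\mid\rvx_{t_i})$; this is the ``$\propto$'' in \eqref{eq:conditional-score}, and it is exactly what guarantees that a velocity-only Langevin diffusion run at a frozen $\rvx_{t_i}$ relaxes to the conditional velocity law.

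For the score identity I would argue as follows. The forward SDE for $\rvm_t=(\rvx_t,\rvv_t)^{\T}$ with forward policy has drift $\vf+\rvg\,\rmZ$ and a diffusion matrix $\rvg(t)$ supported only on the velocity block, so $\tilde p^{\theta,\phi}_t$ solves the kinetic Fokker--Planck equation of Appendix \ref{Appendix:thm:opt-sde}. Rewriting the velocity-diffusion term through the score, $\tfrac12 g^2\Delta_{\rvv}\tilde p_t=\nabla_{\rvv}\!\cdot\!\big(\tfrac12 g^2\,\tilde p_t\,\nabla_{\rvv}\log\tilde p_t\big)$, and reversing time, one finds that the reversed process is again a diffusion with the same $\rvg(t)$ whose drift is $-\vf$ on the noise-free position block and $-g\rvz^{\theta}_t+g^2\nabla_{\rvv}\log\tilde p^{\theta,\phi}_t$ on the velocity block (expressed in reversed time). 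Because $\bar\pi^{\phi}$ equals this time-reversal and is generated by \eqref{eq:rev_opt_sde}, whose velocity drift is $-g\widehat{\rvz}^{\phi}_t$, matching the velocity components — and flipping the overall sign when the reversed dynamics is re-expressed in the original time direction — yields $-g\widehat{\rvz}^{\phi}_t=g\rvz^{\theta}_t-g^2\nabla_{\rvv}\log\tilde p^{\theta,\phi}_t$, i.e. $\nabla_{\rvv}\log\tilde p^{\theta,\phi}_t=(\rvz^{\theta}_t+\widehat{\rvz}^{\phi}_t)/g$. Positivity of $\tilde p_t$ and the $C^{1,2}$ regularity needed for $\nabla_{\rvv}\log\tilde p_t$ to be meaningful are inherited from the phase-space analogue of Theorem \ref{thm:SB-PDE}.

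The one genuine obstacle is that Anderson's time-reversal theorem is usually stated for uniformly elliptic diffusions, whereas the mSB generator is only hypoelliptic (noise on $\rvv$ alone). So rather than invoking it as a black box I would verify the reversed Kolmogorov forward equation directly: the position block contributes no second-order term and only a sign reversal of its transport term, while the velocity block is handled by the score rewriting above; one then checks that the candidate \eqref{eq:rev_opt_sde} indeed produces the time-reversed Fokker--Planck equation. After that, the only remaining work is bookkeeping the two sign flips (reversing the deterministic $\rvx$-equation, and converting a reversed-time drift back to forward time) together with the trivial marginal/conditional identity from the first paragraph, all of which is routine.
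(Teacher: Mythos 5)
Your proposal is correct, and it is more self-contained than what the paper actually does. The paper treats this proposition as an imported result: it is stated with the citations to Anderson and Nelson, and the only place the identity is ``derived'' is inside the proof of the appendix theorem on the optimal forward/backward SDEs, where the relation $\rvu^{f*}_t+\rvu^{b*}_t=\rvg\nabla_{\rvv}\log p^{SB}_t$ is simply invoked ``according to'' those references and then unpacked blockwise to read off $\widehat{\rvz}_t = g\nabla_\rvv\log\widehat{\Psi}_t$; the marginal-versus-conditional proportionality is likewise taken as immediate. You instead verify the time-reversal formula directly at the level of the kinetic Fokker--Planck equation, rewriting $\tfrac12 g^2\Delta_\rvv \tilde p_t$ as a score-weighted transport term and checking that the candidate backward SDE reproduces the reversed forward equation, with explicit sign bookkeeping between reversed-time and forward-time parameterizations; your matching $-g\widehat{\rvz}^{\phi}_t = g\rvz^{\theta}_t - g^2\nabla_\rvv\log\tilde p^{\theta,\phi}_t$ gives exactly the stated identity, and the first-paragraph observation disposes of the ``$\propto$'' claim. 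What your route buys is a treatment of the genuine subtlety the paper glosses over: the diffusion here is degenerate (noise only in $\rvv$), so the textbook elliptic statement of Anderson's theorem does not apply verbatim, and the score correction must be seen to act only on the velocity block -- which your FPE computation makes explicit (the Haussmann--Pardoux form of the reversal covers this degenerate case under the regularity the paper assumes). What the paper's route buys is brevity: by citing the reversal result and the phase-space factorization $p_t^{SB}=\Psi_t\widehat{\Psi}_t$, the proposition becomes a one-line consequence, at the cost of leaving the hypoelliptic caveat and the sign conventions implicit.
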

\vskip -0.1in
Prop.\ref{prop:langevin-Law} suggests that one can use $p_{t_i}(\rvv_{t_i}|\rvx_{t_i}):=\tilde{p}_{t_i}^{\theta,\phi}$ to imply condition (\ref{eq:conditional-generation}) and obtain samples from such distribution by simulating Langevin dynamics. Namely, we first sample position from ground truth $\rvx_{t_i}\sim \rho_{t_i}$, and then sample $\rvv_{t_i}\sim \tilde{p}_t^{\theta,\phi}$ using eq.\ref{eq:conditional-score}. One can further adopt the same regularization \cite{tseng2021regularizing} to enforce the condition of Prop.\ref{prop:langevin-Law}.
  \subsection{Training Scheme}\label{sec:training-scheme}
  Here we introduce the scheme to traverse BI (see Fig.\ref{fig:DMSB_procedure}). In one BI, all constraints must be iterated once. For the sake of $\Lmm$, the reference path measure should be induced by opposite direction. A single BI cannot be recursively repeated due to the conflict of reference path measure direction. For example (see Fig.\ref{fig:DMSB_procedure}), at the end of $d$-th BI, $\bar{\pi}$ is yielded by forward policy while the first BP of $d$-th BI is also optimizing forward policy which violates $\Lmm$. Instead, we reschedule the optimization order. Specifically, in $(d+1)$-th BI, we optimize backward policy at the first BP and the last BP.

\section{Experiments}\label{sec:exp}

\textbf{Setups:} We test DMSB on 2D synthetic datasets and real-world scRNA-seq dataset \cite{moon2019visualizing}. We choose state of the art algorithms MIOFlow \cite{huguet2022manifold} and NLSB \cite{koshizuka2022neural} as our baselines. We tune both models to the best of our hardware capacity. We choose Sliced-Wasserstein Distance (SWD)\cite{bonneel2015sliced} and Maximum Mean Discrepancy (MMD)\cite{gretton2012kernel} together with visualization as our criterion. The detailed setup of training and evaluation can be found in Appendix.\ref{Appendix:exp-detail}.

\textbf{Synthetic Datasets:} The Petal \cite{huguet2022manifold} and Gaussian Mixture Model (GMM)  dataset are simple yet
challenging, 
as they mimic natural dynamics arising in cellular differentiation, including bifurcations and merges. We compare our algorithm with MIOFlow in Fig.\ref{fig:petal}.
DMSB can infer trajectories aligned with ground truth distribution more faithfully at timesteps when snapshots are taken.
\begin{wrapfigure}{r}{0.4\textwidth}
  % \vskip -0.1in
  \centering
      \includegraphics[width=0.4\textwidth]{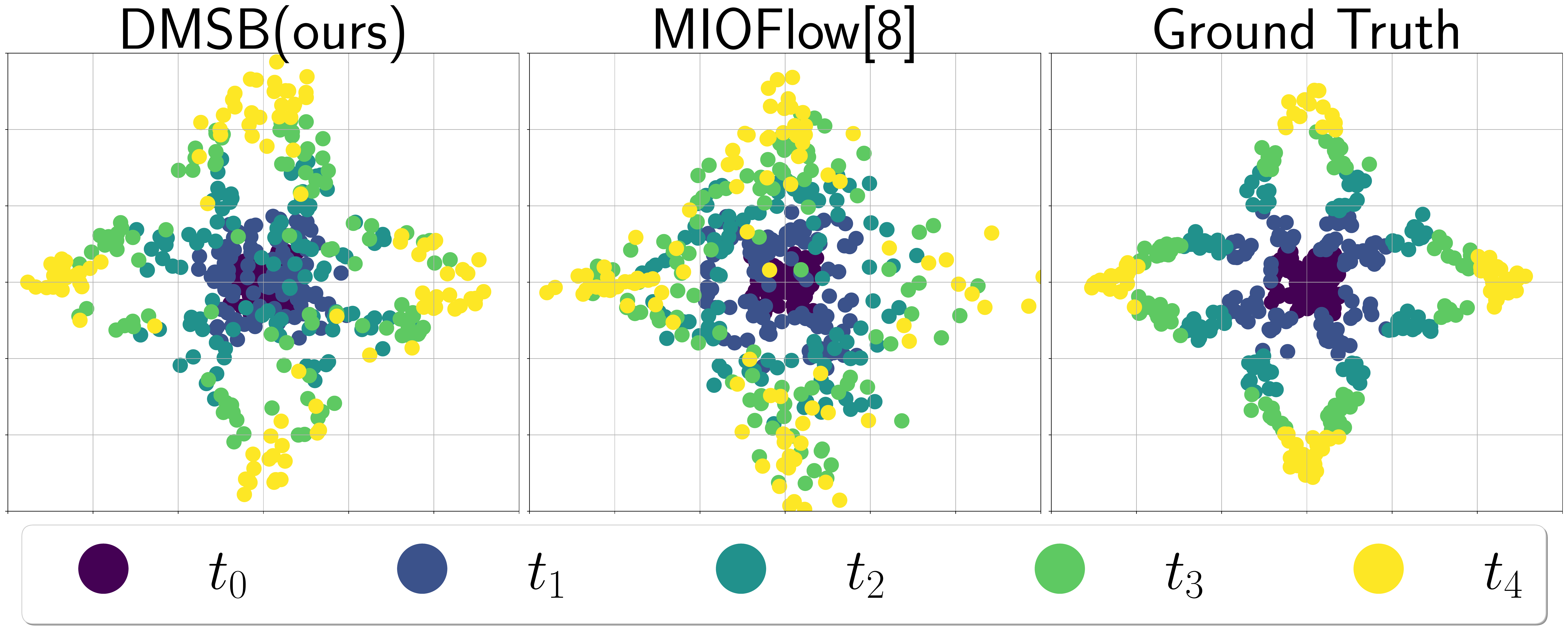}
      \caption{Comparsion with MIOFlow and ground truth on challenging petal dataset. DMSB is able to generate trajectories whose time marginal matches ground truth faithfully and outperforms prior work. Time is indicated by colors.}\label{fig:petal}
% \vskip -0.1in
\end{wrapfigure}
In GMM experiments (see Fig.\ref{fig:GMM}), we choose standard Gaussian at initial and terminal time steps while four-modal GMM and eight-modal GMM are placed at intermediate time steps. Besides good position trajectory, it is almost serendipity that DMSB can also learn the reasonable velocity trajectory \emph{without} any access to ground truth velocity information. This paves the way for our later velocity estimation for the RNAsc dataset.
\begin{figure*}[t]
  \center
  \hspace*{-0.5cm} 
  \includegraphics[width=\textwidth]{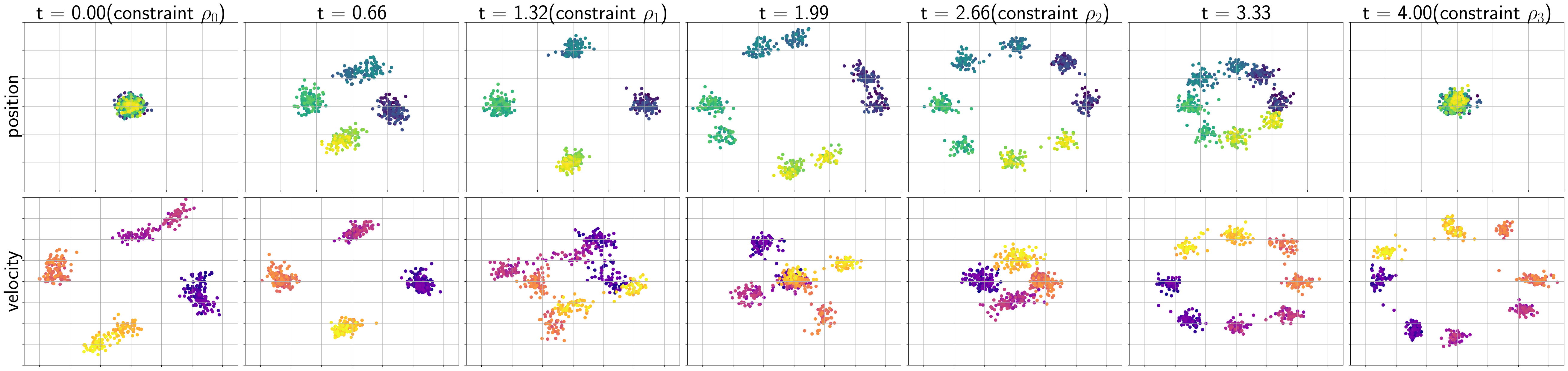}
  \caption{Validation of our DMSB model on complex GMM synthetic dataset. The velocity and position of the same sample correspond to the same shade level. \emph{Upper}: Samples' evolution in the position space. \emph{Bottom}: Learnt samples' evolution in the velocity space.}\label{fig:GMM}
  \vskip -0.2in
\end{figure*}
\textbf{scRNA-seq Dataset:}
The emergence of single-cell profiling technologies has facilitated the acquisition of high-resolution single-cell data, enabling the characterization of individual cells at distinct developmental states \cite{bunne2022proximal}. However, because the cell population is eliminated after the measurement, one may only gather statistical data for single samples at particular timesteps, which neither preserves any correlations over time nor provides access to the ground truth trajectory. The diversity of embryonic stem cells after development from embryoid bodies, which comprises mesoderm, endoderm, neuroectoderm, and neural crest in 27 days, is demonstrated by the scRNA-seq dataset. The snapshot of cells are collected between ($t_0$: day 0 to 3,  $t_1$: day 6 to 9, $t_2$: day 12 to 15, $t_3$: day 18 to 21,$t_4$: day 24 to 27). Snapshot data are prepossessed by the quality control \cite{moon2019visualizing} %\tao{what is this? given in their paper? described in our appendix?}\Tianrong{Fixed} 
and then projected to feature space by principal component analysis (PCA). We inherit processed data from \cite{tong2020trajectorynet}. We validate DMSB on 5-dim and 100-dim PCA space to show superior performance on high-dimension problems compared with baselines. We further show that DMSB can estimate better velocity distribution compared with baselines when the ground truth is absent during training and testing.

We testify the performance of our model by computing MMD and SWD with full snapshots and when one of snapshots is left out (LO). We postpone the comparison of all the models on 5-d RNA space to the appendix (see Fig.\ref{fig:5-dim RNA} and Table.\ref{table:5D-velocity-exp}) because the problem is relatively simple and all models can infer accurate trajectory. Table.\ref{table:100D-position} summarizes the average MMD and SWD between estimated marginal and ground truth over different snapshot timesteps. DMSB outperforms prior work by a large margin in high (100) dimensional scenarios. The visualization (Fig.\ref{fig:RNA100}) in PCA space further justifies the numerical result and highlights the variety and quality of the samples produced by DMSB.
\begin{figure}[htbp]
    % \centering
  \begin{minipage}[c]{0.6\linewidth}
  \includegraphics[width=\linewidth]{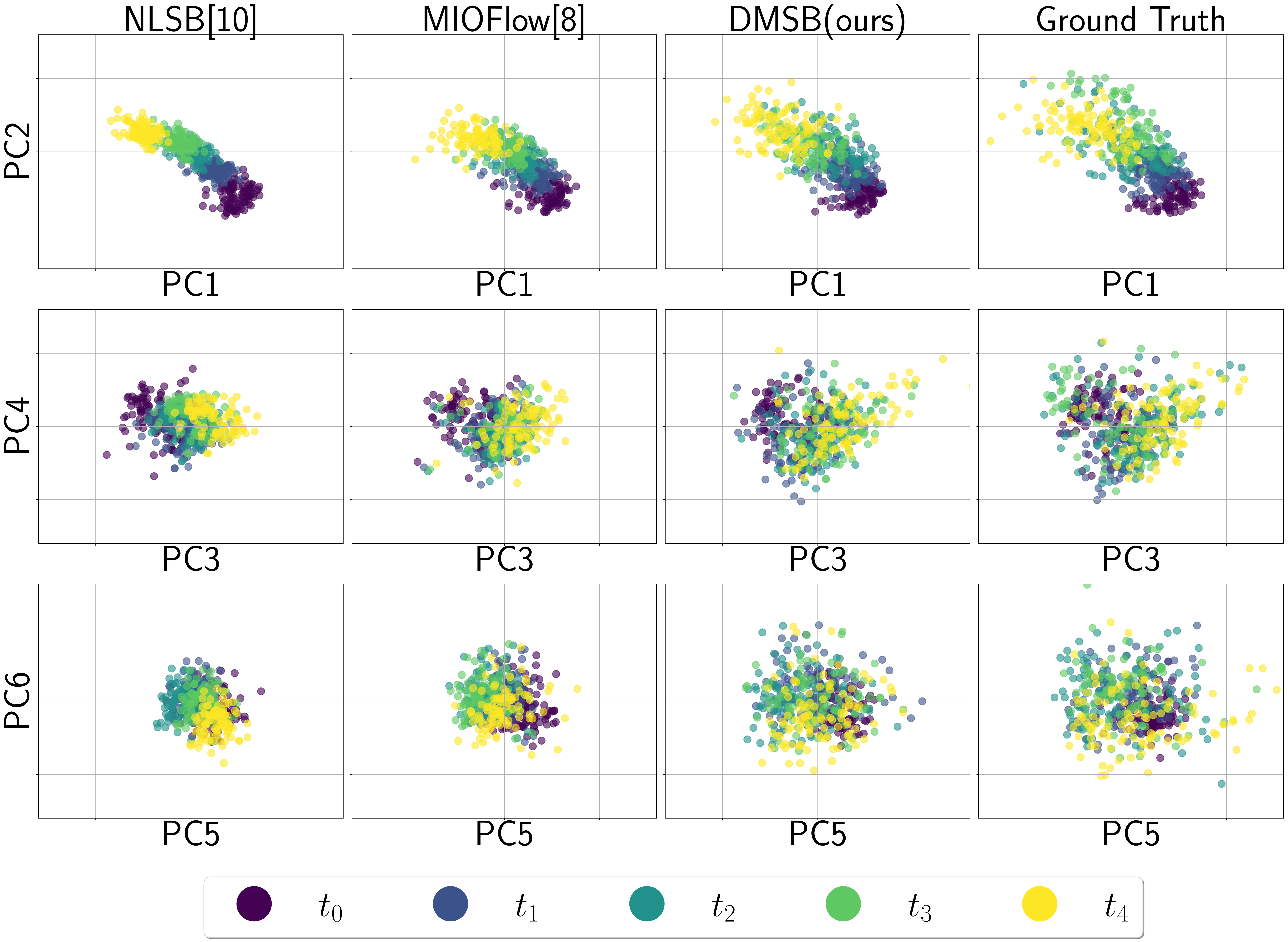}
  \end{minipage}
  \hfill
  \begin{minipage}[c]{0.4\linewidth}
  \includegraphics[width=0.85\linewidth]{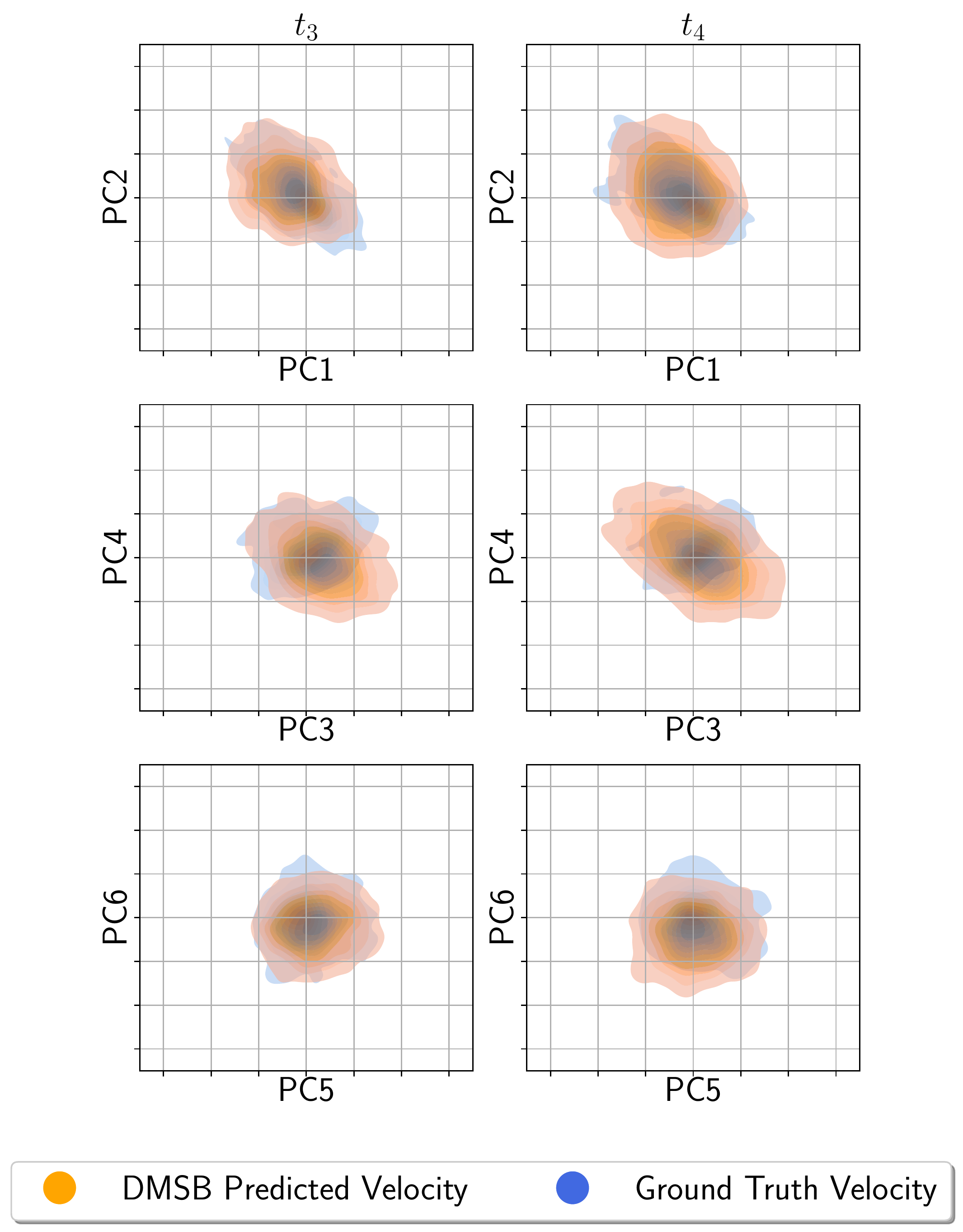}
  \end{minipage}%
  % \caption{Comparison of population-level dynamics on 100-dimensional PCA space at the moment of observation for scRNA-seq data using MIOFlow, NLSB, and DMSB. We display the plot of the first 6 principle components (PC). Baselines can only learn the trajectory's fundamental trend, whereas DMSB can match the target marginal along the trajectory across different dimensions. Baselines are not able to estimate the reasonable velocity hence we postpone the comparison to Appendix.\ref{fig:100-dim RNA velocity}}\label{fig:RNA100}
  \caption{Comparison of population-level dynamics on 100-dimensional PCA space at the moment of observation for scRNA-seq data using MIOFlow, NLSB, and DMSB. We display the plot of the first 6 principle components (PC). Baselines can only learn the trajectory's fundamental trend, whereas DMSB can match the target marginal along the trajectory across different dimensions. The right figure shows Kernel Density Estimation \cite{rosenblatt1956remarks} of samples generated by DMSB and ground truth at $t_3$ and $t_4$. The generated samples for all timesteps and comparison with baseline are in Appendix.\ref{Appendix:Additional Experiment}.}\label{fig:RNA100}
  \end{figure}

\begin{table}
  \caption{Numerical result of MMD and SWD on 100 dimensions single-cell RNA-seq dataset and results for leaving out (LO) marginals at different observation. DMSB outperforms prior work by a large margin for both metrics and all leave-out case. See Appendix.\ref{table:100D-RNA-3seeds} for Results over 3 seeds.}\label{table:100D-position}
\makebox[\textwidth][c]{
    \begin{tabular}{cccccccccc}
\toprule
      &       & \multicolumn{3}{c}{MMD $\downarrow$} & \multicolumn{3}{c}{SWD $\downarrow$} \\
\cmidrule(lr){2-5}\cmidrule(lr){6-9}
Algorithm & w/o LO & LO-$t_1$  & LO-$t_2$ & LO-$t_3$  & w/o LO & LO-$t_1$  & LO-$t_2$ & LO-$t_3$ \\ \midrule
NLSB[10]      & 0.66    & 0.38 & 0.37 & 0.37 & 0.54 & 0.55 & 0.54&0.55\\
MIOFlow[8]   & 0.23    & 0.23 & 0.90 & 0.23 & 0.35 & 0.49 & 0.72&0.50 \\
DMSB(ours)& \textbf{0.03}    & \textbf{0.04}  & \textbf{0.04} & \textbf{0.04} & \textbf{0.20}  & \textbf{0.20} & \textbf{0.19}& \textbf{0.18} \\
\bottomrule
\end{tabular}%
}
\end{table}%
Interestingly, Fig.\ref{fig:GMM} demonstrates that DMSB can reconstruct reasonable evolution of the velocity distribution which was not accessible to the algorithm. We further validate such property in 100-D RNAsc dataset. During the training and testing, all the models do not have access to the ground truth velocity. We run the experiments of 100-D and 5-D RNAsc datasets and average the discrepancy between ground truth velocity and estimated velocity over snapshot time. The numerical values are listed in the Table.\ref{table:100D-velocity-exp} and Table.\ref{table:5D-velocity-exp}. The plot of velocity and position can be found in Fig.\ref{fig:5-dim RNA} and Fig.\ref{fig:5-dim RNA velocity}. The plot illustrates that while all models are capable of learning reasonable trajectories, only DMSB has the ability to estimate a plausible velocity distribution. This property holds even for 100-D RNA dataset (see Fig.\ref{fig:RNA100},\ref{fig:100-dim RNA velocity},\ref{fig:100-dim RNA DMSB velocity}). This is notable, despite the velocity estimated by DMSB does not perfectly match the ground truth, because it should be noted that the proposed phase space SDE and the optimality of OT are artificial and may not necessarily represent the actual RNA evolution. Moreover, as individual evolutions cannot be tracked, possibilities such as \{A$\to$A, B$\to$B\} versus \{A$\to$B, B$\to$A\} can not be discerned, which renders exact velocity recovering almost impossible.

%Nevertheless, this represents a significant step forward in our understanding of the system. \tao{so, did we decide not to provide a simple A->A, B->B vs A->B, B->A argument to show why one should not anticipate velocity estimation to be perfect?}\Tianrong{Yes, I think so. We do not have enough space.}

\vspace{-0.1in}
\section{Conclusion and Limitations}\label{sec:conclusion}
In this paper, we propose DMSB, a scalable algorithm that learns the trajectory which fits the different marginal distributions over time. We extend the mean matching objective to phase space which enables efficient mSB computing. We propose a novel training scheme to fit the mean matching objective without violating BI which is the root of solving mmmSB problem. We demonstrate the superior result of DMSB compared with the existing algorithms.

A main limitation of this work is, the rate of convergence to the actual mmmSB has not been quantified after neural network approximations are introduced. Even though \cite{de2021diffusion} theoretically analyzed the convergence of mean matching iteration, supporting its outstanding performance \cite{chen2021likelihood}, the iteration still fails to converge to the actual SB \cite{fernandes2021shooting} precisely due to practical neural network estimation errors accumulating over BI. However, recent work \cite{chen2023provably} shows the convergence of SB when training error exists. In addition, DMSB cannot simulate the process with death and birth of cells which can be potentially described as unbalanced optimal transport \cite{chen2022most}.

\section{Acknowledgement}
This research was supported by the ARO Award \# W911NF2010151, and the DoD Basic Research Office Award HQ00342110002.

\newpage
\bibliographystyle{unsrtnat} % need unsrtnat rather than unsrt 
\bibliography{reference.bib}
\newpage 
\appendix
\section{Appendix}
\section{Proof in \S \ref{sec:mmmSB} and \S \ref{sec:DMSB}}
Before stating our proofs, we provide the assumptions used throughout the paper.
These assumptions are adopted from stochastic analysis for SGM \citep{song2021maximum,yong1999stochastic,anderson1982reverse}, SB \citep{caluya2021wasserstein}, and FBSDE \citep{exarchos2018stochastic,chen2021large,chen2022deep}.
\begin{enumerate}[label=(\roman*)]
  \item $\mu_{t_i}$ with finite second-order moment for all $t_i$. \label{assum:i}

  \item $\vf$ and $g$ are continuous functions, and $|g(t)|^2>0$ is uniformly lower-bounded w.r.t. $t$.
        % $g$ are uniformly bounded w.r.t. $t$

  \item $\forall t\in[0,T]$, we have $ \nabla_\rvv \log p_t(\rvm_t,t), \nabla_\rvv \log \Psi(\cdot,\cdot,\cdot), \nabla_\rvv \log \widehat{\Psi}(\cdot,\cdot,\cdot), \rmZ(\cdot,\cdot,\cdot;\theta)$, and $\widehat{\rmZ}(\cdot,\cdot,\cdot;\phi)$ Lipschitz and at most linear growth w.r.t. $\rvx$ and $\rvv$.

  \item $\Psi, \widehat{\Psi} \in C^{1,2}$.

  \item $\exists k>0: p_t^{SB}(\rvm) = \calO(\exp^{-\norm{\rvm}_k^2})$ as $\rvm \rightarrow \infty$. \label{assum:v}

  % \item $|g(t)|^2>0$ is uniformly lower-bounded w.r.t. $t$
\end{enumerate}
Assumptions (i) (ii) (iii) are standard conditions in stochastic analysis to ensure the existence-uniqueness of the SDEs; hence also appear in SGM analysis \citep{song2021maximum}.
Assumption (iv) allows applications of It{\^o} formula and properly defines the backward SDE in FBSDE theory.
Finally, assumption (v) assures the exponential limiting behavior when performing integration by parts. w.o.l.g, we denote $\vf=[\rvv,\mathbf{0}]^\T$.
% We further define the identity matrix with the same dimension as $\rvm_t$, $\rvx_t$ and $\rvv_t$ and relationship can be expressed as,
%     \begin{align*}
%       \Im=
%       \begin{bmatrix}
%       &\Ix,&\mathbf{0} \\  
%       &\mathbf{0},&\Iv
%       \end{bmatrix}
%   \end{align*}
% and $\rmD_{\rvm}=\frac{1}{2}(g^\T\otimes \Im)$, $\rmD_{\rvx}=\frac{1}{2}(g^\T\otimes \Ix)$,$\rmD_{\rvv}=\frac{1}{2}(g^\T\otimes \Iv)$ respectively.

\subsection{Proof of Proposition.\ref{prop:likelihood-obj}}\label{appendix:prop:likelihood-obj-sec}
The results of the Prop.\ref{prop:likelihood-obj} is part of results of Prop.\ref{Appendix:Prop:likelihood-obj} which gives the results for both forward and backward likelihood objective.

% The proof of Theorem.\ref{eq:thm:opt-forward-sde} is part of the result of theorem.\ref{Appendix:thm:opt-sde}. As mentioned in \S.\ref{sec:prem-SB}, the minimization of KL divergence has close relationship with the minimization of control cost in the stochastic optimal control perspective. Hence, the optimization problem is transfer into the following form in which the SDE dynamics is represented in the PDE 
\begin{theorem}\label{Appendix:thm:opt2PDEs}
    The optimization problem
    \begin{align}
      &\min \int_{0}^{1}\int \frac{1}{2}\norm{\rva}_2^2 \mu \rd \rvx \rd \rvv \rd t, \numberthis \\
      &s.t \quad \begin{cases}
        \frac{\partial \mu(\rvm_t)}{\partial t}=-\nabla_{\rvm}\cdot \left\{\left[\left(\vf+g \rvu\right)\right]\mu\right\}+\DD\Delta_{\rvv}\mu,\\
        \mu_0=p(0,\rvx,\rvv), \quad \mu_1=p(T,\rvx,\rvv),
    \end{cases}
    \end{align}
    will induce the coupled PDEs,
    \begin{align}
      \frac{\partial \mu(\rvm_t)}{\partial t}&=-\nabla_{\rvm}\cdot \left[\left(\vf+g^2\nabla_{\rvv}\phi\right)\mu\right]+\DD\Delta_{\rvv}\mu,\\
      \frac{\partial \phi(\rvm_t)}{\partial t}&=-\frac{1}{2}\norm{g\nabla_{\rvv}\phi}_2^2-\rvv^\T\nabla_{\rvx}\phi-\DD\Delta_{\rvv}\phi,
    \end{align} 
    and the optimal control of the problem is
    \begin{align*}
      \rva^{*}=g\nabla_{\rvv}\phi.
    \end{align*} 
  \end{theorem}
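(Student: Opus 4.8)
The statement is nothing but the first-order optimality (Pontryagin / Benamou--Brenier) system for a dynamical control problem whose state is the kinetic Fokker--Planck density $\mu$ and whose control is $\rva$ (so the controlled drift is written $\vf+g\rva$, with $g$ acting only on the velocity block). The plan is to derive it by the Lagrange-multiplier method: introduce a co-state field $\phi(t,\rvx,\rvv)$ enforcing the PDE constraint and form
\[
  \mathcal{L} = \int_0^1\!\!\int \Big[\tfrac12\norm{\rva}_2^2\,\mu + \phi\Big(\partial_t\mu + \nabla_\rvm\!\cdot\!\big[(\vf+g\rva)\mu\big] - \tfrac12 g^2\Delta_\rvv\mu\Big)\Big]\,\rd\rvx\,\rd\rvv\,\rd t .
\]
Stationarity of $\mathcal{L}$ in $(\phi,\rva,\mu)$ will deliver, respectively, the constraint, the optimal control, and the HJB equation for $\phi$; substituting the control law into the constraint produces the first displayed PDE.

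First I would integrate by parts so that no derivative acts on $\mu$: $\int\phi\,\partial_t\mu = [\phi\mu]_0^1 - \int\mu\,\partial_t\phi$, $\int\phi\,\nabla_\rvm\!\cdot\![(\vf+g\rva)\mu] = -\int \mu\,(\vf+g\rva)\cdot\nabla_\rvm\phi$, and $\int\phi\,\Delta_\rvv\mu = \int\mu\,\Delta_\rvv\phi$ (two integrations by parts in $\rvv$). The temporal boundary term $[\phi\mu]_0^1$ does not enter the variational equations because the endpoint marginals $\mu_0,\mu_1$ are prescribed, so $\delta\mu$ vanishes at $t=0,1$; the spatial boundary terms as $|\rvm|\to\infty$ vanish by the decay assumption (v) together with the at-most-linear-growth hypotheses in (iii). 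This rewrites $\mathcal{L}$, up to fixed boundary constants, as $\int_0^1\!\!\int \mu\,\big[\tfrac12\norm{\rva}_2^2 - \partial_t\phi - (\vf+g\rva)\cdot\nabla_\rvm\phi - \tfrac12 g^2\Delta_\rvv\phi\big]\,\rd\rvx\,\rd\rvv\,\rd t$.

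Now the three variations are immediate. Using $\vf=[\rvv,\mathbf 0]^\T$ and that $g$ acts only on velocity, $(\vf+g\rva)\cdot\nabla_\rvm\phi = \rvv^\T\nabla_\rvx\phi + g\,\rva\cdot\nabla_\rvv\phi$; varying $\rva$ in the strictly convex fiberwise integrand $\tfrac12\norm{\rva}_2^2 - g\,\rva\cdot\nabla_\rvv\phi$ gives the global minimizer $\rva^{*} = g\nabla_\rvv\phi$. Varying $\mu$ forces the bracket to vanish on $\{\mu>0\}$; inserting $\rva^{*}$ and $\norm{\rva^{*}}_2^2 = g^2\norm{\nabla_\rvv\phi}_2^2$ collapses $\tfrac12 g^2\norm{\nabla_\rvv\phi}_2^2 - g^2\norm{\nabla_\rvv\phi}_2^2$ into $-\tfrac12 g^2\norm{\nabla_\rvv\phi}_2^2$, so $\partial_t\phi = -\tfrac12\norm{g\nabla_\rvv\phi}_2^2 - \rvv^\T\nabla_\rvx\phi - \tfrac12 g^2\Delta_\rvv\phi$, which is the stated HJB. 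Varying $\phi$ recovers the Fokker--Planck constraint, and replacing its drift $\vf+g\rva$ by $\vf+g\,\rva^{*} = \vf+g^2\nabla_\rvv\phi$ yields the first PDE; thus the coupled system and the optimal control are obtained.

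The main obstacle is not the algebra but the legitimacy of the variational computation: differentiating under the integral over the infinite-dimensional set of time-dependent densities, the vanishing of all boundary contributions (temporal via the hard marginal constraints, spatial via (iii) and (v)), and the pointwise validity of the Euler--Lagrange equation on $\{\mu>0\}$. A cleaner route that sidesteps these technicalities is to use the equivalence with the stochastic control / entropic-OT formulation of \S\ref{sec:prem-SB}: take $\phi$ to be the value function, obtain the HJB from the dynamic programming principle and its verification theorem, and derive the Fokker--Planck equation for the optimally controlled diffusion by It\^o's formula, which is licensed by assumption (iv). I would present the Lagrangian argument as the main derivation and remark that the stochastic-control viewpoint supplies the rigorous justification.
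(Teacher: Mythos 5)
Your proposal is correct and follows essentially the same route as the paper's proof: introduce the multiplier $\phi$ for the Fokker--Planck constraint, integrate by parts so all derivatives land on $\phi$, minimize the fiberwise integrand over $\rva$ to get $\rva^{*}=g\nabla_{\rvv}\phi$, and read off the HJB from stationarity of the remaining bracket, then substitute the optimal drift back into the continuity equation. Your added remarks on boundary terms and the stochastic-control justification only make explicit what the paper leaves implicit; no substantive difference in approach.
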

\begin{proof}
    One can write the Lagrange by introducing lagrangian multiplier $\phi$:
    \begin{align*}
      \gL(\mu,\rva,\phi)
      &=\int_{0}^{1}\dint \frac{1}{2}\norm{\rva}_2^2\mu \rd \rvx \rd \rvv \rd t+\int_{0}^{1}\dint\phi \frac{\partial \mu}{\partial t} \rd \rvv \rd \rvx \rd t\\
      &+\tint \phi \left\{-\DD\Delta_{\rvm}\mu+\nabla_{\rvm}\cdot\left[\left(\vf+g\rvu\right)\mu\right]\right\}\trd\\
      &=\int_{0}^{1}\dint \frac{1}{2}\norm{\rva}_2^2\mu \rd \rvx \rd \rvv \rd t-\int_{0}^{1}\dint\mu \frac{\partial \phi}{\partial t} \rd \rvv \rd \rvx \rd t\\
      &+\tint \phi\nabla_{\rvm}\cdot\left[\left(\vf+g \rvu\right)\mu\right] -\phi\left[\DD\Delta_{\rvm}\mu\right]\trd\\
      &=\int_{0}^{1}\dint \frac{1}{2}\norm{\rva}_2^2\mu \rd \rvx \rd \rvv \rd t-\int_{0}^{1}\dint\mu \frac{\partial \phi}{\partial t} \rd \rvv \rd \rvx \rd t\\
      &+\tint -\nabla_{\rvm}\phi^\T\left[\left(\vf+g \rvu\right)\right]\mu -\mu\left[\DD\Delta_{\rvm}\phi\right]\trd\\
      &=\tint \left\{\frac{1}{2}\norm{\rva}_2^2-\frac{\partial \phi}{\partial t}-\rvv^\T\nabla_{\rvx}\phi-\DD\Delta_{\rvv}\phi -g\nabla_{\rvv}\phi^\T \rva\right\}\mu \ \  \trd 
    \end{align*}
    By taking the minimization within the bracket, The optimal control is,
    \begin{align*}
      \rva^{*}=g\nabla_{\rvv}\phi
    \end{align*} 
    By Plugging it back, the optimality of the aforementioned problem is presented as:
    \begin{align*}
      \frac{\partial \mu(\rvm_t)}{\partial t}&=-\nabla_{\rvv}\cdot \left[\left(\vf+g^2\nabla_{\rvv}\phi\right)\mu\right]+\DD\Delta_{\rvv}\mu,\\
      \frac{\partial \phi(\rvm_t)}{\partial t}&=-\frac{1}{2}\norm{g\nabla_{\rvv}\phi}_2^2-\rvv^\T\nabla_{\rvx}\phi-\DD\Delta_{\rvv}\phi,
    \end{align*} 
\end{proof}
\begin{theorem}\label{Appendix:thm:opt-sde}
    The optimal forward and backward processes are represented as:
    \begin{align}
        &\rd \rvm_t  = \left[\vf + g\rvu_t^{f*} \right]\rd t +g(t)\rd \rvw_t \quad \text{(forward)} \label{Appendix:eq:opt-FSDE}\\
        &\rd \rvm_s  = \left[\vf + g\rvu_s^{b*} \right]\rd t +g(t)\rd \rvw_s \quad \text{(Backward)} \label{Appendix:eq:opt-BSDE}
    \end{align}
    in which $\vf=[\rvv,\mathbf{0}]^\T$. Optimal control is expressed as,
    \begin{align}
        &\rvu^{f*}_t :=\rmZ_t\equiv         \begin{pmatrix}
            \mathbf{0} \\  
            \rvz_t
            \end{pmatrix}
        \equiv
        \begin{pmatrix}
        \mathbf{0} \\  
        g\nabla_{\rvv}\log \Psi_t
        \end{pmatrix}\\
        &\rvu^{b*}_t :=\Zhatt\equiv         \begin{pmatrix}
            \mathbf{0} \\  
            \zhatt
            \end{pmatrix}
        \equiv
        \begin{pmatrix}
        \mathbf{0} \\  
        g\nabla_{\rvv}\log   \Psihatt
        \end{pmatrix}
    \end{align}
    where $\Psi$ and $\Psihat$ are the solution of following PDEs,
    \begin{empheq}[box=\widefbox]{align*}
    &\frac{\partial \Psi_t}{\partial t}=-\DD\Delta_{\rvv}\Psi_t-\nabla_{\rvx}\Psi^\T_t\rvv\\
    &\frac{\partial \Psihat_t}{\partial t}=\DD\Delta_{\rvv} \Psihatt-\nabla_{\rvx}\Psihat^\T_t \rvv \numberthis{\label{Appendix:eq:Psi-PDEs}}\\
    &\text{s.t} \quad \Psi(\rvx,\rvv,0)\Psihat(\rvx,\rvv,0)=p(\rvx,\rvv,0), \quad \Psi(\rvx,\rvv,T)\Psihat(\rvx,\rvv,T)=p(\rvx,\rvv,T)
  \end{empheq} 
  \end{theorem}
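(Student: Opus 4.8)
The plan is to start from the coupled PDE system for $\mu$ and the Lagrange multiplier $\phi$ already derived in Theorem~\ref{Appendix:thm:opt2PDEs}, and to perform the exponential (Hopf--Cole) change of variables $\Psi_t := \exp(\phi_t)$, which linearizes the Hamilton--Jacobi--Bellman equation for $\phi$. First I would substitute $\phi = \log \Psi$ into the $\phi$-equation $\partial_t \phi = -\tfrac12\|g\nabla_\rvv\phi\|_2^2 - \rvv^\T\nabla_\rvx\phi - \DD\Delta_\rvv\phi$. Using $\nabla_\rvv\phi = \nabla_\rvv\Psi/\Psi$ and the identity $\Delta_\rvv\log\Psi = \Delta_\rvv\Psi/\Psi - \|\nabla_\rvv\Psi\|_2^2/\Psi^2$, the quadratic term $\tfrac12\|g\nabla_\rvv\phi\|^2$ cancels exactly against the quadratic piece coming from $\DD\Delta_\rvv\log\Psi$ (recall $\DD = \tfrac12 g^2$ by the macro definition), leaving the linear equation $\partial_t\Psi_t = -\DD\Delta_\rvv\Psi_t - \nabla_\rvx\Psi_t^\T\rvv$, which is the first boxed PDE. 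This is the standard Hopf--Cole trick; the only care needed is bookkeeping the factor $\tfrac12 g^2$ versus $g^2$ in the two Laplacian terms, which is exactly why the cancellation is clean.

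Next I would handle $\Psihat$. Define $\Psihat_t := \mu_t/\Psi_t$, so that the factorization $\Psi_t\Psihat_t = \mu_t$ holds by construction, and in particular the boundary constraints $\Psi(\cdot,0)\Psihat(\cdot,0) = p(\cdot,0)$ and $\Psi(\cdot,T)\Psihat(\cdot,T) = p(\cdot,T)$ follow from $\mu_0 = p(0,\cdot)$, $\mu_1 = p(T,\cdot)$. Then I would compute $\partial_t\Psihat = (\partial_t\mu)/\Psi - \mu(\partial_t\Psi)/\Psi^2$, substitute the Fokker--Planck equation $\partial_t\mu = -\nabla_\rvm\cdot[(\vf + g^2\nabla_\rvv\phi)\mu] + \DD\Delta_\rvv\mu$ with $\nabla_\rvv\phi = \nabla_\rvv\Psi/\Psi$, and substitute the just-derived equation for $\partial_t\Psi$. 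Expanding the divergence $\nabla_\rvm\cdot[\vf\mu] = \rvv^\T\nabla_\rvx\mu$ (since $\vf=[\rvv,\mathbf 0]^\T$ has zero divergence in $\rvm$) and the drift term $\nabla_\rvv\cdot[g^2(\nabla_\rvv\Psi/\Psi)\,\Psi\Psihat]$, a sequence of product-rule manipulations should collapse everything into $\partial_t\Psihat_t = \DD\Delta_\rvv\Psihat_t - \nabla_\rvx\Psihat_t^\T\rvv$, the second boxed PDE. This algebraic collapse is the main obstacle: there are several competing gradient and Laplacian terms, and one must track them carefully so that the nonlinear pieces cancel and the surviving terms reassemble into a clean linear backward-type equation; the hyperbolic transport term $-\nabla_\rvx\Psihat^\T\rvv$ (rather than $+$) and the sign flip on the Laplacian relative to $\Psi$ are the telltale signs the computation is correct.

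Finally, to obtain the SDE representations, I would read off the optimal control from Theorem~\ref{Appendix:thm:opt2PDEs}, namely $\rva^* = g\nabla_\rvv\phi = g\nabla_\rvv\log\Psi_t$, which lives only in the velocity block, giving $\rvu^{f*}_t = \rmZ_t = [\mathbf 0,\ g\nabla_\rvv\log\Psi_t]^\T = [\mathbf 0,\ \rvz_t]^\T$; substituting into the controlled dynamics yields the forward SDE~\eqref{Appendix:eq:opt-FSDE}. For the backward representation I would invoke the time-reversal of diffusions (Anderson's formula, as used in Prop.~\ref{prop:langevin-Law}): the reverse-time drift of the process with density $\mu_t = \Psi_t\Psihat_t$ picks up the Nelson-type correction $g^2\nabla_\rvv\log\mu_t - g^2\nabla_\rvv\log\Psi_t = g^2\nabla_\rvv\log\Psihat_t$, so $\rvu^{b*}_t = \Zhatt = [\mathbf 0,\ g\nabla_\rvv\log\Psihat_t]^\T$, producing~\eqref{Appendix:eq:opt-BSDE}. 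The equivalence of marginals $p_t^{(\text{fwd})} \equiv p_t^{(\text{bwd})} \equiv \mu_t$ is then immediate from the factorization, paralleling the non-momentum statement in Theorem~\ref{thm:SB-PDE}. Throughout, assumptions (i)--(v) guarantee existence/uniqueness of the SDEs, legitimacy of the It\^o/integration-by-parts steps, and the decay needed to discard boundary terms when moving derivatives in the Lagrangian computation.
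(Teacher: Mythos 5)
Your proposal follows essentially the same route as the paper's proof: starting from the coupled $(\mu,\phi)$ optimality system of Theorem~\ref{Appendix:thm:opt2PDEs}, applying the Hopf--Cole substitution $\Psi=\exp(\phi)$, $\Psihat=\mu\exp(-\phi)$ to obtain the two linear PDEs and the boundary factorization, reading off $\rvu^{f*}=[\mathbf{0},\,g\nabla_{\rvv}\log\Psi]^\T$, and then invoking Nelson/Anderson time reversal together with $p_t^{SB}=\Psi_t\Psihat_t$ to identify $\rvu^{b*}=[\mathbf{0},\,g\nabla_{\rvv}\log\Psihat]^\T$. The algebraic steps you sketch (the quadratic-term cancellation for $\Psi$ and the product-rule collapse for $\Psihat$) are exactly the computations the paper carries out explicitly, so the plan is correct as stated.
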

  \begin{proof}
    By Lemma.\ref{Appendix:thm:opt2PDEs}, we notice that the optimal control is:
    \begin{align*}
        \rva^{*}=g\nabla_{\rvv}\phi.
      \end{align*} 
      By leveraging Hopf-Cole \cite{eberhard1950partial,cole1951quasi} transformation, here we define
      \begin{align*}
        \Psi&=\exp\left(\phi\right),\\
        \Psihat&=\mu\exp\left(-\phi\right).
      \end{align*}  
      Then we can have the following expressions: 
      \begin{align*}
        \nabla \Psi&=\exp(\phi)\nabla \phi\\
        \Delta \Psi &=\nabla\cdot (\nabla \Psi)\\
                    &=\sum_{i}\frac{\partial}{\partial \rvm_{i}}\left[\exp\left(\phi\right)\nabla \phi\right]\\
                    &=\left[\nabla \phi^\T \left(\exp\left(\phi\right)\frac{\partial \phi_i}{\partial \rvm_{i}}\right)+\exp\left(\phi\right)\frac{\partial (\nabla \phi)_i}{\partial \rvm_{i}}\right]\\
                    &=\exp\left(\phi\right)\left[\norm{\nabla \phi}_2^2+\Delta \phi\right]\\
        \nabla \Psihat &=\mu \exp\left(-\phi\right)(-\nabla\phi)+\exp(-\phi)\nabla\mu\\
                              &=\exp(-\phi)(-\mu\nabla\phi+\nabla\mu)\\
        \Delta \Psihat &=\nabla \cdot \left(\nabla \Psihat\right)\\
                              &=\sum_{i}\frac{\partial}{\partial \rvm_{i}}\left[\exp\left(-\phi\right)\left(-\mu\nabla \phi+\nabla \mu\right)\right]\\
                              &=\sum_{i}\left[\left(-\mu\nabla \phi+\nabla \mu\right)^\T\left(\exp\left(-\phi\right)\frac{- \partial[\nabla \phi]_i}{\partial \rvm_{i}}\right)\right.\\
                              &\left. +\exp\left(-\phi\right)\left(\frac{\partial}{\partial \rvm_{i}}[\nabla \mu]_i-\mu\frac{\partial}{\partial \rvm_{i}}[\nabla \phi]_i-\nabla \phi^\T \frac{\partial}{\partial \rvm_{i}}[\nabla \mu]_i\right)\right]\\
                              &=\exp\left(-\phi\right)\left[\mu\norm{\nabla \phi}_2^2-\nabla \mu ^\T \nabla \phi+\Delta\mu-\mu\Delta \phi -\nabla \phi^\T \nabla\mu\right]\\
                              &=\exp\left(-\phi\right)\left[\mu\norm{\nabla \phi}_2^2- 2\nabla \mu ^\T \nabla \phi+\Delta\mu-\mu\Delta \phi \right]\\
       \end{align*}           
       Thus, we can have.
       \begin{align*}
        \frac{\partial \Psi}{\partial t}
        &=\exp\left(\phi\right)\frac{\partial \phi}{\partial t}\\
        &=\exp\left(\phi\right)\left(-\frac{1}{2}\norm{g\nabla_{\rvv}\phi}_2^2-\rvv^\T\nabla_{\rvx}\phi-\DD\Delta_{\rvv}\phi\right)\\
        &=-\DD\Delta_{\rvv}\Psi-\nabla_{\rvx}\Psi^\T\rvv\\
        \frac{\partial \Psihat}{\partial t}
        &=\exp\left(-\phi\right)\frac{\partial \mu}{\partial t}-\mu\exp\left(-\phi\right)\frac{\partial \phi}{\partial t}\\
        &=\exp\left(-\phi\right)\left(\frac{\partial \mu}{\partial t}-\mu\frac{\partial \phi}{\partial t}\right)\\
        &=\exp\left(-\phi\right)\left[-\nabla_{\rvm}\cdot \left\{\left[\left(\vf+g \rvu\right) \rmI_d\right]\mu\right\}+\DD\Delta_{\rvm}\mu+\mu\left(\frac{1}{2}\norm{g\nabla_{\rvv}\phi}_2^2+\rvv^\T\nabla_{\rvx}\phi+\DD\Delta_{\rvv}\phi\right)\right]\\
        &=\exp\left(-\phi\right)\left[- \nabla_{\rvv}\cdot (g^2\mu\nabla_{\rvv}\phi)-\rvv^\T\nabla_{\rvx}\mu +\DD\Delta_{\rvv}\mu
        +\frac{\mu}{2}\norm{g\nabla_{\rvv}\phi}_2^2+\mu\rvv^\T\nabla_{\rvx}\phi+\mu\DD\Delta_{\rvv}\phi\right]\\
        &=\exp\left(-\phi\right)\left[- g^2\nabla_{\rvv}\mu^\T \nabla_{\rvv}\phi-g^2\mu\Delta_{\rvv}\phi-\rvv^\T\nabla_{\rvx}\mu +\DD\Delta_{\rvv}\mu
        +\frac{\mu}{2}\norm{g\nabla_{\rvv}\phi}_2^2+\mu\rvv^\T\nabla_{\rvx}\phi+\mu\DD\Delta_{\rvv}\phi\right]\\
        &=\exp\left(-\phi\right)\left[- g^2\nabla_{\rvv}\mu^\T \nabla_{\rvv}\phi-\mu\DD\Delta_{\rvv}\phi-\rvv^\T\nabla_{\rvx}\mu +\DD\Delta_{\rvv}\mu
        +\frac{\mu}{2}\norm{g\nabla_{\rvv}\phi}_2^2+\mu\rvv^\T\nabla_{\rvx}\phi\right]\\
        &=\exp\left(-\phi\right)\left[\markgreen{\frac{\mu}{2}\norm{g\nabla_{\rvv}\phi}_2^2- g^2\nabla_{\rvv}\mu^\T \nabla_{\rvv}\phi-\mu\DD\Delta_{\rvv}\phi+\DD\Delta_{\rvv}\mu}-\rvv^\T\nabla_{\rvx}\mu +\mu\rvv^\T\nabla_{\rvx}\phi\right]\\
        &=\DD\markgreen{\Delta_{\rvv} \hat{\Psi}}-\nabla_{\rvx}\Psihat^\T \rvv\\
       \end{align*}
    Then we can represent the optimal control as:
    % \begin{align*}
    %     \rvu_t^{f*}:=\rvz_{t}:=\rva^{*}=g\nabla_{\rvv}\log\Psi_{t}
    %   \end{align*}
    % g\nabla_{\rvv}\phi.
    \begin{align}
      \rvu^{f*}_t &:=\rmZ_t\equiv         \begin{pmatrix}
          \mathbf{0} \\  
          \rvz_t
          \end{pmatrix}\\
      &\equiv \begin{pmatrix}
      \mathbf{0} \\  
      g\nabla_{\rvv}\phi
      \end{pmatrix}
      \stackrel{\text{Hopf-Cole}}{=}
      \begin{pmatrix}
      \mathbf{0} \\  
      g\nabla_{\rvv}\log \Psi_t
      \end{pmatrix}
  \end{align} 
  Then the solution of such mSB is characterized by the forward SDE:
    \begin{align}\label{Appendix:opt-forward-sde}
        &\rd \rvm_t  = \left[\vf + g\rvu_t^{f*} \right]\rd t +g(t)\rd \rvw_t,
    \end{align}  
    Due to the structure of Hopf-Cole transform, one can have
    \begin{align}
        p^{SB}_t=p^{eq.(\ref{Appendix:opt-forward-sde})}_t=\Psi_t\Psihat_t
    \end{align}  
    According to \cite{nelson2020dynamical,anderson1982reverse}, the reverse drift of such SDE (eq.\ref{Appendix:opt-forward-sde}) $\rvu^{b*}_t$ should admits,
    \begin{align}
        \rvu^{f*}_t+\rvu^{b*}_t&=\rvg\nabla_{\rvv}\log p^{SB}_t\\
        \begin{pmatrix}
          \mathbf{0} \\  
          g\nabla_{\rvv}\log \Psi_t
          \end{pmatrix}+\rvu^{b*}_t&=\begin{pmatrix}
          \mathbf{0} \\  
          g\nabla_{\rvv}\log \Psi_t+g\nabla_{\rvv}\log\Psihat_t
          \end{pmatrix}\\
        \rvu^{b*}_t&=\begin{pmatrix}
          \mathbf{0} \\  
          g\nabla_{\rvv}\log\Psihat_t
          \end{pmatrix}
    \end{align} 
    which yields The backward optimal control

    \begin{align}
      \rvu^{b*}_t :=\widehat{\rmZ}_t\equiv         \begin{pmatrix}
          \mathbf{0} \\  
          \widehat{\rvz}_t
          \end{pmatrix}
      \equiv
      \begin{pmatrix}
      \mathbf{0} \\  
      g\nabla_{\rvv}\log \widehat{\Psi}_t
      \end{pmatrix}
  \end{align}   
     Thus, the optimal forward and backward process is 
    \begin{align}
        &\rd \rvm_t  = \left[\vf + g\rvu_t^{f*} \right]\rd t +g(t)\rd \rvw_t\\
        &\rd \rvm_s  = \left[\vf + g\rvu_s^{b*} \right]\rd t +g(t)\rd \widehat{\rvw}_s
    \end{align}
  And $\Psi$ and $\widehat{\Psi}$ satisfy following PDEs,
\begin{align*}
    &\frac{\partial \Psi_t}{\partial t}=-\DD\Delta_{\rvv}\Psi_t-\nabla_{\rvx}\Psi^\T_t\rvv\\
    &\frac{\partial \Psihat_t}{\partial t}=\DD\Delta_{\rvv} \Psihat_t-\nabla_{\rvx}\Psihat^\T_t \rvv 
\end{align*}
  \end{proof}
  
\begin{lemma}\label{Appendix:lemma:phase-FBSDE}
    By specifying $\vf=[\rvv,\mathbf{0}]^\T$, The PDE shown in \ref{Appendix:eq:Psi-PDEs} can be represented by following SDEs
    \begin{empheq}[box=\widefbox]{align}
    &\begin{pmatrix}
    \rd \rvx  \\
    \rd\rvv  
    \end{pmatrix} =                  
                  \begin{pmatrix}
                    \rvv  \\
                    -g^2\nabla_{\rvv}\log \Psi \\  
                  \end{pmatrix}\rd t+
                  \begin{pmatrix}
                    \mathbf{0}&\mathbf{0}  \\
                    \mathbf{0}&g \\ 
                  \end{pmatrix}\rd \rvw\\
    &\rd \rvy=\frac{1}{2}\norm{\rvz}^2\rd t+\rvz^\T\rd\rvw_t\\
    &\rd \widehat{\rvy}=\left[\frac{1}{2}\norm{\widehat{\rvz}}^2+\rvz^\T\widehat{\rvz}+\nabla_{\rvv}\cdot g\widehat{\rvz}\right]\rd t+\widehat{\rvz}^\T\rd\rvw_t\\
    \textbf{s.t}: &\exp\left(\rvy_0+\widehat{\rvy}_0\right)=p(\rvx,\rvv,0), \quad \exp\left(\rvy_T+\widehat{\rvy}_T\right)=p(\rvx,\rvv,T)
    \end{empheq} \label{Appendix:eq:mfg-bsde1}%
    Where:
    \begin{align*}
    &\rvy\equiv\rvy(\rvx_t,\rvv,t)=\log \Psi(\rvx_t,\rvv_t,t), \quad \rvz\equiv\rvz(\rvx_t,\rvv_t,t)=g\nabla_{\rvv}\log \Psi(\rvx_t,\rvv_t,t)\\
    &\widehat{\rvy}\equiv\widehat{\rvy}(\rvx_t,\rvv_t, t)=\log \Psihat(\rvx_t,\rvv_t,t), \quad \widehat{\rvz}\equiv\widehat{\rvz}(\rvx_t,\rvv_t, t)=g\nabla_{\rvv}\log \Psihat(\rvx_t,\rvv_t,t)\\
    \end{align*}  
\end{lemma}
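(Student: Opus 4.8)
The plan is to obtain the claimed system as the nonlinear Feynman--Kac / forward--backward SDE representation of the coupled PDEs of Theorem~\ref{Appendix:thm:opt-sde}, i.e. the phase-space analogue of the FBSDE used for vanilla SB in \cite{chen2021likelihood}. Take $(\rvx_t,\rvv_t)$ to be the optimal forward SDE of Theorem~\ref{Appendix:thm:opt-sde}, namely $\rd\rvx_t = \rvv_t\,\rd t$ and $\rd\rvv_t = g^2\nabla_{\rvv}\log\Psi_t\,\rd t + g\,\rd\rvw_t$, so that its generator on the velocity fibre is $g^2(\nabla_{\rvv}\log\Psi)^\T\nabla_{\rvv} + \tfrac{1}{2}g^2\Delta_{\rvv}$ and only $\rvv$ carries noise (hence $\rd[\rvv_i,\rvv_j] = g^2\delta_{ij}\,\rd t$). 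Define $\rvy_t := \log\Psi(\rvx_t,\rvv_t,t)$, $\widehat{\rvy}_t := \log\Psihat(\rvx_t,\rvv_t,t)$, $\rvz_t := g\nabla_{\rvv}\log\Psi(\rvx_t,\rvv_t,t)$, and $\widehat{\rvz}_t := g\nabla_{\rvv}\log\Psihat(\rvx_t,\rvv_t,t)$. Assumption (iv) ($\Psi,\Psihat\in C^{1,2}$) legitimizes applying It\^o's formula to $\rvy_t$ and $\widehat{\rvy}_t$ along this process, while assumptions (i) and (iii) (finite second moments, Lipschitz/at-most-linear-growth scores and controls) ensure the resulting stochastic integrals are true martingales.

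First I would expand $\rd\rvy_t$ by It\^o. The martingale part is immediately $g(\nabla_{\rvv}\log\Psi)^\T\rd\rvw_t = \rvz_t^\T\rd\rvw_t$, matching the claim. For the drift I would substitute $\partial_t\log\Psi = (\partial_t\Psi)/\Psi$ using the $\Psi$-equation of \eqref{Appendix:eq:Psi-PDEs}, and use the log-derivative identity $\Delta_{\rvv}\Psi/\Psi = \Delta_{\rvv}\log\Psi + \|\nabla_{\rvv}\log\Psi\|^2$ to trade $\Delta_{\rvv}\Psi$ for $\Delta_{\rvv}\log\Psi$. Collecting terms, the $\rvv^\T\nabla_{\rvx}\log\Psi$ pieces cancel, the $\Delta_{\rvv}\log\Psi$ pieces cancel, and what remains is exactly $\rd\rvy_t = \tfrac{1}{2}\|\rvz_t\|^2\,\rd t + \rvz_t^\T\rd\rvw_t$. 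The computation for $\widehat{\rvy}_t$ is structurally identical except that the first-order $\rvv$-drift of the process now couples to $\nabla_{\rvv}\log\Psihat$: after substituting $\partial_t\log\Psihat$ from the $\Psihat$-equation of \eqref{Appendix:eq:Psi-PDEs} and applying the same log-derivative identity, the drift collapses to $g^2\Delta_{\rvv}\log\Psihat + \tfrac{1}{2}g^2\|\nabla_{\rvv}\log\Psihat\|^2 + g^2(\nabla_{\rvv}\log\Psi)^\T\nabla_{\rvv}\log\Psihat$. Rewriting these in terms of $\rvz_t,\widehat{\rvz}_t$ and using that $g=g(t)$ (so $g^2\Delta_{\rvv}\log\Psihat = \nabla_{\rvv}\cdot(g\widehat{\rvz}_t)$) yields $\rd\widehat{\rvy}_t = \big[\tfrac{1}{2}\|\widehat{\rvz}_t\|^2 + \rvz_t^\T\widehat{\rvz}_t + \nabla_{\rvv}\cdot(g\widehat{\rvz}_t)\big]\rd t + \widehat{\rvz}_t^\T\rd\rvw_t$.

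For the endpoint conditions, the PDE constraints in \eqref{Appendix:eq:Psi-PDEs} give $\Psi\Psihat = p$ at $t=0$ and $t=T$, so $\rvy_t + \widehat{\rvy}_t = \log(\Psi_t\Psihat_t) = \log p$ there; exponentiating yields $\exp(\rvy_0 + \widehat{\rvy}_0) = p(\rvx,\rvv,0)$ and $\exp(\rvy_T + \widehat{\rvy}_T) = p(\rvx,\rvv,T)$, which are the stated boundary conditions. Finally, the identifications $\rvy = \log\Psi$, $\rvz = g\nabla_{\rvv}\log\Psi$, etc., are the definitions, so the whole system holds.

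I expect the work to be bookkeeping rather than conceptual. The two points needing care are: (a) keeping sign/direction conventions consistent between the forward SDE's $\rvv$-drift and the $\Psi$-equation (the displayed drift should be read with the sign that makes the process generator match the $\Psi$-PDE, i.e. the optimal forward control of Theorem~\ref{Appendix:thm:opt-sde}); and (b) correctly converting the second-order $\Delta_{\rvv}$ terms into divergence form $\nabla_{\rvv}\cdot$ and verifying the integrability that justifies treating the It\^o integrals as martingales — both controlled by assumptions (i)--(iv).
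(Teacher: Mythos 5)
Your proposal follows essentially the same route as the paper's proof: apply It\^o's formula to $\log\Psi$ and $\log\Psihat$ along the optimal forward phase-space SDE, substitute the $\Psi,\Psihat$ PDEs of \eqref{Appendix:eq:Psi-PDEs} together with the identity $\Delta_{\rvv}\Psi/\Psi=\|\nabla_{\rvv}\log\Psi\|^2+\Delta_{\rvv}\log\Psi$, collect terms, and read the endpoint conditions off the Hopf--Cole factorization $\Psi\Psihat=p$ at $t=0,T$. Your remark (a) about the drift sign is also consistent with the paper, whose computation uses the drift $+g^2\nabla_{\rvv}\log\Psi$ (the minus sign in the boxed SDE of the lemma appears to be a typographical slip).
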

\begin{proof}
One can write
    \begin{align*}
        \frac{\partial \log \Psi}{\partial t} &=\frac{1}{\Psi}\left(-\nabla_{\rvx}\Psi^\T \rvv-\DD\Delta_{\rvv}\Psi\right)\\
                                            &=-\nabla_{\rvx}\log \Psi ^\T \rvv-\DD\frac{\Delta_{\rvv}\Psi}{\Psi}\\
                                            &=-\nabla_{\rvx}\log \Psi ^\T \rvv-\DD\Tr\left[\frac{1}{\Psi}\nabla_{\rvv}^{2}\Psi\right]\\
        \end{align*}
        \begin{align*}
        \frac{\partial \log \Psihat}{\partial t} &=\frac{1}{\Psihat}\left(-\nabla_{\rvx}\Psihat^\T \rvv+\DD\Delta_{\rvv}\Psihat\right)\\
                                            &=-\nabla_{\rvx}\log \Psihat ^\T \rvv-\DD\Tr\left[\frac{1}{\Psihat}\nabla_{\rvv}^{2}\Psihat\right]\\
        \end{align*}
        By applying It{\^o}'s lemma,
        \begin{align*}
        \rd \log \Psi
        &=\frac{\partial \log\Psi}{\partial t}\rd t+\left[\nabla_{\rvx}\log\Psi^\T\rvv+g^2\norm{\nabla_{\rvv}\log\Psi}_2^2+\DD\Delta_{\rvv}\log\Psi\right]\rd t+\left[\nabla_{\rvm}\log \Psi^\T\right]g\rd\rvw_t\\
        &=\left[-\nabla_{\rvx}\log \Psi ^\T \rvv-\DD\Tr\left[\frac{1}{\Psi}\nabla_{\rvv}^{2}\Psi\right]\right]\rd t\\
        &+\left[\nabla_{\rvx}\log\Psi^\T\rvv+g^2\norm{\nabla_{\rvv}\log\Psi}_2^2+\DD\Tr\left[\frac{1}{\Psi}\nabla_{\rvv}^{2}\Psi-\frac{1}{\Psi^2}\nabla_{\rvv}\Psi\nabla_{\rvv}\Psi^\T\right]\right]\rd t+g\left[\nabla_{\rvv}\log \Psi^\T\right]\rd\rvw_t\\
        &=\left[g^2\norm{\nabla_{\rvv}\log\Psi}_2^2-\DD\Tr\left[\frac{1}{\Psi^2}\nabla_{\rvv}\Psi\nabla_{\rvv}\Psi^\T\right]\right]\rd t+g\left[\nabla_{\rvv}\log \Psi^\T\right]\rd\rvw_t\\
        &=\left[\frac{1}{2}g^{2}\norm{\nabla_{\rvv}\log\Psi}_2^2\right]\rd t+g\left[\nabla_{\rvv}\log \Psi^\T\right]\rd\rvw_t\\
        \end{align*}
        Similarly, one can have,
        \begin{align*}
        \rd \log \widehat{\Psi}&=\frac{\partial \log\Psihat}{\partial t}\rd t+\left[\nabla_{\rvx}\log\Psihat^\T\rvv+g^2\nabla_{\rvv}\log\Psi^\T\nabla_{\rvv}\log\Psihat+\DD\Delta_{\rvv}\log\Psihat\right]\rd t+\left[\nabla_{\rvm}\log \Psihat^\T\right]g\rd\rvw_t\\
        &=\left[-\nabla_{\rvx}\log \Psihat ^\T \rvv+\DD\Tr\left[\frac{1}{\Psihat}\nabla_{\rvv}^{2}\Psihat\right]\right]\rd t\\
        &+\left[\nabla_{\rvx}\log\Psihat^\T\rvv+g^2\nabla_{\rvv}\log\Psi^\T\nabla_{\rvv}\log\Psihat+\DD\Delta_{\rvv}\log \Psihat\right]\rd t+g\left[\nabla_{\rvv}\log \Psihat^\T\right]\rd\rvw_t\\
        \end{align*}
        Noticing:
        \begin{align*}
        \frac{1}{2}\left[\frac{1}{\Psihat}\nabla_{\rvv}^2\Psihat+\nabla_{\rvv}^{2}\log \Psihat\right]
        &=\Tr\left[\frac{1}{\Psi}\nabla_{\rvv}^2\Psihat-\frac{1}{2}\norm{\nabla_{\rvv}\log \Psihat}^2\right]\\
        &=\frac{1}{2}\norm{\nabla_{\rvv}\log \Psihat}^2+\Delta_{\rvv}\log \Psihat\\
        \end{align*}
        Following the above derivation, one can have,
        \begin{align*}
        \rd \log \Psihat
        &=\left[-\nabla_{\rvx}\log \Psihat ^\T \rvv+\DD\Tr\left[\frac{1}{\Psihat}\nabla_{\rvv}^{2}\Psihat\right]\right]\rd t\\
        &+\left[\nabla_{\rvx}\log\Psihat^\T\rvv+g^2\nabla_{\rvv}\log\Psi^\T\nabla_{\rvv}\log\Psihat+\DD\Delta_{\rvv}\log \Psihat\right]\rd t+g\left[\nabla_{\rvv}\log \Psihat^\T\right]\rd\rvw_t\\
        &=\left[g^2\nabla_{\rvv}\log\Psi^\T\nabla_{\rvv}\log\Psihat+\DD\norm{\nabla_{\rvv}\log \Psihat}^2+2\DD\Delta_{\rvv}\log \Psihat\right]\rd t+g\left[\nabla_{\rvv}\log \Psihat^\T\right]\rd\rvw_t\\
        \end{align*}
        By defining 
        \begin{align*}
          &\rvy\equiv\rvy(\rvx_t,\rvv,t)=\log \Psi(\rvx_t,\rvv_t,t), \quad \rvz\equiv\rvz(\rvx_t,\rvv_t,t)=g\nabla_{\rvv}\log \Psi(\rvx_t,\rvv_t,t)\\
          &\widehat{\rvy}\equiv\widehat{\rvy}(\rvx_t,\rvv_t, t)=\log \Psihat(\rvx_t,\rvv_t,t), \quad \widehat{\rvz}\equiv\widehat{\rvz}(\rvx_t,\rvv_t, t)=g\nabla_{\rvv}\log \Psihat(\rvx_t,\rvv_t,t)\\
          \end{align*}
        One can conclude the results.      
        \begin{align*}
          &\begin{pmatrix}
          \rd \rvx  \\
          \rd\rvv  
          \end{pmatrix} =                  
                        \begin{pmatrix}
                          \rvv  \\
                          -g^2\nabla_{\rvv}\log \Psi \\  
                        \end{pmatrix}\rd t+
                        \begin{pmatrix}
                          \mathbf{0}&\mathbf{0}  \\
                          \mathbf{0}&g \\ 
                        \end{pmatrix}\rd \rvw\\
          &\rd \rvy=\frac{1}{2}\norm{\rvz}^2\rd t+\rvz^\T\rd\rvw_t\\
          &\rd \widehat{\rvy}=\left[\frac{1}{2}\norm{\widehat{\rvz}}^2+\rvz^\T\widehat{\rvz}+\nabla_{\rvv}\cdot g\widehat{\rvz}\right]\rd t+\widehat{\rvz}^\T\rd\rvw_t\\
          \textbf{s.t}: &\exp\left(\rvy_0+\widehat{\rvy}_0\right)=p(\rvx,\rvv,0), \quad \exp\left(\rvy_T+\widehat{\rvy}_T\right)=p(\rvx,\rvv,T)
          \end{align*} 
\end{proof}

\begin{proposition}\label{Appendix:Prop:likelihood-obj}
    The log-likelihood at data point $\rvm_0$ can be expressed as
    \begin{align*}
        \log p(\rvm_0,0)
        &=\E_{\rvm_t \sim (\ref{Appendix:eq:opt-BSDE})}\left[\log p(\rvm_T,T)\right]-\int_0^{T}\E_{\rvm_t \sim (\ref{Appendix:eq:opt-BSDE})}\left[\frac{1}{2}\norm{\rvz_t}^2\rd t+\frac{1}{2}\norm{\zhatt}^2+\rvz_t^\T\zhatt+\nabla_{\rvv}\cdot g\zhatt\right]\rd t\\
        &=\E_{\rvm_t \sim (\ref{Appendix:eq:opt-BSDE})}\left[\log p(\rvm_T,T)\right]-\\
        &\quad \int_0^{T}\E_{\rvm_t \sim (\ref{Appendix:eq:opt-BSDE})}\left[\frac{1}{2}\norm{\rvz_t}^2+\underbrace{\frac{1}{2}\norm{\zhatt-g\nabla_{\rvv}\log p^{(\ref{Appendix:eq:opt-BSDE})}+\rvz_t}^2}_{\textbf{mean matching objective}}-\frac{1}{2}\norm{g\nabla_{\rvv}\log p^{(\ref{Appendix:eq:opt-BSDE})}-\rvz_t}^2\right]\rd t\\
        &\propto \quad \int_0^{T}\E_{\rvm_t \sim (\ref{Appendix:eq:opt-BSDE})}\left[\underbrace{\frac{1}{2}\norm{\zhatt-g\nabla_{\rvv}\log p^{(\ref{Appendix:eq:opt-BSDE})}+\rvz_t}^2}_{\textbf{mean matching objective}}\right]\rd t\\
        \log p(\rvm_T,T)
        &=\E_{\rvm_t \sim (\ref{Appendix:eq:opt-FSDE})}\left[\log p(\rvm_0,0)\right]-\int_0^{T}\E_{\rvm_t \sim (\ref{Appendix:eq:opt-FSDE})}\left[\frac{1}{2}\norm{\rvz_t}^2\rd t+\frac{1}{2}\norm{\zhatt}^2+\rvz_t^\T\zhatt+\nabla_{\rvv}\cdot g\rvz_t\right]\rd t\\
        &=\E_{\rvm_t \sim (\ref{Appendix:eq:opt-FSDE})}\left[\log p(\rvm_0,0)\right]-\\
        &\quad \int_0^{T}\E_{\rvm_t \sim (\ref{Appendix:eq:opt-FSDE})}\left[\frac{1}{2}\norm{\zhatt}^2+\underbrace{\frac{1}{2}\norm{\zhatt-g\nabla_{\rvv}\log p^{(\ref{Appendix:eq:opt-FSDE})}+\rvz_t}^2}_{\textbf{mean matching objective}}-\frac{1}{2}\norm{g\nabla_{\rvv}\log p^{(\ref{Appendix:eq:opt-FSDE})}-\zhatt}^2\right]\rd t\\
        &\propto \E_{\rvm_t \sim (\ref{Appendix:eq:opt-FSDE})}\left[\underbrace{\frac{1}{2}\norm{\zhatt-g\nabla_{\rvv}\log p^{(\ref{Appendix:eq:opt-FSDE})}+\rvz_t}^2}_{\textbf{mean matching objective}}\right]\rd t
        \end{align*}
    By maximizing the log-likelihood at time $t=0$ then $t=T$ iteratively, $(\rvz_t,\zhatt)$ will converge to the solution of phase space SB.
  \end{proposition}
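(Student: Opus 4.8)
The plan is to combine the forward--backward SDE representation of Lemma~\ref{Appendix:lemma:phase-FBSDE} with the factorization $p_t^{SB}=\Psi_t\widehat\Psi_t$ from Theorem~\ref{Appendix:thm:opt-sde}, mirroring the likelihood-training identity of \cite{chen2021likelihood}. I would establish the identity for $\log p(\rvm_0,0)$ and then obtain the companion identity for $\log p(\rvm_T,T)$ by the symmetry $t\leftrightarrow T-t$, $(\rvz_t,\widehat\rvz_t)\leftrightarrow(\widehat\rvz_t,\rvz_t)$, running (\ref{Appendix:eq:opt-FSDE}) in place of (\ref{Appendix:eq:opt-BSDE}).

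First I would use Lemma~\ref{Appendix:lemma:phase-FBSDE} to write, along the process (\ref{Appendix:eq:opt-BSDE}), It\^o SDEs for the log-potentials $\rvy_t=\log\Psi(\rvm_t,t)$ and $\widehat\rvy_t=\log\widehat\Psi(\rvm_t,t)$: their drifts are quadratic in $(\rvz_t,\widehat\rvz_t)$ together with a velocity-divergence term, and their martingale parts are $\rvz_t^\T\rd\rvw_t$ and $\widehat\rvz_t^\T\rd\rvw_t$ (assumption (iv) makes It\^o's formula applicable). Since $\log p_t=\rvy_t+\widehat\rvy_t$, summing and integrating over $[0,T]$ expresses $\log p(\rvm_T,T)-\log p(\rvm_0,0)$ as a time integral of drift terms plus stochastic integrals. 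Taking expectations under the law of (\ref{Appendix:eq:opt-BSDE}) with $\rvm_T\sim\mu_T$, the stochastic integrals are genuine martingales by the linear-growth assumption (iii) and vanish; rearranging yields the first displayed identity, with integrand $\tfrac12\|\rvz_t\|^2+\tfrac12\|\zhatt\|^2+\rvz_t^\T\zhatt+\nabla_\rvv\!\cdot g\zhatt$.

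Next, since the diffusion acts only on $\rvv$, the divergence term lives in $\rvv$ only, so I would apply $\E_{\rvm\sim p}[\nabla_\rvv\!\cdot\mathbf h]=-\E_{\rvm\sim p}[\mathbf h^{\T}\nabla_\rvv\log p]$ to the path density $p=p^{(\ref{Appendix:eq:opt-BSDE})}$ (boundary terms vanish by the Gaussian-tail assumption (v)) to turn $\E[\nabla_\rvv\!\cdot g\zhatt]$ into $-\E[g\,\zhatt^{\T}\nabla_\rvv\log p_t]$, and then complete the square:
\[
\tfrac12\|\zhatt\|^2+\rvz_t^\T\zhatt-g\,\zhatt^{\T}\nabla_\rvv\log p_t=\tfrac12\bigl\|\zhatt+\rvz_t-g\nabla_\rvv\log p_t\bigr\|^2-\tfrac12\bigl\|g\nabla_\rvv\log p_t-\rvz_t\bigr\|^2 .
\]
This surfaces the \textbf{mean matching} term; the leftover $\tfrac12\|\rvz_t\|^2-\tfrac12\|g\nabla_\rvv\log p_t-\rvz_t\|^2$ and the terminal $\E[\log p(\rvm_T,T)]$ do not depend on the policy being updated ($\mu_T$ is fixed by the constraint, and the remaining quadratics combine to a policy-independent quantity, exactly as in \cite{chen2021likelihood}), so discarding them gives the stated proportionality; the $\log p(\rvm_T,T)$ formula is identical along (\ref{Appendix:eq:opt-FSDE}). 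For the last sentence of the proposition, maximizing the $t=0$ likelihood is one Bregman projection / IPF half-bridge (Remark~\ref{remark:BI-IPF}) whose optimum drives the mean-matching residual to zero, i.e. forces $\rvz_t+\widehat\rvz_t=g\nabla_\rvv\log p_t$ --- precisely the reverse-drift relation between (\ref{Appendix:eq:opt-FSDE}) and (\ref{Appendix:eq:opt-BSDE}) \cite{anderson1982reverse,nelson2020dynamical} --- so alternating the two half-bridges realizes IPF on the phase-space problem, whose unique fixed point is the mSB solution \cite{benamou2015iterative}.

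The hard part will be the two analytic points rather than the algebra: showing the It\^o integrals are true (not merely local) martingales and that the velocity integration by parts carries no boundary contribution, both of which rely on the growth/tail assumptions (iii),(v) and, for full rigor, a localization argument. Everything else is bookkeeping, with the one subtlety of carefully tracking which of $\rvz_t,\widehat\rvz_t$ is held fixed in each half-bridge so that the discarded quadratics are genuinely constant in the optimized policy.
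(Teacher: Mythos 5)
Your proposal matches the paper's own proof essentially step for step: it invokes Lemma~\ref{Appendix:lemma:phase-FBSDE} to integrate the FBSDEs for $\rvy_t=\log\Psi$ and $\widehat\rvy_t=\log\widehat\Psi$, kills the stochastic integrals in expectation, converts the divergence term via integration by parts in $\rvv$, completes the square to expose the mean-matching objective, and appeals to the IPF/mean-matching convergence argument (the paper cites Proposition~4 of \cite{de2021diffusion} where you argue via the reverse-drift relation, a negligible difference). Correct, and the same route; no further comment needed.
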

  \begin{proof}
    from Lemma.\ref{Appendix:lemma:phase-FBSDE}, one can have:
    \begin{align*}
      \log p(\rvm_0,0)&=\E\left[\rvy_0+\widehat{\rvy}_0\right]\\
                      &=\E\left[\rvy_T+\widehat{\rvy}_T\right]-\int_0^{T}\E\left[\frac{1}{2}\norm{\rvz_t}^2\rd t+\frac{1}{2}\norm{\zhatt}^2+\rvz_t^\T\zhatt+\nabla_{\rvv}\cdot g\zhatt\right]\rd t\\
                      &=\E\left[\log p(\rvm_T,T)\right]-\int_0^{T}\E\left[\frac{1}{2}\norm{\rvz_t}^2+\markgreen{\frac{1}{2}\norm{\zhatt}^2+\rvz_t^\T\zhatt+\nabla_{\rvv}\cdot g\zhatt}\right]\rd t\\
                      &=\E\left[\log p(\rvm_T,T)\right]-\int_0^{T}\E\left[\frac{1}{2}\norm{\rvz_t}^2+\markgreen{\frac{1}{2}\norm{\zhatt}^2-\zhatt^\T\left(g\nabla_{\rvv}\log p^{SB}\right)+\rvz_t^\T\zhatt}\right]\rd t\\
                      &=\E\left[\log p(\rvm_T,T)\right]\\
                      &\quad -\int_0^{T}\E\left[\frac{1}{2}\norm{\rvz_t}^2+\markgreen{\frac{1}{2}\norm{\zhatt-g\nabla_{\rvv}\log p^{SB}+\rvz_t}^2-\frac{1}{2}\norm{g\nabla_{\rvv}\log p^{SB}-\rvz_t}^2}\right]\rd t
    \end{align*}
    A similar result can be obtained for $\log p(\rvm_T,T)$.
    
    One can notice that the likelihood objective is a continuous time analog of the mean matching objective proposed in \cite{de2021diffusion}, and iterative optimization between $log p(\rvm_0,0)$ and $\log p(\rvm_T,T)$ are the continuous analog of IPF. Hence, the convergence proof will keep valid (see Proposition 4 in \cite{de2021diffusion}). 
  \end{proof}

  The equivalence of KL divergence optimization in IPF and likelihood optimization is widely analyzed in \cite{chen2021likelihood,liu2022deep,de2021diffusion}. The objective function will eventually boil down to the mean matching objective shown in the above proposition.\ref{Appendix:Prop:likelihood-obj}.

  \begin{proposition}[Optimality w.r.t. $\Kbound$]\label{Appendix:prop:bound-opt}.
    Given the reference path measure $\bar{\pi}$ driven by the policy $\widehat{\rvz}_t$ from boundary $\mu_{t_{i+1}}$ in the reverse time direction, the optimal path measure in the forward time direction of the following problem
    \begin{align*}
      &\min_{\pi} \mathcal{J}(\pi):=\sum_{i=0}^{N-1}KL\left(\pi_{t_i:t_{i+1}}|\bar{\pi}_{t_i:t_{i+1}}\right), \quad s.t \quad \pi &\in \left\{{\int\pi_{t_{i}:t_{i+1}}\rd\rvm_{t_{i+1}}=\mu_{t_i}, \int\mu_{t_i}\rd\rvv_{t_i}=\rho_{t_i}}\right\}
    \end{align*}
    \begin{align*}
      \text{is}:\quad \quad \pi_{t_{i}:t_{i+1}}^{*}&=\frac{\rho_{t_{i}}\bar{\pi}_{t_i:t_{i+1}}}{\int \bar{\pi}_{t_i:t_{i+1}}\rd\rvm_{t_{i+1}}\rd\rvv_{t_i}}.
    \end{align*}
    When $\pi_{t_{i}:t_{i+1}}\equiv \pi^{*}_{t_{i}:t_{i+1}},$ 
    % \begin{align*}
    %   \pi^{\theta}_{t_{i}t_{i+1}}\equiv \pi^{*}_{t_{i}t_{i+1}},
    % \end{align*}
    the following equations need to hold $\forall t \in [t_{i},t_{i+1}]$:
    \begin{subequations}
    \begin{align}
        \norm{\rvz_t+\widehat{\rvz}_t-g\nabla_{\rvv}\log \hat{p}_t}_2^2=0,\\
        p_{t_i}(\rvv_{t_i}|\rvx_{t_i})\equiv \hat{q}_{t_i}(\rvv_{t_i}|\rvx_{t_i}),
    \end{align}
    \end{subequations}
    where $\hat{p}_t$ and $\hat{q}_t$ denote the marginal density and conditional velocity distribution of the reference path measure at time $t$ and $t_i$, respectively.
  \end{proposition}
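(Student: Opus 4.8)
The plan is to reduce the problem to a static Bregman projection at the single node $t_i$, solve it by two nested disintegrations of the relative entropy, and then read the two pointwise conditions off the closed form of the optimal segment measure using the reverse-time SDE identities already established.

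During a Bregman projection onto $\Kbound$ the segment measures are treated as independent variables --- consistency of the $t_i$-marginals across adjacent segments is enforced separately by the $\Kbridge$ constraints --- so $\mathcal{J}(\pi)=\sum_{j=0}^{N-1}KL(\pi_{t_j:t_{j+1}}|\bar\pi_{t_j:t_{j+1}})$ decouples completely; segments carrying no constraint are matched to $\bar\pi$ (KL $=0$), and it remains to minimize the single term $KL(\pi_{t_i:t_{i+1}}|\bar\pi_{t_i:t_{i+1}})$ subject to the $t_i$-marginal of $\pi_{t_i:t_{i+1}}$ having position law $\rho_{t_i}$ (its velocity law being free). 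Disintegrate both measures at time $t_i$, $\pi_{t_i:t_{i+1}}=p_{t_i}(\rvm_{t_i})\,\pi(\cdot\mid\rvm_{t_i})$ and $\bar\pi_{t_i:t_{i+1}}=\hat p_{t_i}(\rvm_{t_i})\,\bar\pi(\cdot\mid\rvm_{t_i})$. The chain rule for relative entropy gives
\[
KL(\pi_{t_i:t_{i+1}}\,|\,\bar\pi_{t_i:t_{i+1}})
= KL(p_{t_i}\,|\,\hat p_{t_i}) + \mathbb{E}_{p_{t_i}}\big[\,KL(\pi(\cdot\mid\rvm_{t_i})\,|\,\bar\pi(\cdot\mid\rvm_{t_i}))\,\big],
\]
and the constraint touches only $p_{t_i}$; hence the (nonnegative) second term is killed by choosing the transition kernel of $\pi$ equal to that of $\bar\pi$.

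For the first term, disintegrate once more into position and velocity, $p_{t_i}(\rvx,\rvv)=\rho_{t_i}(\rvx)q(\rvv\mid\rvx)$ and $\hat p_{t_i}(\rvx,\rvv)=\hat\rho_{t_i}(\rvx)\hat q_{t_i}(\rvv\mid\rvx)$ with $\hat\rho_{t_i}=\int\hat p_{t_i}\rd\rvv$, so that $KL(p_{t_i}|\hat p_{t_i})=KL(\rho_{t_i}|\hat\rho_{t_i})+\mathbb{E}_{\rho_{t_i}}[KL(q(\cdot\mid\rvx)\,|\,\hat q_{t_i}(\cdot\mid\rvx))]$; the first summand is a fixed constant and the second is minimized by $q=\hat q_{t_i}$. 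As $KL$ is strictly convex in its first argument on this convex constraint set the minimizer is unique, and assembling the two steps,
\[
\pi^{*}_{t_i:t_{i+1}}=p^{*}_{t_i}\,\bar\pi(\cdot\mid\rvm_{t_i}),\qquad p^{*}_{t_i}(\rvx,\rvv)=\rho_{t_i}(\rvx)\hat q_{t_i}(\rvv\mid\rvx)=\frac{\rho_{t_i}(\rvx)}{\hat\rho_{t_i}(\rvx)}\hat p_{t_i}(\rvx,\rvv),
\]
i.e. $\pi^{*}_{t_i:t_{i+1}}=\frac{\rho_{t_i}}{\hat\rho_{t_i}}\bar\pi_{t_i:t_{i+1}}=\rho_{t_i}\bar\pi_{t_i:t_{i+1}}\big/\int\bar\pi_{t_i:t_{i+1}}\rd\rvm_{t_{i+1}}\rd\rvv_{t_i}$, the denominator being exactly the position marginal $\hat\rho_{t_i}(\rvx_{t_i})$ of the reference at $t_i$. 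Since $\rho_{t_i}/\hat\rho_{t_i}$ depends on $\rvx_{t_i}$ alone, disintegrating $\pi^{*}$ at the full state $\rvm_{t_i}=(\rvx_{t_i},\rvv_{t_i})$ shows $\pi^{*}(\cdot\mid\rvm_{t_i})=\bar\pi(\cdot\mid\rvm_{t_i})$, so $\pi^{*}$ is Markov with the same forward transition kernel --- hence the same forward drift $\vf+g\rvz_t$ --- as the reference. The reference is generated backward from $\mu_{t_{i+1}}$ with backward drift $\vf+g\widehat{\rvz}_t$ and marginal $\hat p_t$; applying the reverse-time identity of Prop.\ref{prop:langevin-Law} (equivalently Thm.\ref{Appendix:thm:opt-sde}, and in mean-matching form Prop.\ref{prop:likelihood-obj}) to it forces $\rvz_t+\widehat{\rvz}_t=g\nabla_{\rvv}\log\hat p_t$ for all $t\in[t_i,t_{i+1}]$, i.e. $\norm{\rvz_t+\widehat{\rvz}_t-g\nabla_{\rvv}\log\hat p_t}_2^2=0$; and the conditional-velocity statement $p_{t_i}(\rvv_{t_i}\mid\rvx_{t_i})\equiv\hat q_{t_i}(\rvv_{t_i}\mid\rvx_{t_i})$ is read directly off $p^{*}_{t_i}(\rvx,\rvv)=\rho_{t_i}(\rvx)\hat q_{t_i}(\rvv\mid\rvx)$. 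Conversely these two conditions together with the position-marginal constraint pin down $\pi^{*}$, so they are also sufficient.

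The step I expect to be the main obstacle is the measure-theoretic bookkeeping: justifying the path-space disintegrations and the chain rule for relative entropy (which needs $\pi\ll\bar\pi$ and in turn $\rho_{t_i}\ll\hat\rho_{t_i}$, so that $\rho_{t_i}/\hat\rho_{t_i}$ is a legitimate Radon--Nikodym density and $\hat\rho_{t_i}>0$ on the support of $\rho_{t_i}$ --- this is where assumptions (i)--(v) and the SDE structure of $\bar\pi$ enter), and then converting the qualitative fact that $\pi^{*}$ inherits the reference's transition kernel into the quantitative drift/policy identity, which relies on the Anderson--Nelson reverse-time representation invoked in Prop.\ref{prop:langevin-Law}.
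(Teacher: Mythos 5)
Your proof is correct, and its skeleton coincides with the paper's: obtain the closed form $\pi^{*}_{t_i:t_{i+1}}=\rho_{t_i}\bar{\pi}_{t_i:t_{i+1}}\big/\int\bar{\pi}_{t_i:t_{i+1}}\rd\rvm_{t_{i+1}}\rd\rvv_{t_i}$, observe that $\pi^{*}$ keeps the reference's forward transition kernels and only reweights the $t_i$-marginal, and read off the drift condition and the conditional-velocity boundary condition. Where you differ is in the two technical ingredients. For the closed form, the paper simply cites \S 4 of \cite{chen2019multi}, whereas you rederive it via the chain rule for relative entropy (disintegrating first at $\rvm_{t_i}$ along the path, then into position and velocity); this is self-contained and makes both uniqueness and the structure $p^{*}_{t_i}=\rho_{t_i}\,\hat{q}_{t_i}(\rvv_{t_i}|\rvx_{t_i})$ transparent, which is exactly where eq.\eqref{eq:conditional-generation} comes from. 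For the condition $\norm{\rvz_t+\widehat{\rvz}_t-g\nabla_{\rvv}\log\hat{p}_t}_2^2=0$, the paper stays in discrete time: it factorizes $\pi^{*}$ into the new initial law times the reference's Euler--Maruyama transition kernels and then invokes the mean-matching construction of \cite{de2021diffusion} to turn kernel matching into the displayed identity, which directly licenses the implementable objective $\Lmm$; you instead argue in continuous time that equality of forward kernels forces equality of forward drifts and use the Anderson--Nelson reversal formula applied to the reference to identify that drift as $\vf+g\,(g\nabla_{\rvv}\log\hat{p}_t-\widehat{\rvz}_t)$, whence $\rvz_t+\widehat{\rvz}_t=g\nabla_{\rvv}\log\hat{p}_t$. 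Both arguments are at the same level of rigor (statements hold a.e.\ on the support of $\bar{\pi}$, and need $\rho_{t_i}\ll\hat{\rho}_{t_i}$ and the validity of time reversal, covered by assumptions (i)--(v)); your flagged caveat is apt, and note that Prop.\ref{prop:langevin-Law} as literally stated assumes the two path measures coincide, while here $\pi^{*}\neq\bar{\pi}$, so what you really invoke is Anderson's formula for $\bar{\pi}$ alone combined with the shared-kernel observation --- which you do say, so there is no gap.
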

  \begin{proof}
    Due to the similarity  of optimization for $\Kbound$, the close form solution of the next path measure is (see \S 4 in \cite{chen2019multi} for detail):
    \begin{align*}
        \pi_{t_{i}:t_{i+1}}^{*}&=\frac{\rho_{t_{i}}\bar{\pi}_{t_i:t_{i+1}}}{\int \bar{\pi}_{t_i:t_{i+1}}\rd\rvm_{t_{i+1}}\rd\rvv_{t_i}}.
      \end{align*}
      By denoting the transition kernel of parameterized SDE driven by backward policy $\zhatt$ as $q(\cdot|\cdot)$, and the time range between $t_i$ and $t_{i+1}$ is discretized into $S$ interval by EM discretization. Then one can get
      \begin{align*}          
        &\pi_{t_{i}:t_{i+1}}^{*} \\
        &=\frac{\rho_{t_i}\bar{\pi}_{t_i:t_{i+1}}}{\int \bar{\pi}_{t_i:t_{i+1}}\rd\rvm_{t_{i+1}}\rd\rvv_{t_i}}\\
        &=\frac{p_{t_i}(\rvx_{t_i})q_{t_i}(\rvm_{t_i}|\rvm_{t_i+\delta_t})\cdots q_{t_{i+1}-\delta_t}(\rvm_{t_{i+1}-\delta_t}|\rvm_{t_{i+1}})\mu_{t_{i+1}}(\rvm_{t_{i+1}})}
        {q_{t_i}(\rvx_{t_i})}\\
        &=\frac{p_{t_i}(\rvx_{t_i})q_{t_i}(\rvx_{t_i},\rvv_{t_i}|\rvx_{t_i+\delta_t},\rvv_{t_i+\delta_t})q_{t_{i+\delta_t}}(\rvx_{t_i+\delta_t},\rvv_{t_i+\delta_t})\cdots q_{t_{i+1}-\delta_t}(\rvm_{t_{i+1}-\delta_t}|\rvm_{t_{i+1}})\mu_{t_{i+1}}(\rvm_{t_{i+1}})}
        {q_{t_i}(\rvx_{t_i})q_{t_{i+\delta_t}}(\rvx_{t_i+\delta_t},\rvv_{t_i+\delta_t})}\\
        &=\frac{p_{t_i}(\rvx_{t_i})q_{t_i}(\rvx_{t_i},\rvv_{t_i},\rvx_{t_i+\delta_t},\rvv_{t_i+\delta_t})\cdots q_{t_{i+1}-\delta_t}(\rvm_{t_{i+1}-\delta_t}|\rvm_{t_{i+1}})\mu_{t_{i+1}}(\rvm_{t_{i+1}})}
        {q_{t_i}(\rvx_{t_i})q_{t_{i+\delta_t}}(\rvx_{t_i+\delta_t},\rvv_{t_i+\delta_t})}\\
        &=\frac{p_{t_i}(\rvx_{t_i})q_{t_i}(\rvv_{t_i},\rvx_{t_i+\delta_t},\rvv_{t_i+\delta_t}|\rvx_{t_i})\cdots q_{t_{i+1}-\delta_t}(\rvm_{t_{i+1}-\delta_t}|\rvm_{t_{i+1}})\mu_{t_{i+1}}(\rvm_{t_{i+1}})}
        {q_{t_{i+\delta_t}}(\rvm_{t_i+\delta_t})}\\
        &=p_{t_i}(\rvx_{t_i})q_{t_i}(\rvv_{t_i},\rvx_{t_i+\delta_t},\rvv_{t_i+\delta_t}|\rvx_{t_i})\frac{q_{t_{i+\delta_t}}(\rvm_{t_i+\delta_t}|\rvm_{t_i+2\delta_t})\cdots q_{t_{i+1}-\delta_t}(\rvm_{t_{i+1}-\delta_t}|\rvm_{t_{i+1}})\mu_{t_{i+1}}(\rvm_{t_{i+1}})}
        {q_{t_{i+\delta_t}}(\rvm_{t_i+\delta_t})}\\
        &=p_{t_i}(\rvx_{t_i})q(\rvv_{t_i}|\rvx_{t_i})q(\rvm_{t_i+\delta_t}|\rvm_{t_i})\frac{q_{t_{i+\delta_t}}(\rvm_{t_i+\delta_t}|\rvm_{t_i+2\delta_t})\cdots q_{t_{i+1}-\delta_t}(\rvm_{t_{i+1}-\delta_t}|\rvm_{t_{i+1}})\mu_{t_{i+1}}(\rvm_{t_{i+1}})}
        {q_{t_{i+\delta_t}}(\rvm_{t_i+\delta_t})}\\
        &=p_{t_i}(\rvx_{t_i})q(\rvv_{t_i}|\rvx_{t_i})q(\rvm_{t_i+\delta_t}|\rvm_{t_i})\frac{q_{t_{i+\delta_t}}(\rvm_{t_i+2\delta_t}|\rvm_{t_i+\delta_t})\cdots q_{t_{i+1}-\delta_t}(\rvm_{t_{i+1}-\delta_t}|\rvm_{t_{i+1}})\mu_{t_{i+1}}(\rvm_{t_{i+1}})}
        {q_{t_{i+\delta_t}}(\rvm_{t_i+2\delta_t})} \numberthis{\label{Appendix:eq:change-prob}}\\
        &\textbf{Doing eq.\ref{Appendix:eq:change-prob} revursively}\\
        &=p_{t_i}(\rvx_{t_i})q(\rvv_{t_i}|\rvx_{t_i})q(\rvm_{t_i+\delta_t}|\rvm_{t_i})\prod_{s=1}^{S-1}q_s(\rvm_{t_i+(s+1)\cdot\delta_t}|\rvm_{t_i+s\cdot\delta_t})\\
        &=p_{t_i}(\rvx_{t_i})q(\rvv_{t_i}|\rvx_{t_i})\prod_{s=0}^{S-1}q_s(\rvm_{t_i+(s+1)\cdot\delta_t}|\rvm_{t_i+s\cdot\delta_t})
      \end{align*}
      
      According to \cite{de2021diffusion}, given the policy $\zhatt$, the transition kernel $q_s(\rvm_{t_i+(s+1)\cdot\delta_t}|\rvm_{t_i+s\cdot\delta_t})$ can be estimated by $\zhatt$ (see Proposition 3 in \cite{de2021diffusion})and it can be treated as the label for the forward policy $\rvz_t$ for all $s$. Thus, if $\pi_{t_it_{i+1}}$ is aligned with $\pi^{*}_{t_it_{i+1}}$, then one can construct following objective function for policy $\rvz_t$:
      \begin{align}
        \mathcal{L}&=\sum_{t}\norm{\underbrace{\rvm_t+\delta_t\rmZ_t(\rvm_t)}_{\circled{1}}-(\rvm_t+\underbrace{\rvm_{t+\delta_t}+\delta_t\Zhat_{t+\delta_t}(\rvm_{t+\delta_t})}_{\circled{2}}-(\underbrace{\rvm_{t}+\delta_t\Zhat_{t+\delta_t}(\rvm_{t})}_{\circled{3}}))}^2_2\\
        &=\sum_{t}\norm{\delta_t\rmZ_t(\rvm_t)+\delta_t\Zhat_{t+\delta_t}(\rvm_{t})-(\rvm_{t+\delta_t}-\rvm_{t}-\delta_t\Zhat_{t+\delta_t}(\rvm_{t+\delta_t}))}^2_2\\
        &\approx\sum_{t}\norm{\rmZ_t(\rvm_t)+\Zhat_{t+\delta_t}(\rvm_{t})-\nabla_{\rvv}\log p_t^{(\ref{Appendix:eq:opt-BSDE})}}^2_2\\
        &\textbf{due to the special structure of $\rmZ_t$ and $\Zhatt$}\\
        &=\sum_{t}\norm{\rvz_t(\rvm_t)+\zhat_{t+\delta_t}(\rvm_{t})-\nabla_{\rvv}\log p_t^{(\ref{Appendix:eq:opt-BSDE})}}^2_2
      \end{align}
    Where $\circled{1},\circled{2},\circled{3}$ corresponds to $F_{k}$, $B_k$, and $B_{k+1}$ in \cite{de2021diffusion} respectively.
    Furthermore, we need to find a density function $p_{t_i}(\rvv_{t_i}|\rvx_{t_i})$ which satisfies
    \begin{align*}
        p_{t_i}(\rvx_{t_i})p_{t_i}(\rvv_{t_i}|\rvx_{t_i})&\equiv p_{t_i}(\rvx_{t_i})\hat{q}(\rvv_{t_i}|\rvx_{t_i})\\
        p_{t_i}(\rvv_{t_i}|\rvx_{t_i})&\equiv \hat{q}_{t_i}(\rvv_{t_i}|\rvx_{t_i})
    \end{align*}
    to be the new boundary condition.
  \end{proof}

  \begin{proposition}[Optimality w.r.t. $\Kbridge$]\label{Appendix:prop:bound-bridge}
    Given the reference path measure $\bar{\pi}$ driven by the policy $\widehat{\rvz}_t$ from boundary $\mu_{t_N}$ in the reverse time direction, the optimal path measure in the forward time direction of the following problem
    \begin{align*}
      \min_{\pi} \mathcal{J}(\pi)&:=\sum_{i=0}^{N-1}KL\left(\pi_{t_i:t_{i+1}}|\bar{\pi}_{t_i:t_{i+1}}\right), \quad s.t \quad \pi \in \Kbridge  =\left\{\cap_{i=1}^{N-1}\mathcal{K}^{2}_{t_i}\right\}
    \end{align*}
    \vspace{-0.2in}
    \begin{align*}
      \text{is:}\quad \pi_{t_{0}t_{N}}^{*}&=\frac{q_{t_0}\bar{\pi}_{t_0:t_{N}}}{\int \bar{\pi}_{t_0:t_{N}}\rd\rvm_{t_N}\rd\rvv_{t_0}}.
    \end{align*}
    when $\pi_{t_{0}t_{N}}\equiv \pi^{*}_{t_{0}t_{N}}$,
    % \begin{align*}
    %   \pi^{\theta}_{t_{0}t_{N}}\equiv \pi^{*}_{t_{0}t_{N}}
    % \end{align*}
    the following equations need to hold $\forall t \in [t_{0},t_{N}]$:
    \begin{subequations}
        \begin{align}
            &\norm{\rvz_t+\widehat{\rvz}_t-g\nabla_{\rvv}\log \hat{p}_t}_2^2=0 \\
            &p_{t_0}(\rvv_{t_0},\rvx_{t_0})\equiv \hat{q}_{t_0}(\rvv_{t_0},\rvx_{t_0})
        \end{align}        
    \end{subequations}
  \end{proposition}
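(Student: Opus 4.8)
The plan is to follow the proof of Proposition~\ref{Appendix:prop:bound-opt} closely, tracking the two places where the weaker constraint $\Kbridge=\cap_{i=1}^{N-1}\mathcal{K}^{2}_{t_i}$ changes the bookkeeping. First I would record the structural meaning of $\Kbridge$: each $\mathcal{K}^{2}_{t_i}$ only asks that the time-$t_i$ marginal coming from the segment $[t_{i-1},t_i]$ agree with the one coming from $[t_i,t_{i+1}]$, so intersecting over $i=1,\dots,N-1$ says precisely that the $N$ pieces $\pi_{t_i:t_{i+1}}$ glue into one consistent path measure $\pi_{t_0:t_N}$ with \emph{no} intermediate position-marginal constraints. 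Hence $\min_{\pi}\sum_{i=0}^{N-1}KL(\pi_{t_i:t_{i+1}}|\bar{\pi}_{t_i:t_{i+1}})$ over $\Kbridge$ is a single KL (Bregman) projection of $\bar{\pi}$ onto the set of consistent path measures, and the closed form from \S4 of \cite{chen2019multi}, in which only the $t_0$-boundary factor is left free, gives $\pi^{*}_{t_0:t_N}=q_{t_0}\bar{\pi}_{t_0:t_N}\big/\!\int\bar{\pi}_{t_0:t_N}\rd\rvm_{t_N}\rd\rvv_{t_0}$.

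Next I would extract the two pointwise conditions by the Euler--Maruyama factorisation used in Proposition~\ref{Appendix:prop:bound-opt}. Discretising $[t_0,t_N]$ with step $\delta_t$ and expanding $\bar{\pi}_{t_0:t_N}$ through the transition kernels of the SDE~(\ref{Appendix:eq:opt-BSDE}) driven by $\widehat{\rvz}_t$, the ratio in the closed form telescopes --- the recursion~(\ref{Appendix:eq:change-prob}), now applied across all of $[t_0,t_N]$ rather than segment by segment --- to
\[
\pi^{*}_{t_0:t_N}=q_{t_0}(\rvx_{t_0},\rvv_{t_0})\prod_{s}q_s\!\left(\rvm_{t_0+(s+1)\cdot\delta_t}\mid\rvm_{t_0+s\cdot\delta_t}\right).
\]
Matching the transition factors: by Proposition~3 of \cite{de2021diffusion} each $q_s$ is the kernel induced by the backward policy and therefore the regression target for $\rvz_t$, so the mean-matching residual of Proposition~\ref{Appendix:Prop:likelihood-obj} vanishes iff $\norm{\rvz_t+\widehat{\rvz}_t-g\nabla_{\rvv}\log\hat{p}_t}_2^2=0$ for every $t\in[t_0,t_N]$ --- now on the whole interval, not per segment, because $\Kbridge$ has no interior constraints. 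Matching the boundary factor: since $\Kbridge$ fixes neither the position marginal nor the velocity at $t_0$, the \emph{entire} joint is inherited from the reference, giving the new boundary condition $p_{t_0}(\rvv_{t_0},\rvx_{t_0})\equiv\hat{q}_{t_0}(\rvv_{t_0},\rvx_{t_0})$ --- the genuine departure from Proposition~\ref{Appendix:prop:bound-opt}, where the position marginal at $t_i$ stays $\rho_{t_i}$ and only the velocity conditional $p_{t_i}(\rvv_{t_i}|\rvx_{t_i})$ is read off from the reference.

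The step I expect to be the main obstacle is the first one: rigorously showing that minimising the \emph{sum} of per-segment KL divergences over the gluing set $\Kbridge$ collapses to a single KL projection with the stated minimiser --- i.e.\ that on the consistent subset the objective agrees with $KL(\pi_{t_0:t_N}|\bar{\pi}_{t_0:t_N})$ up to terms not depending on the free variables, so that the interior Sinkhorn potentials are determined and only the $t_0$-factor survives --- together with verifying that $\int\bar{\pi}_{t_0:t_N}\rd\rvm_{t_N}\rd\rvv_{t_0}$ is finite and positive, which follows from assumptions~\ref{assum:i}--\ref{assum:v}. Once this is in place, the rest is the same bookkeeping as in Proposition~\ref{Appendix:prop:bound-opt}.
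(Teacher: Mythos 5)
Your proposal matches the paper's intended argument: the paper's own proof of this proposition is literally ``Same proof as Proposition~\ref{Appendix:prop:bound-opt}'' (plus a remark identifying $\pi^{*}$ with $\mu_{t_N}\bar{\pi}_{\cdot|t_N}$ and its time reversal), and your adaptation --- closed form from \cite{chen2019multi}, Euler--Maruyama telescoping to get the mean-matching condition now over all of $[t_0,t_N]$, and the full joint $p_{t_0}(\rvv_{t_0},\rvx_{t_0})\equiv\hat{q}_{t_0}(\rvv_{t_0},\rvx_{t_0})$ as the inherited boundary --- is exactly that proof with the constraint-set changes tracked. If anything you are more careful than the paper, since the step you flag as the main obstacle (that the sum of per-segment KLs over the gluing set $\Kbridge$ reduces to a single KL projection with only the $t_0$-factor free) is passed over silently in the paper's one-line proof.
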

\begin{proof}
    Same proof as \ref{Appendix:prop:bound-opt}.
\end{proof}
\begin{remark}
    The optimizer of such a problem can be represented as
    \begin{align}
        \pi^{*}=\mu_{t_{N}}\bar{\pi}_{\cdot|t_N}
    \end{align}
    which can also be represented as,
    \begin{align}\label{eq:rev-Kbridge}
        \pi^{*}=\int \bar{\pi}\rd \mu_{t_{N}}\cdot (\bar{\pi}_{\cdot|t_N})^{R}
    \end{align}
    Where the notation $R$ represents for the time reversal. The Proposition.\ref{prop:bound-bridge} is basically using neural network $\rmZ_{t}^{\theta}$ to approximate eq.\ref{eq:rev-Kbridge}.
\end{remark}
\section{Experiment Details}\label{Appendix:exp-detail}
We test DMSB on 2D synthetic datasets and realworld scRNA-seq dataset \cite{moon2019visualizing}. We parameterize $\rvz(t,\rvm;\theta)$ and $\zhat(t,\rvm;\phi)$ with residual-based networks for all datasets (see.fig.\ref{fig:NN-arch}). The network adopts position encoding and is trained with AdamW\cite{loshchilov2017decoupled} on one Nvidia 3090 Ti GPU. We use constant g(t)
for simplicity though the framework can adopt time varying function g(t). We set the time horizon
$T = t_N = 1 \cdot N$ and interval $\delta_t = 0.01$. We use EM discretization throughout the whole paper. For scRNA-seq dataset, we split data into train and test subsets(85\% and 15\%).All the experiment results are simulated by all-step
push forward from initial data points at time $t = t_0$.

\textbf{MIOFlow and NLSB setup:} We use the official implementation of NLSB and MIOFlow.For MIOFlow, we report the best performance for all experiments w/GAE(or AE) and w/o GAE(or AE) embedding. For NLSB, we enlarge the size of the neural network to the best of our GPU capacity for a 100-dimensional scRNA-seq dataset and report the best performance during the training.

 We evaluate the velocity of NLSB, as an SDE model, by its estimated drift term at time steps $t=\left\{1,2,3,4,5\right\}$. Because MIOflow w/ GAE simulates trajectories in the latent space, we estimate the velocity by using the forward finite difference technique with discretization $1E-3$ sec after mapping from the latent code to the original space. We run the experiments of 100-D and 5-D RNAsc datasets and average the discrepancy between ground truth velocity and estimated velocity over snapshot time. The numerical values are listed in the Table.\ref{table:100D-velocity-exp} and Table.\ref{table:5D-velocity-exp}. The plot of velocity and position can be found in Fig.\ref{fig:5-dim RNA} and Fig.\ref{fig:5-dim RNA velocity}. \textbf{We do not want to underestimate any prior work and tried out best to tune the prior work. Feel free to communicate with the first author if one can reproduce better results in the experiment section, and we are willing to update it.}

\textbf{Metrics and Evaluations} The 1-Wasserstein Distance suffers from the curse of dimensionality seriously. In the main paper, we are using Sliced-Wasserstein Distance (SWD) and Maximum Mean Distance as our criterion for 100-dim RNA dataset. An example is listed in the following toy code. One can notice that $W_1$ suffers from the curse of dimensionality seriously, the distance between two gaussian samples is even larger than the distance between gaussian and zeros (See following code snapshot). Hence such a metric is not suitable for high dimension ($\geq 100$) dataset evaluation even though some papers report $W_1$. In order to better evaluate our model compared with baselines, we are using $W_1$, Energy Distance, Max-sliced Wasserstein distance, Sliced-Wasserstein Distance and MMD. Our metric is adapted from \href{https://www.kernel-operations.io/geomloss/}{Geoloss ($W_1$ and $Energy$)}, \href{https://pythonot.github.io/gen_modules/ot.sliced.html}{POT (Sliced Wassersetein and Maximum-Sliced Wasserstein)} and this \href{https://github.com/ZongxianLee/MMD_Loss.Pytorch}{repo (MMD)}.

\textbf{Trajectories Cache} Similar to prior work \citep{chen2021likelihood,de2021diffusion}, we also need to cache the trajectories for training purposes. We cache 4096 trajectories for each Bregman Projection.
\subsection*{Special Clarification for NLSB}\label{sec:special-clarification-NLSB}
We evaluate the velocity of NLSB, as an SDE model, by its estimated drift term at time steps $t=\left\{1,2,3,4,5\right\}$. It may not be reasonable to consider the drift term as the real velocity, but the drift term can certainly depict a trend of SDE, so we still provide the result here.
\begin{lstlisting}[language=Python, caption=Distance compute by SWD distance with 1000 samples and 100 dimensions.\label{lst:SWD}]
from ot.sliced import sliced_wasserstein_distance
a=torch.randn(1000,100) #1000 gaussian samples with dimension 100
b=torch.zeros(1000,100) #1000 zeros samples with dimension 100
c=torch.randn(1000,100) #1000 gaussian samples with dimension 100
Loss=sliced_wasserstein_distance
print('SWD distance between a and b is: {}'.format(Loss(a,b)))
print('SWD distance between a and c is: {}'.format(Loss(a,c)))
#SWD distance between a and b is: 1.0433608293533325
#SWD distance between a and c is: 0.11096614599227905
\end{lstlisting}
\begin{lstlisting}[language=Python, caption=Distance compute by $W_1$ distance with 1000 samples and 100 dimensions.\label{lst:W1}]
from geomloss import SamplesLoss
a=torch.randn(1000,100) #1000 gaussian samples with dimension 100
b=torch.zeros(1000,100) #1000 zeros samples with dimension 100 
c=torch.randn(1000,100) #1000 gaussian samples with dimension 100
Loss=SamplesLoss('sinkhorn',p=1)
print('W1 distance between a and b is: {}'.format(Loss(a,b)))
print('W1 distance between a and c is: {}'.format(Loss(a,c)))
#W1 distance between a and b is: 9.781818389892578
#W1 distance between a and c is: 11.734640121459961
\end{lstlisting}

\textbf{Training:}We use Exponential Moving Average (EMA) with a decay rate of 0.999. Table.\ref{table:hyperparam} details the hyperparameters used for each dataset.The learning rate for all the datasets is set to be 2e-4 and the training batching size is $256$. For computation efficiency, we cache large batch size of empirical samples from reference trajectory and sample training batch size from the cache data. The hyperparameters can be found in Table.\ref{table:hyperparam}.
    \begin{figure}[H]
      \includegraphics[width=\textwidth]{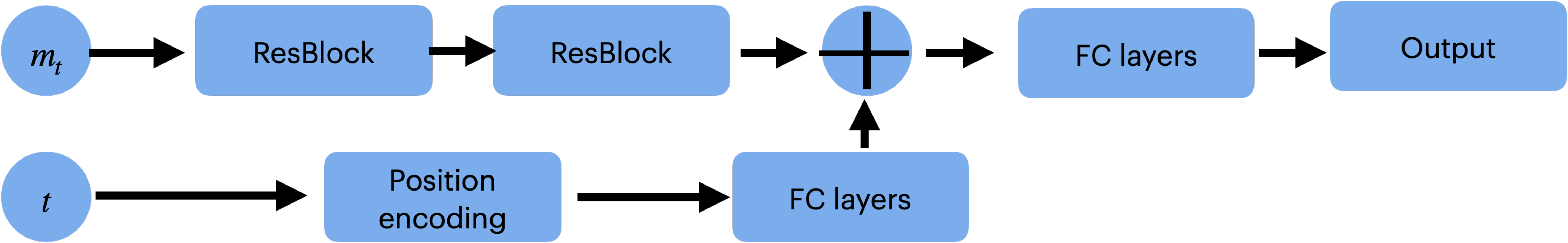}
      \caption{Neural network architecture for all experiments. The network size (\# parameters) are varying between different tasks. }
      \label{fig:NN-arch}
   \end{figure}
\begin{figure}[H]
    \begin{minipage}{\textwidth}
      \caption{Training Hyper-parameters}
      \label{table:hyperparam}
        \centering
        \centering
        \begin{tabular}{r|rrrrrrr}
          \toprule
          Dataset & time steps  &\# BI &$g(t)$& \# Parameters & $T$&$SNR$&\# $\rvv_t$ Langevin\\
          \midrule
          Semicircle        &15    &2000 &0.2& 1.21M &3&0.15&1\\[1pt]
          Petal             &30    &2000 &0.2& 1.21M&2&0.15&1\\[1pt]
          GMM               &15    &2000 &0.2& 1.21M&4&0.15&1\\[1pt]
          scRNA (100 dim)   &15    &4000 &0.4& 1.34M&4&0.15&1\\
          \bottomrule
        \end{tabular}
        % \vskip 0.3in
    \end{minipage}
  \end{figure}
\textbf{Langevin sampling:}The Langevin sampling procedure for the velocity is summarized in \ref{alg:sample-langevin}. Given some pre-defined signal-to-noise ratio r (we
set snr =0.15 for all experiments), the
Langevin noise scale $\sigma$ at each time
step t and each corrector step i is computed by
\begin{align}
  \sigma_{t} = \frac{2r^2g^2\norm{\epsilon}^2}{ \norm{{\rvz}(t, \rvm_{t}) + \widehat{\rvz}(t, \rvm_{t})}^2},
  \label{eq:noise-scale}
\end{align}
\section{Algorithms}\label{Appendix:algorithms}
\begin{figure}[H]
  \begin{minipage}{\textwidth}
  \begin{algorithm}[H]
    \small
       \caption{\small Sampling Procedure of DMSB}
       \label{alg:sample}
    \begin{algorithmic}
     \STATE {\bfseries Input:}
        Policies $\rvz(\cdot,\cdot; \theta)$ and $\widehat{\rvz}(\cdot,\cdot; \phi)$ Total sampling step $S=\frac{t_N}{\delta_t}$. Data distributions $\rho_{t_i}$. Initializing velocity distributions $\gamma_{t_i}=\mathcal{N}(0,\rmI)$ if they are not avaliable.\\
        % $\quad$ noise scales for Langevin corrector $\{\sigma_i\}$
     % \REPEAT
         \FOR{$s=0$ {\bfseries to} $S-1$ }
         \IF {s==0}
            \STATE Sample position data $\rvx_{t_0}$ from $\rho_{t_0}$.
            \IF{ ground truth velocity distribution $\gamma_{t_0}$ avaliable}
                \STATE Sample velocity data $\rvv_{t_0}$ from $\gamma_{t_0}$
            \ELSE
                \STATE Sample velocity data $\rvv_{t_0}$ by Langevin simulation conditioning on $\rvx_{t_0}$.(Algorithm.\ref{alg:sample-langevin})
            \ENDIF
            \STATE $\rvm_{t_{0}}=[\rvx_{t_0},\rvv_{t_0}]^\T$
         \ENDIF
         \STATE Simulating dynamics: $\rd \rvm_t  = \left[\vf(\rvm_t,t) + g(t)\rmZ_t \right]\rd t +g(t)\rd \rvw_t$(eq.\ref{Appendix:eq:opt-FSDE})
         \ENDFOR
       \STATE {\bfseries return} $\rvm_{t\in[t_0,t_N]}$
     % \UNTIL{ converges }
     % \STATE xxxx
    \end{algorithmic}
  \end{algorithm}
  \end{minipage}
\end{figure}
\begin{figure}[H]
  \begin{minipage}{\textwidth}
  \begin{algorithm}[H]
    \small
       \caption{\small Langevin Sampler at $t_i$ marginal constraint}
       \label{alg:sample-langevin}
    \begin{algorithmic}
     \STATE {\bfseries Input:}
        policies $\rvz(\cdot,\cdot; \theta)$ and $\widehat{\rvz}(\cdot,\cdot; \phi)$, Previous timestep predicted velocity $\rvv_{t_i}$. \\
        % $\quad$ noise scales for Langevin corrector $\{\sigma_i\}$
     % \REPEAT
      %  \STATE Previous timestep predicted velocity $\rvv_{t_i}$.
       \STATE Sample position from ground truth $\rvx_{t_i} \sim \rho_{t_i}$.
         \FOR{$step=0$ {\bfseries to} $\#$ Langevin steps }
           \STATE Sample $\eps \sim \calN(\mathbf{0},\mI)$.
           \STATE Construct new $\rvm_{t_i}=[\rvx_{t_i},\rvv_{t_i}]^\T$
           \STATE Compute $\nabla_\rvv \log \tilde{p}_{t}^{\theta,\phi} \approx [{\rvz(t_i, \rvm_{t_i}) {+} \widehat{\rvz}(t_i, \rvm_{t_i})}]/g$.
           \STATE Compute $\sigma_{t}$ with \eqref{eq:noise-scale}.
           \STATE Langevin Sampling $\rvv_{t_i} \leftarrow \rvv_{t_i} + \sigma_{t_i}\nabla_\rvv  \log \tilde{p}_{t_i}^{\theta,\phi} + \sqrt{2\sigma_t} \eps$.
         \ENDFOR
       \STATE {\bfseries return} $\rvm_{t_i}=[\rvx_{t_i},\rvv_{t_i}]^\T$
     % \UNTIL{ converges }
     % \STATE xxxx
    \end{algorithmic}
  \end{algorithm}
  \end{minipage}
\end{figure}
  
\begin{figure}[H]
    \begin{algorithm}[H]
    \small
         \caption{\small DMSB Training}
         \label{Appendix:algo}
      \begin{algorithmic}
       \STATE {\bfseries Input:}
       $N+1$ Marginal position distribution $\rho_{t_i}, i\in[0,N]$.Parametrized policies $\rvz(\cdots;\theta)$ and $\zhat(\cdots;\phi)$. The number of Bregman Iteration $B$. Initialize postion and velocity at time step $t_i:\bar{\rvm}_{t_i}:=None$ for the first iteration.
       \IF{ Use ground truth velocity}
            \STATE set prior velocity: $\gamma_{t_i}=\gamma_{t_i}$
        \ELSE
            \STATE set initial velocity $\gamma_{t_i}=\mathcal{N}(0,\mI)$
        \ENDIF       
         \FOR{$b=0$ {\bfseries to} $B-1$ }
         \FOR{$k=N$ {\bfseries to} $1$ }
         \STATE $\rvz^{\phi},\_=\markblue{OptSubSet}(t_{k},t_{k-1},\rvz_{ref}=\rvz^{\theta},\rvz_{opt}=\rvz^{\phi},\eta=\phi,\bar{\rvm}=None)$ [Optimize $\Kbound$]
         \ENDFOR
         \FOR{$k=0$ {\bfseries to} $N-1$ }
         \STATE $\rvz^{\theta},\_=\markblue{OptSubSet}(t_{k},t_{k+1},\rvz_{ref}=\rvz^{\phi},\rvz_{opt}=\rvz^{\theta},\eta=\theta,\bar{\rvm}=None)$ [Optimize $\Kbound$]
         \ENDFOR
         \STATE $\rvz^{\phi},\widehat{\rvm}=\markblue{OptSubSet}(t_{N},t_{0},\rvz_{ref}=\rvz^{\theta},\rvz_{opt}=\rvz^{\phi},\eta=\phi,\bar{\rvm}=\bar{\rvm})$ [Optimize $\Kbridge$]
         \FOR{$k=0$ {\bfseries to} $N-1$ }
         \STATE $\rvz^{\theta},\_=\markblue{OptSubSet}(t_{k},t_{k+1},\rvz_{ref}=\rvz^{\phi},\rvz_{opt}=\rvz^{\theta},\eta=\theta,\bar{\rvm}=\bar{\rvm})$ [Optimize $\Kbound$]
         \ENDFOR
         \FOR{$k=N$ {\bfseries to} $1$ }
         \STATE $\rvz^{\phi},\_=\markblue{OptSubSet}(t_{k},t_{k-1},\rvz_{ref}=\rvz^{\theta},\rvz_{opt}=\rvz^{\phi},\eta=\phi,\bar{\rvm}=None)$ [Optimize $\Kbound$]
         \ENDFOR
         \STATE $\rvz^{\phi},\widehat{\rvm}=\markblue{OptSubSet}(t_{0},t_{N},\rvz_{ref}=\rvz^{\phi},\rvz_{opt}=\rvz^{\theta},\eta=\theta,\bar{\rvm}=\bar{\rvm})$  [Optimize $\Kbridge$]
         \ENDFOR
      \end{algorithmic}
\end{algorithm}
\end{figure}
\begin{figure}[H]
\begin{minipage}[t]{\textwidth}
    \begin{algorithm}[H]
    \small
         \caption{\small \markblue{\textbf{Function} OptSubSet} (Optimization for subsets)}
         \label{Appendix:algo:bregman2}
      \begin{algorithmic}
        \STATE {\bfseries input:}
            Initial time $t_i$ and terminal time $t_j$. Reference path measure boundary condition $\rho_{t_i}$. Reference path measure driver $\rvz_{ref}$. Policy being optimized $\rvz_{opt}$ and corresponding parameter $\eta$. Empirical sample form last iteration $\widehat{\rvm}$.
        \STATE {\bfseries output:}
            $\rvz_{opt}$,samples $\widehat{\rvm}_{t_j}$ from reference path measure.
        \IF{$\widehat{\rvm}$ is None}
            \STATE Sample position data $\rvx_{t_i}$ from $\rho_{t_i}$.
            \IF{ velocity distribution $\gamma_{t_i}$ avaliable}
                \STATE Sample conditional velocity data $\rvv_{t_i}$ from $\gamma_{t_i}$
            \ELSE
                \STATE Sample velocity data $\rvv_{t_i}$ by Langevin simulation conditioning on $\rvx_{t_i}$.(see Algorithm.\ref{alg:sample-langevin}.)
            \ENDIF
            \STATE $\rvm_{t_{i}}=[\rvx_{t_i},\rvv_{t_i}]^\T$
        \ELSE
            \STATE $\rvm_{t_i}=\bar{\rvm}$
        \ENDIF
        \STATE Sample trajectory $\rvm_{t\in[t_i,t_{j}]}$ from $\rvm_{t_{i}}$ using $\rvz_{ref}$
        \STATE Compute $\mathcal{L}=\alpha\Lmm+(1-\alpha)\mathcal{L}_{reg}$     $\quad (\text{Regularization of SB} \ L_{reg}$\cite{tseng2021regularizing} is optional $)$
        \STATE update $\eta$
      \end{algorithmic}
\end{algorithm}
\end{minipage}
\end{figure}

\section{Additional Diagram}\label{Appendix:3-marg-diagram}
    \begin{figure}[H]
      \includegraphics[width=\textwidth]{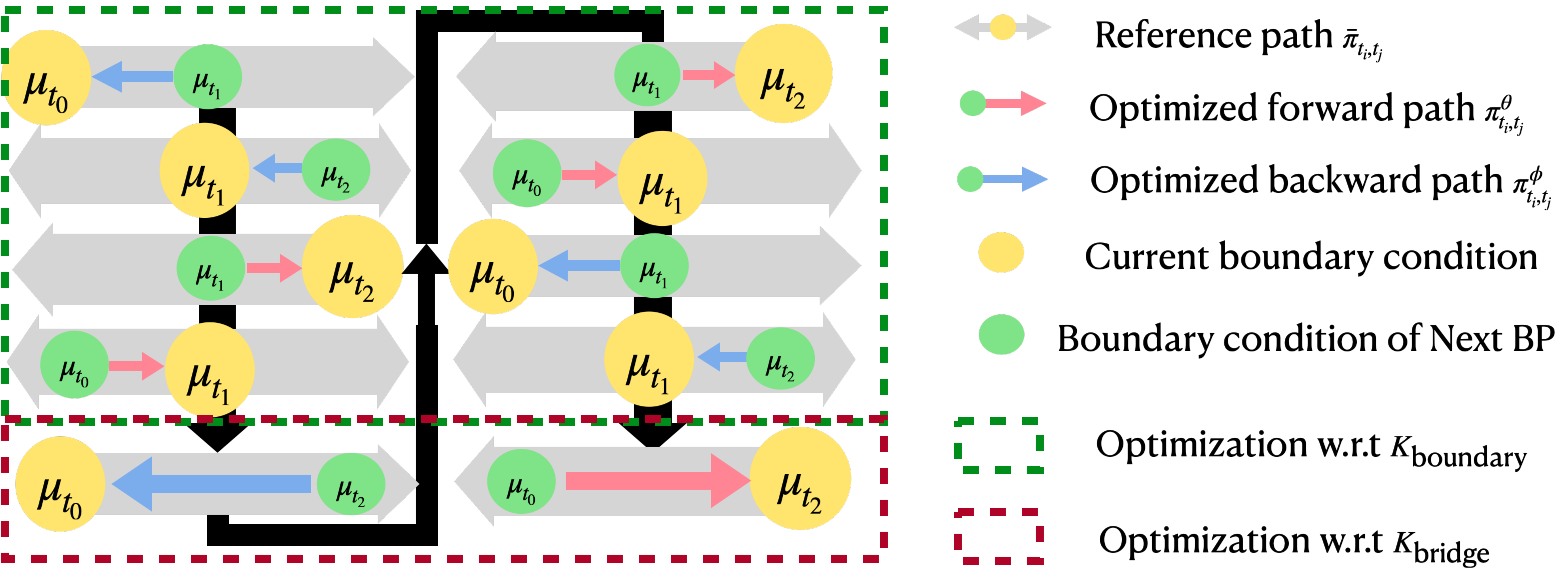}
      \caption{The detailed example diagram of Fig.\ref{fig:DMSB_procedure}. We demonstrate an example of 3 marginals case. The training scheme can be extended to general N marginals easily. The figure consists of two BIs that differs by the training order. Given the reference path measure, we first run the Bregman Projection (BP) within the subset of $\Kbound$ sequentially and end up with the constraint $\Kbridge$.}
      \label{fig:3-marg-diagram}
   \end{figure}

\section{Additional Experiment}\label{Appendix:Additional Experiment}
\begin{table}[ht]
\caption{Our algorithm results over 3 seeds. Numerical result of MMD and SWD on 100 dimensions single-cell RNA-seq dataset and results for leaving out marginals at different observation. DMSB outperforms prior work by a large margin for both metrics and all leave-out case. }\label{table:100D-RNA-3seeds}
    \centering
\begin{tabular}{|*{7}{c|}}
    \hline
LO   & Metrics  & $t_1$  & $t_2$  & $t_3$ & $t_4$ & Avg   \\
    \hline
w/o LO  & MMD$\downarrow$& 0.021$\pm$1E-3& 0.029$\pm$5E-3&0.038$\pm$2E-3 & 0.034$\pm$2E-3&0.032$\pm$3E-3\\  \cline{2-7}
        & SWD$\downarrow$& 0.114$\pm$5E-2& 0.155$\pm$2E-2& 0.19$\pm$3E-2 & 0.155$\pm$1E-2& 0.16$\pm$2E-2\\  \cline{2-7}
    \hline
w/ LO-$t_1$& MMD$\downarrow$&0.09$\pm$1E-3& 0.019$\pm$1E-2& 0.032$\pm$2E-2& 0.029$\pm$2E-2& 0.042$\pm$2E-2\\  \cline{2-7}
           & SWD$\downarrow$& 0.140$\pm$2E-2 & 0.155$\pm$1E-2& 0.19$\pm$2E-2& 0.155$\pm$1E-2& 0.153$\pm$3E-2\\  \cline{2-7}
    \hline
w/ LO-$t_2$& MMD$\downarrow$& 0.021$\pm$1E-3& 0.065$\pm$5E-3&0.032$\pm$2E-3 & 0.02$\pm$2E-3&0.033$\pm$3E-3\\  \cline{2-7}
           & SWD$\downarrow$& 0.100$\pm$5E-2& 0.202$\pm$2E-2& 0.13$\pm$3E-2 & 0.191$\pm$1E-2& 0.155$\pm$2E-2\\  \cline{2-7}
    \hline
w/ LO-$t_3$& MMD$\downarrow$&0.025$\pm$2E-3& 0.026$\pm$2E-2& 0.075$\pm$1E-2& 0.029$\pm$2E-2& 0.040$\pm$2E-2\\  \cline{2-7}
           & SWD$\downarrow$& 0.124$\pm$2E-2 & 0.14$\pm$1E-2& 0.27$\pm$2E-2& 0.18$\pm$1E-2& 0.179$\pm$3E-2\\  \cline{2-7}
\bottomrule
\end{tabular}
\label{tab:multicol}
\end{table}

\begin{table}[H]
	\vskip -0.2in
        \caption{Numerical result of Wasserstein-1 ($W_1$), MMD,  energy distance and Max-sliced Wasserstein distance (MWD) on \textbf{position} of 5 dimensions single-cell RNA-seq dataset using 500 generative samples and 500 ground truth data.}\label{table:5D-pos-exp}
	\begin{center}
	\begin{small}
	\begin{tabular}{cr|cr|cr|cr}
	\toprule
	Dim=5 &Energy $\downarrow$ & MMD $\downarrow$ &$W_1 \downarrow$ & SWD$\downarrow$ &MWD $\downarrow$ \\
  \midrule
  NLSB& 0.04	&0.10 &0.74 &0.24 &0.48\\
  MIOFLOW& 0.09	&0.28 &	0.79&0.388&0.66\\
  \textbf{DMSB(ours)}&\textbf{0.03}&\textbf{0.06}	&\textbf{0.67}&\textbf{0.22}&	\textbf{0.41}\\
	\bottomrule
	\end{tabular}
	\end{small}
	\end{center}
	\vskip -0.1in
\end{table}

\begin{table}[H]
	\vskip -0.2in
        \caption{Numerical result of Wasserstein-1 ($W_1$), MMD,  energy distance and Max-sliced Wasserstein distance (MWD) on the \textbf{velocity} of 5 dimensions single-cell RNA-seq dataset using 500 generative samples and 500 ground truth data.}\label{table:5D-velocity-exp}
	\begin{center}
	\begin{small}
	\begin{tabular}{cr|cr|cr|cr}
	\toprule
	Dim=5 &Energy $\downarrow$ & MMD $\downarrow$&$W_1\downarrow$& SWD$\downarrow$ &MWD $\downarrow$ \\
  \midrule
  NLSB\tablefootnote{See special clarification (Appendix.\ref{sec:special-clarification-NLSB}) for the velocity generated NLSB}& 0.44	&1.37 &1.75 &0.83&\textbf{1.40}\\
  MIOFLOW& 0.68&2.11 &1.88&0.94&1.54\\
  \textbf{DMSB(ours)}&\textbf{0.40}&\textbf{0.85}	&\textbf{1.67}&\textbf{0.74}&1.43\\
	\bottomrule
	\end{tabular}
	\end{small}
	\end{center}
	\vskip -0.1in
\end{table}

\begin{table}[H]
	\vskip -0.2in
        \caption{Numerical result of Wasserstein-1 ($W_1$), MMD,  energy distance and Max-sliced Wasserstein distance (MWD) on the \textbf{velocity} of 100 dimensions single-cell RNA-seq dataset using 500 generative samples and 500 ground truth data.}\label{table:100D-velocity-exp}
	\begin{center}
	\begin{small}
	\begin{tabular}{cr|cr|cr|cr}
	\toprule
	Dim=100 &Energy $\downarrow$ & MMD $\downarrow$& SWD$\downarrow$ &MWD $\downarrow$ \\
  \midrule
  NLSB\tablefootnote{See special clarification (Appendix.\ref{sec:special-clarification-NLSB}) for the velocity generated NLSB} & 2.12	&1.6 &0.94 &1.27\\
  MIOFLOW& 9.18	&2.41 &	1.89&5.66\\
  \textbf{DMSB(ours)}&\textbf{0.36}&\textbf{0.18}	&\textbf{0.39}&\textbf{0.78}\\
	\bottomrule
	\end{tabular}
	\end{small}
	\end{center}
	\vskip -0.1in
\end{table}
    \begin{figure}[H]
      \includegraphics[width=\textwidth]{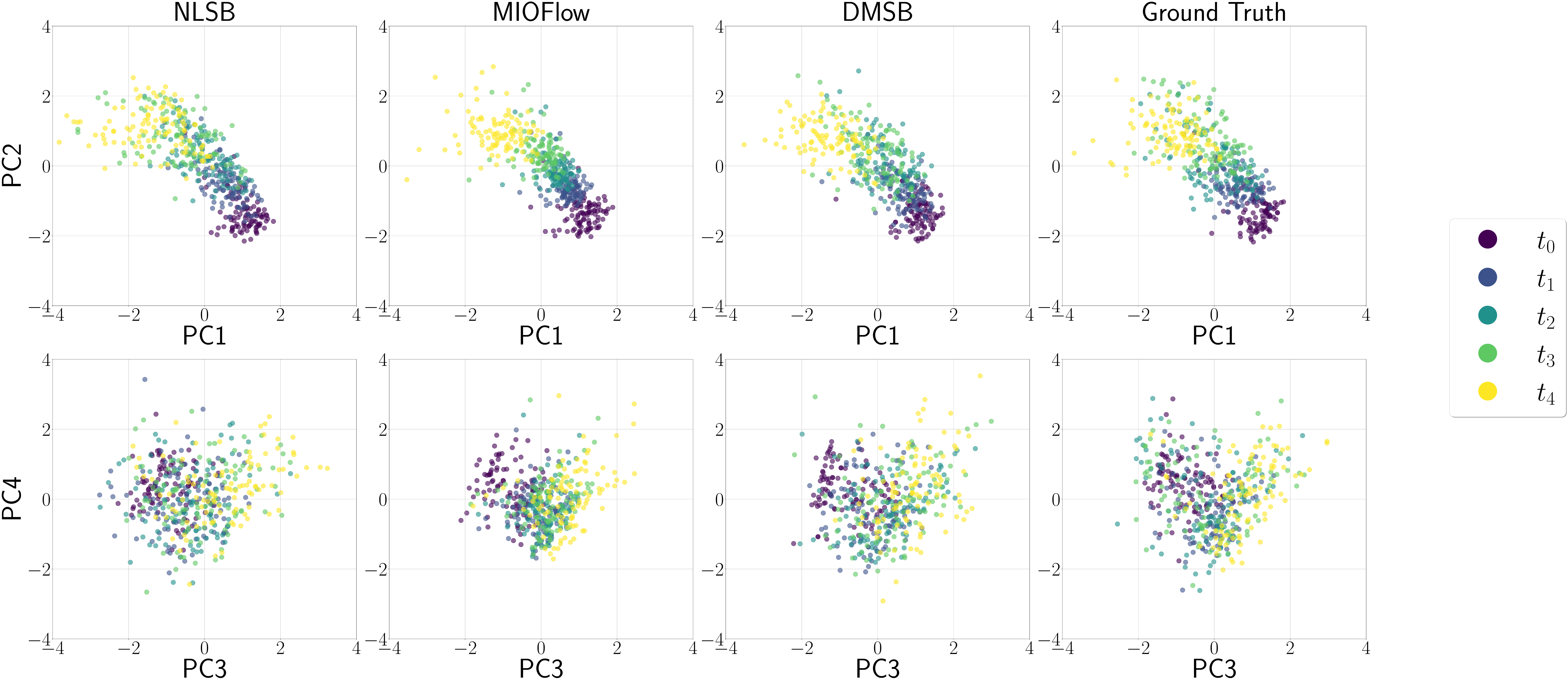}
      \caption{Comparison of population-level dynamics on 5-dimensional PCA space at the moment of observation for scRNA-seq data using MIOFlow, NLSB, and DMSB. We display the plot of the first 4 principle components (PC). All method performs well under this experiment setup.}
      \label{fig:5-dim RNA}
   \end{figure}

    \begin{figure}[H]
      \includegraphics[width=\textwidth]{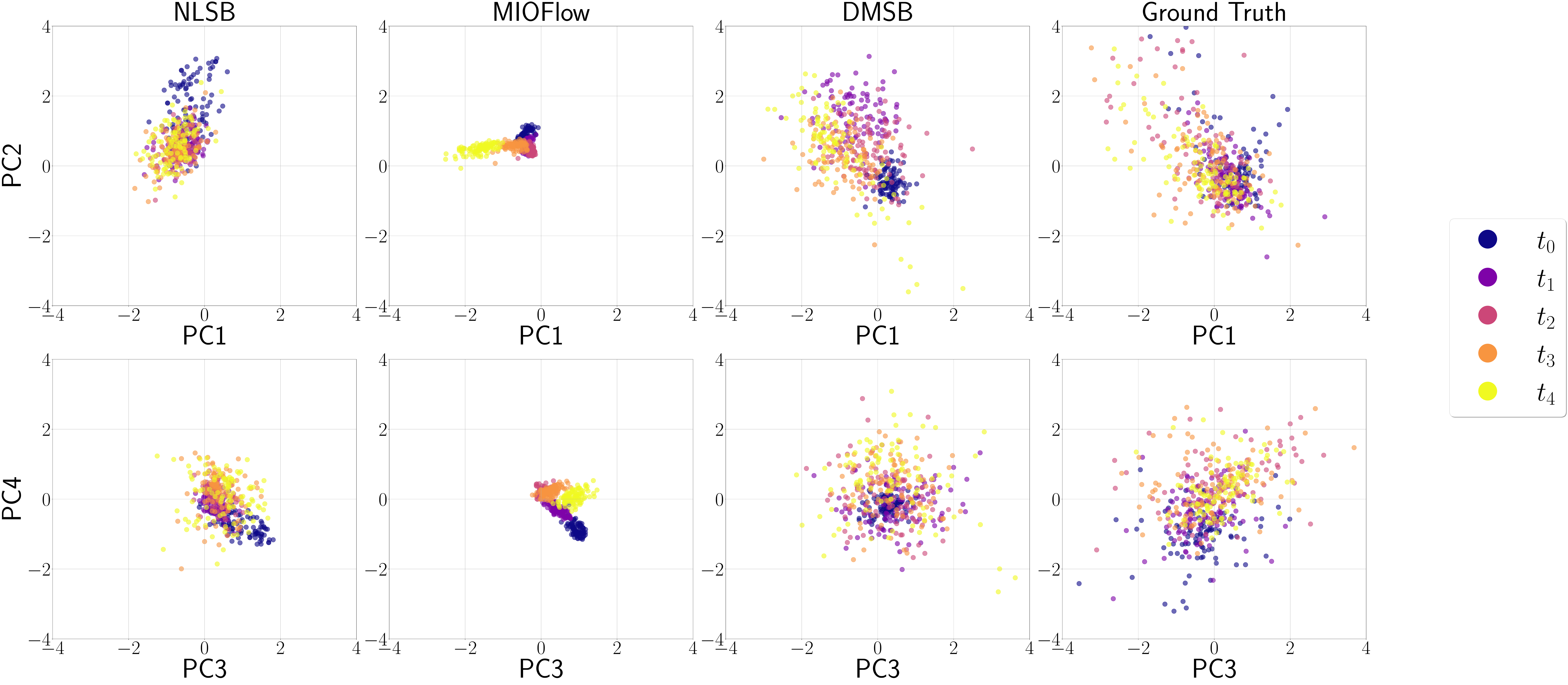}
      \caption{Comparison of estimated velocity on 5-dimensional PCA space at the moment of observation for scRNA-seq data using MIOFlow, NLSB, and DMSB. We display the plot of the first 4 principle components (PC). For the results of NLSB, see special clarification of NLSB in Appendix.\ref{sec:special-clarification-NLSB}}
      \label{fig:5-dim RNA velocity}
   \end{figure}
   
    \begin{figure}[H]
      \includegraphics[width=\textwidth]{fig/RNA_dim_100_velocity.pdf}
      \caption{Comparison of estimated velocity on 100-dimensional PCA space at the moment of observation for scRNA-seq data using MIOFlow, NLSB, and DMSB. We display the plot of the first 6 principle components (PC). For the results of NLSB, see special clarification of NLSB in Appendix.\ref{sec:special-clarification-NLSB}}
      \label{fig:100-dim RNA velocity}
   \end{figure}

       \begin{figure}[H]
      \includegraphics[width=\textwidth]{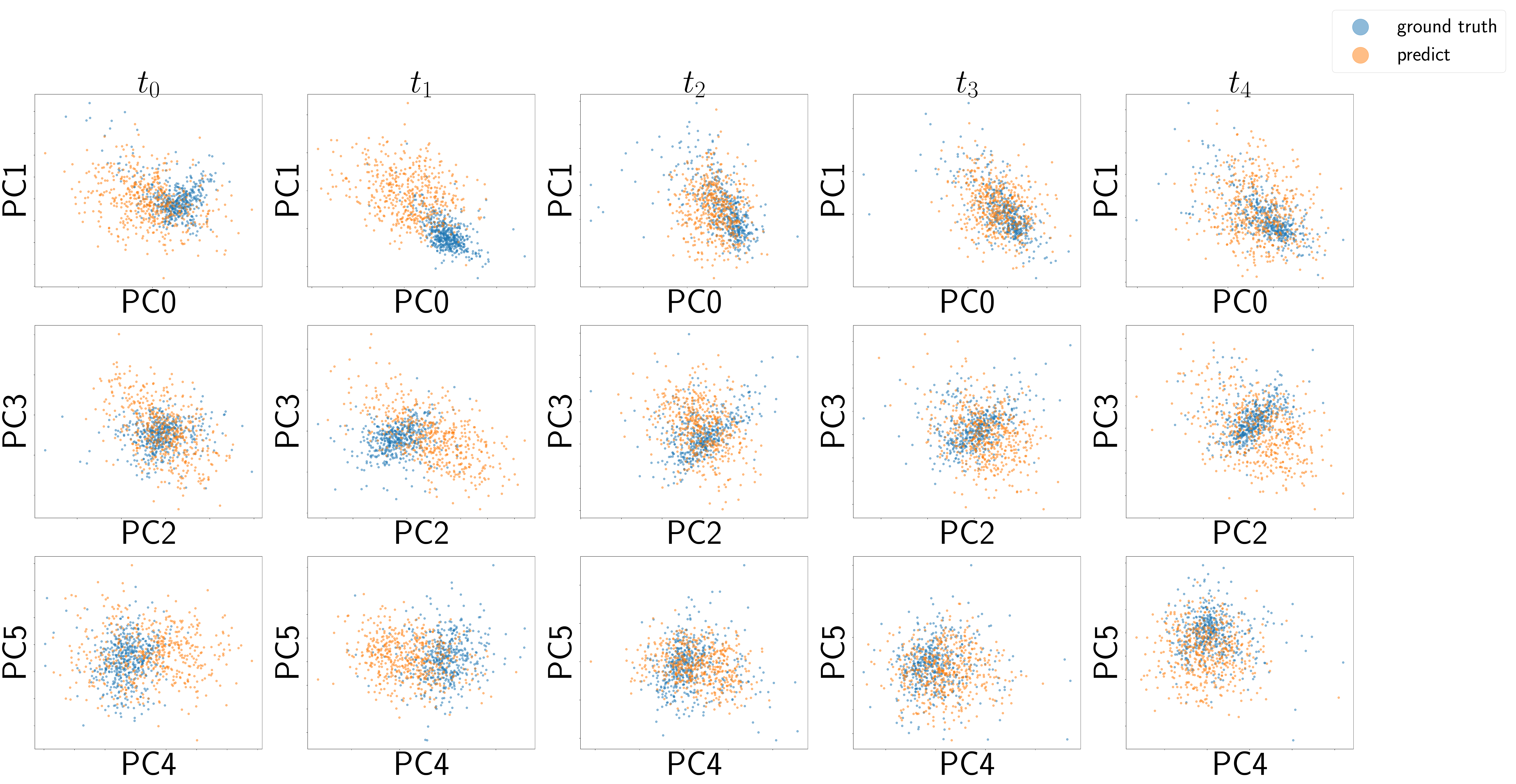}
      \caption{Comparison of estimated velocity on 100-dimensional PCA space at the moment of observation for scRNA-seq data using DMSB with ground truth. We display the plot of the first 6 principle components (PC). }
      \label{fig:100-dim RNA DMSB velocity}
   \end{figure}

\section{Complexity}
Here we provide the complexity of our algorithm.
\begin{table}[H]
	\vskip -0.2in
        \caption{Time complexity w.r.t Dimensionality (\# Marginals=5)}
	\begin{center}
	\begin{small}
	\begin{tabular}{cr|cr|cr|cr}
	\toprule
	\# dimensions &5 &10 & 50&100\\
  \midrule
  Train& 24min	&25min &33min &44min\\
  Sampling& 1sec	&1.6sec &2.0 sec&2.02sec\\
	\bottomrule
	\end{tabular}
	\end{small}
	\end{center}
	\vskip -0.1in
\end{table}

\begin{table}[H]
	\vskip -0.2in
        \caption{Time complexity w.r.t number of marginals (Dim=100)}
	\begin{center}
	\begin{small}
	\begin{tabular}{cr|cr|cr|cr}
	\toprule
	\# Marginal&2 &3 & 4&5\\
  \midrule
  Train& 32min	&25min &33min &44min\\
  Sampling& 2.02sec	&1.6sec &2.0 sec&2.02sec\\
	\bottomrule
	\end{tabular}
	\end{small}
	\end{center}
	\vskip -0.1in
\end{table}

   % \newpage
   % Rebuttal Additional Figures
   %  \begin{figure}[H]
   %    \includegraphics[width=\textwidth]{DMSB_NeurIPS2023/}
   %    \caption{We tuned the color as Reviewer EhNn suggested for better visualization.}
   %    \label{fig:3-marg-diagram}
   % \end{figure}

   % \begin{figure}[H]
   %    \includegraphics[width=\textwidth]{DMSB_NeurIPS2023/}
   %    \caption{As per the recommendations outlined in the review A2vV, a comprehensive comparison was conducted. The results indicate that the algorithm remains functional, but its convergence speed is notably slower than that of our proposed approach. Moreover, the achieved performance falls short of the superiority demonstrated by our proposed solution.}
   %    \label{fig:3-marg-diagram}
   % \end{figure}
%%%%%%%%%%%%%%%%%%%%%%%%%%%%%%%%%%%%%%%%%%%%%%%%%%%%%%%%%%%%

\end{document}